\documentclass{article}
\usepackage[margin=3cm]{geometry}
\usepackage{amsthm}
\usepackage{amsmath}
\usepackage{bm}
\usepackage{algorithm}
\usepackage{algorithmic}
\usepackage[utf8]{inputenc} 
\usepackage[T1]{fontenc}    
\usepackage{hyperref}       
\usepackage{url}            
\usepackage{booktabs}       
\usepackage{amsfonts}       
\usepackage{nicefrac}       
\usepackage{microtype}      

\newcommand{\Var}{\mathbf{Var}}

\def\b{{\bf b}}

\def\E{{\bf E}}
\def\e{{\bf e}}

\def\r{{\bf r}}

\def\u{{\bf u}}
\def\v{{\bf v}}

\def\x{{\bf x}}
\def\y{{\bf y}}
\def\z{{\bf z}}

\def\u{{\bf u}}

\def\v{{\bf v}}

\def\0{{\bf 0}}
\def\1{{\bf 1}}

\def\PB{{\mathbb P}}
\def\QB{{\mathbb Q}}

\def\eps{\epsilon}

\def\tr{\mathrm{tr}}
\def\polylog{\mathrm{ polylog }}

\def\argmin{\mathop{\rm argmin}}

\newtheorem{lemma}{Lemma}
\newtheorem{definition}{Definition}
\newtheorem{theorem}{Theorem}

\newtheorem{proposition}{Proposition}
\newtheorem{cor}{Corollary}
\newtheorem{fact}{Fact}

\usepackage{graphicx}

\title{Sublinear Optimal Policy Value Estimation in Contextual Bandits}
\author{Weihao Kong\\ University of Washington\\ kweihao@gmail.com \and Gregory Valiant\\ Stanford University\\ gvaliant@cs.stanford.edu \and Emma Brunskill\\ Stanford University\\ ebrun@cs.stanford.edu}
\begin{document}

\maketitle

\begin{abstract}
We study the problem of estimating the expected reward of the optimal policy in the stochastic disjoint linear bandit setting. We prove that for certain settings it is possible to obtain an accurate estimate of the optimal policy value even with a number of samples that is sublinear in the number that would be required to \emph{find} a policy that realizes a value close to this optima. 
We establish nearly matching information theoretic lower bounds, showing that our algorithm achieves near optimal estimation error. Finally, we demonstrate the effectiveness of our algorithm on joke recommendation and cancer inhibition dosage selection problems using real datasets. 
\end{abstract}
 
\section{Introduction}

We consider how to efficiently estimate
the best possible performance of the optimal representable
decision policy in a disjoint linear contextual multi-armed bandit setting.
Critically, we are interested in when it is possible
to estimate this best possible performance using a sublinear number of samples, whereas a linear number of samples would typically be required to provide any such policy that can realize optimal performance. 

Contextual multi-armed bandits (see e.g.~\cite{chu2011contextual,li2010contextual,agarwal2014taming}) are a well studied setting
that is having increasing influence and potential impact
in a wide range of applications, including customer
recommendations~\cite{li2010contextual,zhou2016latent}, education~\cite{lan2016contextual} and health~\cite{greenewald2017action}. 
In contrast to simulated domains like games and robotics simulators, in many contextual bandit applications the best potential performance of the 
algorithm is unknown in advance. Such situations will often involve a human-in-the-loop approach to optimizing system performance, where a human expert specifies a set of features describing the potential contexts and a set of possible interventions/arms, and then runs a contextual bandit algorithm to try to identify a high performing decision policy for what intervention to automatically provide in which context. A key challenge facing the human expert is assessing if the current set of context features and set of interventions/arms is yielding sufficient performance. This can be challenging, because without prior knowledge about what optimal performance might be possible, the human may need to run the contextual bandit algorithm until it returns an optimal policy, which may involve wasted time and effort if the best policy representable has mediocre performance. 
%
%
  While there has been some limited algorithmic work on such human-in-the-loop settings for reinforcement learning~\cite{mandel2017add,keramati2019value}  to our knowledge no formal analysis exists of how to efficiently estimate the average reward of the optimal policy representable with the current set of context features and arms.




The majority of prior work on multi-armed bandits has focused on online
algorithms that minimize cumulative or per-step regret (see e.g.\cite{Auer02,agarwal2014taming}).
In simple multi-armed bandit settings (with no context)
there has also been work on maximizing the probability of best arm identification given a fixed 
budget\cite{bubeck2009pure,ABM10,gabillon2012best,karnin2013almost}  or minimizing the number of samples needed to
identify the best arm with high confidence \cite{EMM06,maron1994hoeffding,mnih2008empirical,jamieson2014lil}. Note that in the simple multi-arm bandit setting, sample complexity bounds for $\epsilon$-best arm identification will be equivalent to the bounds achievable for estimating the expected reward of the optimal policy as there is no sharing of rewards or information across arms. 

In the case of contextual multi-armed bandits, there has been some limited work on  single best 
arm identification when the arms are described by a high dimensional feature vector~\cite{hoffman2014correlation,soare2014best,xu2018fully}. However such work does not immediately include input context features (such as from a customer or patient), and would need to be extended to handle best policy identification over (as we consider here) a linear class of policies. 
 A separate literature seeks 
to identify a good policy for future use given access to batch historical data in both bandit and reinforcement learning settings~\cite{thomas2015high,athey2017efficient,gelada2019off,liu2019off}.
In contrast to such work, we consider the setting
where the algorithm may actively gather data, and
the objective is to accurately estimate the performance
of the optimal policy in the set, \emph{without returning a policy that achieves such performance}.
 
In particular, in this work we consider disjoint linear contextual bandits~\cite{li2010contextual} 
(one parameter for each of a finite set of arms, such as a set
of treatments) with
a high dimensional, $d$, input context (such as a set of features
describing the patient). We are interested in providing an
accurate estimate of the expected performance of the best
realizable decision policy. Here the decision policy class is
implicitly defined by the input context feature space
and finite set of arms. Following prior work on disjoint
linear contextual bandits (see e.g.~\cite{li2010contextual}) we assume that the reward for
each arm can be expressed as a linear combination of the
input features and an arm-specific weight vector.

Quite surprisingly, we  present an algorithm that can estimate
the potential expected reward of the best policy
with a number of samples (pulls of the arms) that is
sublinear in the input context dimension $d$. This is
unintuitive because this is less than what is needed to
estimate \textit{any} fit of the $d$-dimensional
arm weight vector, which would require at least $d$ samples.
Our approach builds on recent work~\cite{kong2018estimating} that shows a
related result in the context of regression, showing that
the best accuracy of a regression algorithm can, in
many situations, be estimated with sublinear sample size. 
A critical insight in that paper, which we leverage and
build upon in our work, is the construction of a sequence of unbiased estimators for geometric properties of the data that can be used to estimate the best accuracy, without attempting to find the model achieving that accuracy. However, multiple additional technical subtleties arise when we move from the prediction setting to the control setting because we need to take the interaction between different arms into account while there is effective only one ``arm'' in the prediction setting. Even assuming that we have learned the interaction between the arms, it is not immediately clear how does such knowledge helps determine the potential expected reward of the best policy. We leverages a quantitative version of Sudakov-Fernique inequality to answer the question. While in the classical (non-disjoint) stochastic linear bandit problem, it is crucial to use the information we learned from one arm to infer information for the other arms, this does not hold in the non-disjoint setting. Nevertheless, we utilize the contexts across all the arms to reduce the estimation error, which yields a near optimal sample complexity dependency on the number of arms.

Our key contribution is an algorithm for accurately
estimating the expected performance of the optimal
policy in a disjoint contextual linear bandit setting
with an amount of samples that is sublinear in the
input context dimension. We provide theoretical bounds 
when the input context distributions are  drawn from Gaussians with zero mean and known or unknown covariances. We then examine the performance empirically, first in a synthetic setting. We then evaluate our method both in identifying the optimal reward for a joke recommendation decision policy, based on the Jester dataset~\cite{goldberg2001eigentaste}, and on a new task we introduce of predicting the performance of the best linear threshold policy for selecting the dosage level to optimize cancer cell growth inhibition in the NCI-60 Cancer
Growth Inhibition dataset. Encouragingly, our results suggest that our algorithm quickly obtains an accurate estimate of the optimal linear policy.



\section{Problem Setting}

A contextual multi-armed bandit (CMAB) can be described
by a set of contexts $\mathcal{X} \in R^d$, a set of $K$ arms $\mathcal{K}$ and
a reward function. We consider  the linear disjoint CMAB setting~\cite{li2010contextual},
where there are a finite set of arms, and the reward $y$ from
pulling an arm $a$ in a context $\x_j$ is
\begin{equation}
  y_{a,j}=\beta_a^T\x_j+b_a+\eta_{a,j}.
\end{equation}
For each arm $a$, $\beta_a$ is an unknown $d$-dimensional real vector with bounded $\ell_2$ norm and $b_a$ is a real number. 
$\E[\eta]=0, \E[\eta^2]\le \sigma^2$ where $\sigma$ is a constant. 

For simplicity, we focus primarily on the passive setting where 
for each arm $a$, we observe $N$ iid samples $\x_{a,1},\x_{a,2},\ldots,\x_{a,N}$ drawn from $N(0,\Sigma)$, and each sample $\x_{a,j}$ is associated with a reward. Under this setting, we denote $\sigma_a^2$ as the variance of $y_{a,j}$, which is smaller than $\beta_a^\top\Sigma\beta_a+\sigma^2$, and it is assumed that $\sigma_a$ are all bounded by a constant. We define the total number of samples $T=K\cdot N$ to draw a connection to the adaptive setting where the algorithm can adaptive choose the action to play on each context. Interestingly, in the worst case our approach of uniformly gathering samples across all actions is optimal up to a $\log^{3/2} (dK)$ factor (see Theorem~\ref{thm:lb-adaptive}).

Given a total of $T = K\cdot N$ samples $(\x_{a,j},y_j)$, our goal is to predict the expected reward of the optimal policy realizable with the input
definition of context features and finite set of actions, which is $OPT:= \E_{\x}[\max_a(\beta_a^T\x+b_a)]$. 

\section{Summary of Results}
 Our first result applies to the setting where each context is drawn from a $d$-dimensional  Gaussian distribution $N(0,\Sigma)$, with a \emph{known} covariance matrix $\Sigma$, and the reward for the $a$th arm on context $\x$ equals $\beta_a^T\x+b_a+\eta_a$ where $\E[\eta_a]=0, \E[\eta_a^2]$ is bounded by a constant.\footnote{The setting where the covariance, $\Sigma$, is known is equivalent to the setting where the covariance is assumed to be the identity, as the data can be re-projected so as to have identity covariance.  While the assumption that the covariance is known may seem stringent, it applies to the many settings where there is a large amount of \emph{unlabeled} data.  For example, in many medical or consumer data settings, an accurate estimate of the covariance of $\x$ can be obtained from large existing databases.}  Given $N=\Theta(\eps^{-2}\sqrt{d}\log K \log (K/\delta))$ samples for each arm, there is an efficient algorithm that with probability $1-\delta$ estimates the optimal expected reward with additive error $\eps$.  
\begin{cor}[Main result, known covariance setting]\label{cor:isotropic}
In the known covariance setting, for any $\eps\ge \frac{\sqrt{\log K}}{d^{1/4}}$, with probability $1-\delta$, Algorithm~\ref{alg:main} estimates the optimal reward $OPT$ with additive error $\eps$ using a total number of samples 
$$
T = \Theta(\frac{\sqrt{d}K\log K}{\eps^2} \log (K/\delta)).
$$
\end{cor}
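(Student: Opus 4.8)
The plan rests on the observation that, since each context is Gaussian, $OPT$ is the expected maximum of a Gaussian process whose law is pinned down by two low-complexity objects. Re-projecting so that $\Sigma=\I$ (legitimate because the covariance is known), the vector $(\beta_1^\top\x,\dots,\beta_K^\top\x)$ is distributed as $N(0,M)$ with kernel matrix $M_{ab}=\beta_a^\top\beta_b$, so
\[
OPT \;=\; F(M,\b)\;:=\;\E_{\u\sim N(0,M)}\Big[\max_{a}\big(u_a+b_a\big)\Big], \qquad \b=(b_1,\dots,b_K).
\]
The key point is that $F$ depends on the unknown parameters \emph{only} through $M$ and $\b$, not through the $d$-dimensional vectors $\beta_a$, and that, given $M$ and $\b$, $F(M,\b)$ can be evaluated to any precision by Monte Carlo using no bandit samples. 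So Algorithm~\ref{alg:main} only has to (i) produce estimates $\hat M,\hat\b$ from the $T=KN$ samples, (ii) invoke a stability property of $F$ so that $\widehat{OPT}:=F(\hat M,\hat\b)$ inherits an $O(\eps)$ error, and (iii) take $N$ large enough for (ii).

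\textbf{Stability of $F$.} The main lemma is a quantitative, two-sided Sudakov--Fernique estimate: writing $D_{ab}:=M_{aa}-2M_{ab}+M_{bb}$ for the squared canonical metric,
\[
\big|F(M,\b)-F(M',\b')\big|\;\le\;\|\b-\b'\|_\infty \;+\; C\sqrt{\log K}\;\sqrt{\max_{a,b}\big|D_{ab}-D'_{ab}\big|}.
\]
The intercept term is immediate from $\max_a(u_a+b_a)\le\max_a(u_a+b'_a)+\|\b-\b'\|_\infty$. For the kernel term, set $\Delta:=\max_{a,b}|D_{ab}-D'_{ab}|$, let $h_1,\dots,h_K$ be i.i.d.\ $N(0,1)$, and compare the process $u_a+b'_a$ ($\u\sim N(0,M)$) against $u'_a+b'_a+\sqrt{\Delta/2}\,h_a$ ($\u'\sim N(0,M')$, independent $h$): the latter has increment variances $D'_{ab}+\Delta\ge D_{ab}$ and the same deterministic mean, so the comparison inequality gives $\E\max_a(u_a+b'_a)\le\E\max_a(u'_a+b'_a+\sqrt{\Delta/2}\,h_a)\le F(M',\b')+\sqrt{\Delta/2}\,\E\max_a h_a=F(M',\b')+O(\sqrt{\Delta\log K})$, and the reverse direction is symmetric. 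Because the $\sigma_a$ are bounded the canonical metric is $O(1)$, hence $\max_{a,b}|D_{ab}-D'_{ab}|\le 4\|M-M'\|_{\max}$ and it suffices to control $M$ entrywise. Thus $\|\hat\b-\b\|_\infty\le\eps/2$ together with $\|\hat M-M\|_{\max}\le\eps^2/(C'\log K)=:\eps'$ yields $|\widehat{OPT}-OPT|\le\eps$, up to a negligible Monte-Carlo error. One subtlety: $\hat M$ need not be positive semidefinite; before evaluating $F$ we project it onto the PSD cone \emph{in the seminorm} $\rho(M,M')=\max_{a,b}|D_{ab}-D'_{ab}|$ that drives the bound (a small SDP). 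Since the true $M$ is feasible, this projection at most doubles $\rho(\hat M,M)$, so the guarantee is unaffected.

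\textbf{Estimators and variances.} Take $\hat b_a=\tfrac1N\sum_j y_{a,j}$, with variance $O(1/N)$ and sub-Gaussian tails. For $M$, use the degree-two estimators in the spirit of~\cite{kong2018estimating} built from centered rewards: $\hat M_{ab}=\tfrac1{N^2}\sum_{i,j}(y_{a,i}-\hat b_a)(y_{b,j}-\hat b_b)\,\x_{a,i}^\top\x_{b,j}$ for $a\neq b$, and the bias-corrected $U$-statistic $\hat M_{aa}=\tfrac1{N(N-1)}\sum_{i\neq j}(y_{a,i}-\hat b_a)(y_{a,j}-\hat b_a)\,\x_{a,i}^\top\x_{a,j}$ on the diagonal. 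Up to a lower-order correction for the plug-in intercepts (removable by sample-splitting or a jackknife), these are unbiased for $\beta_a^\top\beta_b$: cross-arm samples are independent, $\E[\x_{a,i}^\top\x_{a,j}]=0$ for $i\neq j$, and the noise is mean zero. A standard $U$-statistic variance decomposition --- using $\E[(\x_i^\top\x_j)^2]=\Theta(d)$ for the leading ``same-pair'' term and bounded signal/noise moments for the ``shared-index'' term --- gives $\Var(\hat M_{ab})=O\!\big(d/N^2+1/N\big)$, and each $\hat M_{ab}$ is a bounded-degree polynomial of Gaussian/sub-Gaussian variables, hence concentrates (Hanson--Wright / Gaussian hypercontractivity).

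\textbf{Sample complexity and the role of $\eps\ge\sqrt{\log K}/d^{1/4}$.} We need $\|\hat M-M\|_{\max}\le\eps'=\Theta(\eps^2/\log K)$ with probability $1-\delta/2$; by the concentration above and a union bound over the $\le K^2$ entries this holds once $\sqrt{d/N^2+1/N}\lesssim\eps'$ up to a $\mathrm{polylog}(K/\delta)$ factor. The two conditions $d/N^2\le\eps'^2$ and $1/N\le\eps'^2$ read $N\ge\sqrt d/\eps'$ and $N\ge 1/\eps'^2$. The hypothesis $\eps\ge\sqrt{\log K}/d^{1/4}$ is exactly $\eps^2/\log K\ge 1/\sqrt d$, i.e.\ $\eps'\gtrsim 1/\sqrt d$, i.e.\ $\sqrt d/\eps'\ge 1/\eps'^2$: in this regime the first condition dominates, the $1/N$ cross-term is automatically negligible, and $N=\Theta\!\big(\sqrt d\,\log K\,\log(K/\delta)/\eps^2\big)$ suffices (this also covers the easy requirement $\|\hat\b-\b\|_\infty\le\eps/2$). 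Hence $T=KN=\Theta\!\big(\sqrt d\,K\log K\,\log(K/\delta)/\eps^2\big)$, and combining the two high-probability events gives $|\widehat{OPT}-OPT|\le\eps$ with probability $1-\delta$ after adjusting constants --- the claim. \emph{The crux} is the quantitative Sudakov--Fernique step: classical Sudakov--Fernique is one-sided, and upgrading it to a stable two-sided bound with the correct $\sqrt{\log K}$ modulus --- via the noise-padding calibrated to the increment-variance mismatch --- is what makes the reduction to estimating $M$ and $\b$ viable; the remaining work is the delicate variance/concentration bookkeeping for the $\Theta(K^2)$ correlated polynomial estimators.
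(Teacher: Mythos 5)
Your overall architecture is the same as the paper's: reduce $OPT$ to the expected maximum of a $K$-dimensional Gaussian $N(\b,H)$ with $H_{ab}=\beta_a^\top\beta_b$, estimate $\b$ by arm-wise reward averages and $H$ entrywise by the Kong--Valiant-style quadratic estimators with variance $O\bigl((d+n)/n^2\bigr)$, project onto the PSD cone (the feasibility-of-the-truth argument doubling the entrywise error is exactly the paper's step), convert entrywise error to error in the expected maximum with modulus $\sqrt{\log K}\cdot\sqrt{\text{entrywise error}}$, and finally set $n=\Theta(\sqrt d\,\log K/\eps^2)$ using $\eps\ge\sqrt{\log K}/d^{1/4}$ to kill the $1/n$ cross-term. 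One genuine (and nice) difference is the stability lemma: the paper invokes Chatterjee's quantitative Sudakov--Fernique theorem (its Proposition~\ref{prop:bH2OPT}), whereas you re-derive a two-sided bound elementarily by padding the comparison process with independent $N(0,\Delta/2)$ noise and applying the classical equal-means Sudakov--Fernique inequality plus $\E\max_a h_a=O(\sqrt{\log K})$. That argument is correct and self-contained, and it recovers the same $2\sqrt{\gamma\log K}$-type modulus (your $\max_{a,b}|D_{ab}-D'_{ab}|\le 4\|H-H'\|_{\max}$ reduction is fine), so it is a legitimate substitute for the citation; the intercept-perturbation term is handled the same way as the paper's Lemma on mean perturbation.

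There is, however, a gap in your high-probability step. The model only assumes $\E[\eta]=0$ and $\E[\eta^2]\le\sigma^2$ --- the noise is \emph{not} Gaussian or sub-Gaussian --- so neither ``sub-Gaussian tails'' for $\hat b_a$ nor Hanson--Wright/hypercontractivity for the degree-four statistics $\hat H_{ab}$ is available; all you get from the moment assumptions is the variance bound, i.e.\ Chebyshev at constant confidence. Even if the noise were Gaussian, degree-four Gaussian chaos has tails of the form $\exp(-c\,t^{1/2})$ at the relevant scale, so a union bound over the $K^2$ entries at level $\delta/K^2$ would cost an extra $\log^2(K/\delta)$-type factor rather than the single $\log(K/\delta)$ in the claimed $T$. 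The paper closes exactly this hole with the median-of-means device: repeat the whole estimation $\Theta(\log(K^2/\delta))$ times, take entrywise medians of $\hat H$ and $\hat\b$, and boost the per-entry $2/3$ Chebyshev guarantee to $1-\delta/(K^2+K)$ before the union bound; this works under bounded second moments alone and yields precisely the $\log(K/\delta)$ factor in $T=\Theta\bigl(\sqrt d\,K\log K\,\log(K/\delta)/\eps^2\bigr)$. Replacing your concentration appeal with that boosting step (and keeping your sample-splitting remark for the plug-in intercepts, which the paper treats more informally) makes the argument complete.
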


We prove a near matching lower bound, showing that in this passive setting, the estimation error can not be improved by more than a $\log K$ factor. The proof of Theorem~\ref{thm:lb} can be found in the supplementary material. 

\begin{theorem}[Lower bound for passive algorithms, known covariance setting]\label{thm:lb}
There exists a constant $C$ such that for any $\eps>0$ given 
$$
T = C\frac{\sqrt{d}K\log K}{\eps^2}
$$
   samples (equivalently $N = C\frac{\sqrt{d}\log K}{\eps^2}$ samples for each arm), no algorithm can estimate the optimal reward with expected additive error less than $\eps$ with probability greater than $2/3$.
\end{theorem}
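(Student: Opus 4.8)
The plan is a two-family argument in the style of Le Cam. Since in the known-covariance setting the contexts can be re-projected to have identity covariance, assume $\Sigma=\I$. We will construct two priors $\PB_0,\PB_1$ over instances such that (i) $OPT$ differs by at least $2\eps$ between them with probability one, while (ii) the laws of the $T$ observed $(\text{context},\text{reward})$ pairs are within total variation $1/3$. If some estimator $\widehat{OPT}$ had additive error below $\eps$ with probability above $2/3$ under both priors, the event $\{\widehat{OPT}<\eps\}$ would separate the two data laws with advantage above $1/3$, contradicting (ii); this gives the theorem for a suitable constant $C$.

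For the construction, under $\PB_0$ every arm is inert: $\beta_a=\0$, $b_a=0$, $\eta_{a,j}\sim N(0,\sigma^2)$, so $OPT=0$. Under $\PB_1$ we draw a uniformly random set $S\subseteq[K]$ of $m:=\lceil\sqrt K\rceil$ ``active'' arms; for $a\in S$ we set $b_a=0$ and $\beta_a=\rho\, u_a$ with the $u_a$ independent and uniform on the unit sphere of $\RB^d$, and we lower the reward-noise variance of the active arms to $\sigma^2-\rho^2$ so that $\Var(y_{a,j})=\sigma^2$ under both priors; the inactive arms stay inert. Conditioned on a context $\x$, the active scores $\{\beta_a^\top\x\}_{a\in S}$ are, for $d$ large, approximately $m$ independent $N(0,\rho^2\|\x\|^2/d)$ variables (uncorrelated since the $u_a$ are independent), so by concentration of the maximum of nearly independent Gaussians, $OPT=\EB_\x\big[(\max_{a\in S}\beta_a^\top\x)^{+}\big]=\rho\sqrt{2\log m}\,(1+o(1))=\rho\sqrt{\log K}\,(1+o(1))$ under $\PB_1$. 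Taking $\rho=\Theta(\eps/\sqrt{\log K})$ (hence $\rho^2=\Theta(\eps^2/\log K)$, which is below $\sigma^2$ in the meaningful regime $\eps=O(\sqrt{\log K})$) then makes the two $OPT$ values differ by $2\eps$.

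It remains to control the distance between the two data laws, for which the bound $\mathrm{TV}\le\tfrac12\sqrt{\chi^2}$ lets us work with $\chi^2$. The $K$ arms are observed independently and both priors act coordinatewise on arms, so if $\ell_a$ denotes the per-arm likelihood ratio between ``$N$ samples from an active arm with a random-direction signal of norm $\rho$'' and ``$N$ samples from an inert arm,'' then $1+\chi^2(\PB_1\|\PB_0)=\EB_{S,S'}\big[(1+\chi_0)^{|S\cap S'|}\big]$, where $\chi_0:=\EB_{\PB_0}[\ell_a^2]-1$ is the single-arm $\chi^2$ and $S,S'$ are two independent copies of the active set. For the single-arm piece, the $(d{+}1)$-dimensional joint law of $(\x,y)$ is Gaussian with identity covariance on the context block, variance $\sigma^2$ on the reward, and cross-covariance $\beta$; a direct Gaussian integral (in which the $\rho^2$ terms in the relevant determinants cancel because of the variance matching) gives the single-sample likelihood-ratio integral $\int p_{\rho u}p_{\rho u'}/p_0=(1-\rho^2\langle u,u'\rangle/\sigma^2)^{-1}$, so that $1+\chi_0=\EB_{u,u'}\big[(1-\rho^2\langle u,u'\rangle/\sigma^2)^{-N}\big]$ with $u,u'$ independent uniform on the unit sphere. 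Since $\langle u,u'\rangle$ is approximately $N(0,1/d)$ and the dominant mass of the integrand sits at $\langle u,u'\rangle$ of order $N\rho^2/(d\sigma^2)\ll1$ (so the logarithm can be expanded), Laplace's method evaluates this to $\exp\!\big(\tfrac{N^2\rho^4}{2d\sigma^4}(1+o(1))\big)$ — this is the contextual-bandit analogue of the single-regressor lower bound of \cite{kong2018estimating}. Finally $|S\cap S'|$ is, up to negligible corrections, $\mathrm{Poisson}(m^2/K)=\mathrm{Poisson}(1)$, whence $1+\chi^2(\PB_1\|\PB_0)\approx\exp(\chi_0)$. Combining, the data laws have $\chi^2=O(1)$ — hence $\mathrm{TV}<1/3$ after adjusting constants — as soon as $N^2\rho^4/(d\sigma^4)=O(1)$, i.e.\ $N=O(\sigma^2\sqrt d/\rho^2)=O(\sqrt d\,\log K/\eps^2)$. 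Thus whenever $T=KN\le C\sqrt d\,K\log K/\eps^2$ for a small enough constant $C$, the two priors are indistinguishable, proving the claim.

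The step I expect to be the main obstacle is the single-arm $\chi^2$ estimate: one must carry out the Gaussian likelihood-ratio integral cleanly and keep track of the variance correction so that an apparent $\exp(\Theta(N\rho^4/(d\sigma^4)))$ factor is recognized as lower order, and then rigorously justify the saddle-point evaluation of $\EB_{u,u'}\big[(1-\rho^2\langle u,u'\rangle/\sigma^2)^{-N}\big]$ — in particular verifying that we are in the regime $N\rho^2\ll d\sigma^2$ and controlling the non-Gaussian behavior of $\langle u,u'\rangle$ near $\pm1$, where the integrand is largest. A secondary point needing care is the choice $m=\lceil\sqrt K\rceil$: $m$ must be large enough that $\max_{a\in S}\beta_a^\top\x$ concentrates at $\rho\sqrt{2\log m}$, so that one genuinely obtains the $\sqrt{\log K}$ amplification of $OPT$, yet small enough that two independent active sets overlap in only $O(1)$ arms — which is exactly what keeps the aggregate $\chi^2$ bounded while $\rho$ is allowed to be as small as $\Theta(\eps/\sqrt{\log K})$.
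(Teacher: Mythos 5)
Your proposal is correct and follows essentially the same route as the paper's proof: the paper also contrasts an all-inert instance with one in which each arm is independently ``activated'' with probability $1/\sqrt{K}$ (variance-matched noise $N(0,1-b^2)$, random unit direction of norm $b\approx\eps/\sqrt{\log K}$), bounds total variation by $\tfrac12\sqrt{\chi^2}$, and controls the per-arm Gaussian overlap integral $\int P_{y,v}P_{y,v'}G(y)/G(\x)$ by invoking Proposition~2 of~\cite{kong2018estimating}, which is exactly your $\EB_{u,u'}[(1-\rho^2\langle u,u'\rangle/\sigma^2)^{-N}]$ computation. The only cosmetic differences are your fixed-size active set (hypergeometric overlap) versus the paper's i.i.d.\ Bernoulli activation, and your explicit concentration argument for the $\rho\sqrt{\log K}$ separation where the paper simply cites the Gaussian-maximum lower bound (Fact~\ref{fact:glb}).
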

Comparing against the adaptive setting where the algorithm can adaptively choose the action to play on each context, we prove a surprising lower bound, showing that the estimation error can not be improved much. Specifically, our passive algorithm is minimax optimal even in the \textit{adaptive setting} up to a $\polylog(dK)$ factor. The proof is deferred to the supplementary material.

\begin{theorem}[Lower bound for fully adaptive algorithms, known covariance setting]\label{thm:lb-adaptive}
There exists a constant $C$ such that no algorithm can estimate the optimal reward with additive error $\eps$ and probability  of success at least $2/3$ using a number of rounds that is less than 
$$T = C\frac{\sqrt{d}K}{\eps^{2}\log^{3/2}(dK)}.$$
\end{theorem}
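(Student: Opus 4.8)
The plan is to prove this information-theoretically: exhibit two families of instances whose observation laws are statistically close for any adaptive algorithm using fewer than $T$ rounds, yet whose $OPT$ values differ by at least $2\eps$, and then invoke Le Cam's two-point method adapted to the adaptive-sampling model. Since the covariance is known we may take $\Sigma=I$, and we set every bias $b_a=0$; an instance is then just $(\beta_1,\dots,\beta_K)$, and because $(\beta_a^\top\x)_a$ is a centered Gaussian vector with covariance matrix $G$, $G_{ab}=\beta_a^\top\beta_b$, we have $OPT=\E[\max_a g_a]$ for $g\sim N(0,G)$ — a functional of the $K\times K$ Gram matrix alone. It thus suffices to build two distributions over Gram matrices $G$ that (i) are realizable by $\beta_a$ of bounded norm in $\RB^d$, (ii) have $OPT$ differing by at least $2\eps$ with constant probability, and (iii) induce statistically close observation laws.

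For the hard configuration, take the arms highly correlated: $\beta_a=c+\xi_a$ with $\|c\|=\Theta(1)$, $\xi_a\perp c$, and $\|\xi_a\|=\tau$ with $\tau\asymp\eps/\sqrt{\log K}$, so that $OPT=\E[\max_a\xi_a^\top\x]\approx\tau\sqrt{2\log K}=\Theta(\eps)$. In this near-degenerate regime a quantitative Sudakov-Fernique comparison (the same tool used in the upper bound) shows $OPT$ to be \emph{square-root sensitive} to the inter-arm Gram entries: an $\alpha$-sized perturbation of the $\beta_a^\top\beta_b$ moves $OPT$ by only $\Theta(\sqrt{\alpha\log K})$, so an $\Omega(\eps)$ change of $OPT$ corresponds to an $\alpha\asymp\eps^2/\log K$ change of each entry. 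The two distributions will differ by perturbing the off-diagonal Gram entries (equivalently, the directions of the $\xi_a$) by $\pm\alpha$ in a pattern spread diffusely over $\Omega(K^2)$ pairs, arranged so that the two $OPT$ values are separated by $\ge 2\eps$ while distinguishing the distributions forces the algorithm to resolve $\beta_a^\top\beta_b$ for a constant fraction of the pairs $(a,b)$.

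The key quantitative estimate is that from $n_a$ pulls of arm $a$ and $n_b$ of arm $b$, the KL divergence between the two observation laws attributable to the cross-term $\beta_a^\top\beta_b$ is $\Theta(n_a n_b\,\alpha^2/d)$: any statistic correlating the two arms' data passes through the context inner products $\x_{a,i}^\top\x_{b,j}$, which are of size $\sqrt d$, so the relevant off-diagonal block of the observation covariance is $\tfrac1d X_a(\E[\beta_a\beta_b^\top])X_b^\top$, whose squared Frobenius norm is $\Theta(n_a n_b/d)$ times the squared cross-covariance; inserting this into the Gaussian expansion $\KL(N(0,\Sigma_0)\,\|\,N(0,\Sigma_0+E))\approx\tfrac14\|\Sigma_0^{-1/2}E\Sigma_0^{-1/2}\|_F^2$ yields the claim, so resolving a pair to precision $\alpha$ needs $n_a n_b=\Omega(d/\alpha^2)$. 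An adaptive algorithm's transcript is the sequence of (chosen arm, observed context-reward) pairs; writing $N_a$ for the random number of pulls of arm $a$ with $\sum_a N_a=T$, the chain rule for KL bounds the transcript divergence by $O\big(\sum_{a<b}\tfrac{\alpha^2}{d}\E[N_a N_b]\big)$. Because success forces $N_a N_b=\Omega(d/\alpha^2)$ simultaneously for a constant fraction of pairs, and at most one $N_a$ can fail to be $\Omega(\sqrt d/\alpha)$, we obtain $T=\sum_a N_a=\Omega(K\sqrt d/\alpha)=\Omega(K\sqrt d\log K/\eps^2)$; the $\log^{3/2}(dK)$ slack in the statement absorbs the $\sqrt{\log K}$ lost in the Sudakov-Fernique translation, the constant-probability conversion in Le Cam's inequality, and the lower-order terms of the Gaussian KL expansion when $\tau,\sigma=\Theta(1)$.

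The hard part is arranging that $OPT$ depends on essentially \emph{every} pairwise Gram entry, so that an adaptive algorithm cannot concentrate its pulls on a few arms: since $\E[\max_a g_a]$ is a Lipschitz functional of $G$ that is robust to perturbing any single coordinate, a naive two-point construction carrying the $OPT$-gap through only a handful of entries would prove merely $T=\Omega(\sqrt d\log K/\eps^2)$ and miss the factor $K$. One must instead diffuse the $\eps$-gap of $OPT$ across $\Omega(K^2)$ coordinates while keeping each perturbation at the genuinely hard scale $\alpha\asymp\eps^2/\log K$, and verify the $\ge 2\eps$ separation and the indistinguishability at the same time; and one must control the transcript divergence under uneven pull allocations so that the union of per-pair constraints really does force $\Omega(\sqrt d/\alpha)$ pulls of almost every arm. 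That coupling of the (smooth) geometry of $\E[\max]$ with the per-pair information bottleneck, inside an adaptive argument, is the technical core.
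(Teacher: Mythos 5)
There is a genuine gap, and it sits exactly where you yourself flag ``the technical core.'' Your argument needs the implication ``success forces $N_aN_b=\Omega(d/\alpha^2)$ simultaneously for a constant fraction of pairs,'' but nothing in a Le Cam two-point argument delivers per-pair resolution: the two-point method only controls the aggregate transcript divergence $\sum_{a<b}\frac{\alpha^2}{d}\E[N_aN_b]$, and since $\sum_{a<b}N_aN_b$ can be as large as $T^2/4$ with pulls concentrated on just two arms, this constraint alone gives only $T=\Omega(\sqrt d/\alpha)=\Omega(\sqrt d\log K/\eps^2)$, without the factor $K$. Worse, for the natural way of realizing your ``diffuse'' construction with a $2\eps$ gap in $OPT$ --- a fixed pattern such as independent vs.\ equicorrelated $\xi_a$'s, i.e.\ a $+\alpha$ shift on every off-diagonal Gram entry --- the implication is simply false: an adaptive algorithm can spend all $T$ rounds on two arms, estimate the single entry $\beta_a^\top\beta_b$ to precision $\alpha$, and distinguish the hypotheses with $T=\Theta(\sqrt d/\alpha)$. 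To rule this out the sign/direction pattern must be randomized, at which point both hypotheses are mixtures and the analysis is no longer a two-point KL computation but a mixture-vs-mixture bound that you do not carry out --- and simultaneously you must re-verify the $\ge 2\eps$ separation of $OPT$ under the randomized pattern, which is in tension with the randomization. Relatedly, your ``key quantitative estimate'' $\KL\asymp n_an_b\alpha^2/d$ is a mixture phenomenon (it comes from cross terms between different pulls after integrating out random directions, as when $\beta\sim N(0,\eps^2I_d/d)$); for two \emph{fixed} parameter vectors the transcript KL scales like $\sum_a\E[N_a]\,\|\beta_a-\beta'_a\|^2$, linearly in the pull counts, so the estimate is not available inside the two-point framework you invoke. (The final counting step, ``at most one $N_a$ can fail to be $\Omega(\sqrt d/\alpha)$,'' is also not right as stated, though a correct covering count --- every resolved pair has an endpoint with $N\ge\sqrt d/\alpha$, so $\Omega(K)$ arms must exceed that level --- would repair that particular step if the earlier ones held.)

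For contrast, the paper gets the factor $K$ by a different mechanism: it perturbs a \emph{single} arm, chosen as $a^*=\argmin_a\E[T_a]$ for the given algorithm, with a random direction $\beta_{a^*}\sim N(0,\eps^2I_d/d)$ against the all-zero null, so the $OPT$ gap is $\Theta(\eps)$ automatically. The obstacle that the mixture KL scales like $\E[T_{a^*}^2]/d$ (which an algorithm concentrating on one arm can make large) is handled not by diffusing the signal over many pairs but by conditioning: on a good event $E$ one has $T_{a^*}\le O(T/K)$ (Markov, since $a^*$ is the least-pulled arm) together with typical contexts and bounded rewards, the conditional KL on $E$ is bounded by roughly $\eps^2\frac{T\log^{3/2}T}{K\sqrt d}+\eps^4\frac{T^2\log^2T}{K^2d}$, and a Pinsker inequality for sub-probability measures converts this into a total-variation bound on the full trajectories. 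That conditioning step is the ingredient your outline is missing, and without it (or a worked-out mixture/direct-sum argument forcing near-uniform allocation) the proposal does not establish the $K\sqrt d$ rate.
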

Our lower bound is novel, and we are not aware of similar results in this setting. It is curious that the standard approach by simply bounding the KL-divergence only yields a sub-optimal $\tilde{O}(\sqrt{dK})$ lower bound, since the divergence contribution of each arm scales with $\E[T_i^2]$ instead of $\E[T_i]$ in the classical (non-contextual) stochastic bandit setting. We apply a special conditioning to get around this issue. 

Our algorithmic techniques apply beyond the isotropic covariance setting, and we prove an analog of Corollary~\ref{cor:isotropic}  in the setting where the contexts $\x$ are drawn from a Gaussian distribution with arbitrary \emph{unknown} covariance. Our general result, given in Corollary~\ref{cor:unknown}, is quite complicated. Here we highlight the special case where the desired accuracy $\eps$ and failure probability $\delta$ is positive constant, and the covariance is well-conditioned:

\begin{cor}[Special case of main result, unknown covariance setting]\label{cor:main-gen}
Assuming that the covariance of the context $\x_i$ satisfies $\sigma_{\min} I_d \preceq \Sigma\preceq \sigma_{\max} I$ and $\sigma_{\max}/\sigma_{\min}$ is a constant, for constant $\eps$, Algorithm 1 takes $\sigma_{\min}, \sigma_{\max}$, and a total number of 
$$
T = O(d^{1-\frac{C}{\log\log K+\log(1/\eps)}}K^{\gamma}+\sqrt{d}K^{1+\gamma})
$$
samples, where $\gamma$ is any positive constant and $C$ is a universal constant. 
\end{cor}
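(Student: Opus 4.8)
The statement is the specialization of the general unknown-covariance guarantee, Corollary~\ref{cor:unknown}, to the regime $\eps=\Theta(1)$, $\delta=\Theta(1)$, and $\kappa:=\sigma_{\max}/\sigma_{\min}=\Theta(1)$, so the plan is to recall the chain of reductions behind that result and then observe how its (more elaborate) sample-complexity bound collapses. The starting point is that, because $\x\sim N(0,\Sigma)$, the score vector $v(\x):=(\beta_1^\top\x+b_1,\dots,\beta_K^\top\x+b_K)$ is a Gaussian vector with mean $\b=(b_1,\dots,b_K)$ and covariance $\M$, where $\M_{a,a'}=\beta_a^\top\Sigma\beta_{a'}$. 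Hence $OPT=\E[\max_a v(\x)_a]=:F(\b,\M)$ is a fixed function of $(\b,\M)$ which can be evaluated to any accuracy by Monte Carlo, and the whole task reduces to producing estimates $\hat\b,\hat\M$ accurate enough for $F$ to be stable around $(\b,\M)$.

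For the stability of $F$ I would use the quantitative Sudakov--Fernique / Gaussian-interpolation bound already advertised in the introduction: since $\max_a(\cdot)$ is $1$-Lipschitz in the $\ell_\infty$ norm of the mean, and since $|d_{\M}(a,a')-d_{\hat\M}(a,a')|\le\sqrt{|d_{\M}(a,a')^2-d_{\hat\M}(a,a')^2|}\le 2\sqrt{\max_{a,a'}|\M_{aa'}-\hat\M_{aa'}|}$ for the canonical metrics $d_{\M}(a,a')^2=\M_{aa}+\M_{a'a'}-2\M_{aa'}$, one obtains $|F(\b,\M)-F(\hat\b,\hat\M)|\le\|\b-\hat\b\|_\infty+C\sqrt{\log K}\,\sqrt{\max_{a,a'}|\M_{aa'}-\hat\M_{aa'}|}$. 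Thus it suffices to estimate each $b_a$ to additive error $O(\eps)$ and each entry of $\M$ to additive error $O(\eps^2/\log K)$, all simultaneously with probability $1-\delta$, and return $\widehat{OPT}=F(\hat\b,\hat\M)$.

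The remaining work is to estimate $\b$ and the entries of $\M$ within budget. (i) $\hat b_a$ is the empirical reward mean on arm $a$; since $\sigma_a=O(1)$, $O(\log(K/\delta))$ samples per arm give error $O(\eps)$. (ii) For an off-diagonal entry $\M_{aa'}$ with $a\ne a'$, the two arms' contexts are independent, so $\hat\u_a:=\tfrac{1}{N}\sum_j y_{a,j}\x_{a,j}$ is an unbiased estimate of $\Sigma\beta_a$ and $\hat\u_a^\top\widehat{\Sigma^{-1}}\hat\u_{a'}$ estimates $\beta_a^\top\Sigma\beta_{a'}$ with variance $\tilde{O}(d/N^2)$ once $\Sigma^{-1}$ is known well enough; here $\kappa=O(1)$ bounds $\|\Sigma^{-1}\|$ and makes a single pooled estimate of $\Sigma$ formed from all $KN$ contexts (which are $N(0,\Sigma)$ regardless of arm) accurate enough. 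This forces $N$ of order $\sqrt d$ per arm, and after a union bound over the $O(K^2)$ pairs it contributes the $\sqrt d\,K^{1+\gamma}$ term. (iii) A diagonal entry $\M_{aa}=\beta_a^\top\Sigma\beta_a$ is the single-arm ``signal-strength'' quantity of~\cite{kong2018estimating}; in the $N\ll d$ unknown-covariance regime one cannot simply substitute an empirical covariance, so I would use their bias-cancelling moment estimator, whose sample complexity for pushing the error down to the target $\eps^2/\log K$ is of order $d^{1-C/(\log\log K+\log(1/\eps))}$, and the confidence-boosting union bound over arms (for this polynomially concentrated estimator) costs a factor $K^{\gamma}$ for any constant $\gamma>0$; this is the first term. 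Adding the three costs and folding all $\eps$-, $\delta$- and $\kappa$-dependent factors into constants leaves $T=O(d^{1-C/(\log\log K+\log(1/\eps))}K^{\gamma}+\sqrt d\,K^{1+\gamma})$.

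The main obstacle is group (iii): estimating $\beta_a^\top\Sigma\beta_a$ from $N\ll d$ samples with $\Sigma$ unknown. The naive ``estimate $\Sigma$, then plug in'' route needs $\Omega(d)$ samples and kills the sub-$d$ rate, so one has to use the moment-based bias-cancelling estimators; that in turn requires decoupling the unlabeled context data used to build their moment statistics from the labeled regression data (a sample-splitting step that must not inflate the count by more than a constant), and carefully carrying the $\kappa=O(1)$ assumption through the analysis. A secondary difficulty is making sure the Sudakov--Fernique loss is only $\sqrt{\log K}$ rather than a larger power of $\log K$, so that the cheaper $O(\eps^2/\log K)$ entrywise accuracy on $\M$ genuinely suffices, and verifying that well-conditioning is exactly the hypothesis needed to pass from ``error in $\widehat{\Sigma^{-1}}$'' to ``error in $\hat\M$'' without a dimension-dependent blow-up.
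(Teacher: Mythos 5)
Your high-level architecture (estimate $\b$ and the $K\times K$ matrix $H_{a,a'}=\beta_a^\top\Sigma\beta_{a'}$ entrywise, invoke the quantitative Sudakov--Fernique bound to convert $O(\eps^2/\log K)$ entrywise accuracy into $O(\eps)$ accuracy on $OPT$, boost and union bound, then Monte Carlo) matches the paper, and your plan (iii) for the diagonal entries is essentially the paper's. But step (ii) for the off-diagonal entries has a genuine gap. You propose $\hat\u_a^\top\widehat{\Sigma^{-1}}\hat\u_{a'}$ with $\widehat{\Sigma}$ a pooled empirical covariance built from all $KN$ contexts. The corollary must hold, e.g., for constant $K$, in which case the total number of contexts is $T=O(d^{1-C/(\log\log K+\log(1/\eps))})\ll d$; then the pooled empirical covariance is rank deficient (no inverse exists), and more generally the plug-in bias in $\beta_a^\top\Sigma\widehat{\Sigma}^{-1}\Sigma\beta_{a'}$ is of order $\sqrt{d/(KN)}$, which can only be driven below the required $\eps^2/\log K$ when $KN=\Omega(d\log^2K/\eps^4)$ --- that is exactly the regime of Corollary~\ref{cor:gen-unlabeled} (unknown covariance with a large unlabeled set), not of this corollary. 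You correctly identify this obstruction for the diagonal entries and invoke the bias-cancelling moment estimators there, but the same obstruction applies verbatim to the off-diagonals: the natural unbiased statistic $\frac{1}{n^2}\y_a^\top X_aX_{a'}^\top\y_{a'}$ estimates $\beta_a^\top\Sigma^2\beta_{a'}$, not $\beta_a^\top\Sigma\beta_{a'}$, so the polynomial correction cannot be avoided for them either. The paper handles both diagonal and off-diagonal entries uniformly via Algorithm~\ref{alg:Hgen}: it forms unbiased estimators of $\beta_a^\top\Sigma^{2+t}\beta_{a'}$ for $t=0,\dots,k$ using the matrix $P=a_0I_d+\sum_t\frac{a_t}{\binom{s}{t}}X^\top G^{t-1}X$ built from the pooled contexts of the \emph{other} arms ($s=Kn/2$ after sample splitting), combines them with the coefficients of a polynomial approximating $x$ on $[\sigma_{\min},\sigma_{\max}]$ (Proposition~\ref{prop:poly}), and then tunes the degree $k\asymp\sqrt{\sigma_{\max}/\sigma_{\min}}\,(\log\log K+\log(1/\eps))$ in Theorem~\ref{thm:unknownc}/Corollary~\ref{cor:unknown} to balance the $\min(1/k^2,e^{-(k-1)\sqrt{\sigma_{\min}/\sigma_{\max}}})$ approximation bias against the $k^{O(k)}\max(d^{k/2}/s^{k/2},1)\sqrt{(d+n)/n^2}$ variance.

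A secondary inaccuracy is your accounting of where the $K^\gamma$ factors come from. Boosting the per-entry $2/3$ success probability over all $O(K^2)$ entries is done by median-of-means and costs only a $\log(K/\delta)$ factor, not $K^\gamma$. In the paper the $K^\gamma$ in the first term is really $K^{2/(k+2)}$, arising from the labeled/unlabeled trade-off in the variance bound (the unlabeled pool has size $s=Kn/2$, so the constraint $d^{k+2}/s^{k+2}\cdot(d/n^2)\lesssim\eps^4/\log^2K$ forces $n=\Theta(k^{O(1)}d^{1-1/(k+2)}/K^{1-2/(k+2)})$), while in the second term $K^\gamma$ absorbs the $k^{O(k)}=(\log\log K+\log(1/\eps))^{O(\log\log K+\log(1/\eps))}$ and $\log K$ factors multiplying $\sqrt{d}K/\eps^2$. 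With your current derivation these factors are unaccounted for, so even granting a corrected off-diagonal estimator the stated bound would not follow as written.
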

In the unknown covariance setting, the dependency on $d$ of our algorithm is still sublinear, though is much worse than the $\sqrt{d}$ dependency in the known covariance setting. However this can not be improved by much as the lower bound result in~\cite{kong2018estimating} implies that the dependency on $d$ is at least $d^{1-\Theta(\frac{1}{\log 1/\eps})}$

It is worth noting that the techniques behind our result in the unknown covariance setting that achieves sublinear sample complexity essentially utilizes a set of unlabeled examples of size $O(T)$ to reduce the variance of the estimator, where unlabeled examples are the contexts vectors $\x$'s drawn from $N(0,\Sigma)$. If one has an even larger set of unlabeled examples, the samples complexity for the labeled examples can be significantly reduced. For simplicity, we do not present a complete trade-off result between the labeled and unlabeled examples in this paper. Instead, we present one extreme case where there is a sufficiently large set of unlabeled examples (size $\Omega(d)$), and the problem essentially becomes a known covariance problem. 

\begin{cor}[Unknown covariance with a large set of unlabeled examples]\label{cor:gen-unlabeled}
In the unknown covariance setting, there is an algorithm that estimates the optimal reward OPT with additive error $\eps$ with probability $1-\delta$ using a total number of labeled examples
$$
T = \Theta(\frac{\sqrt{d}K\log K}{\eps^2} \log (K/\delta)),
$$
and a set of unlabeled examples of size $\Theta((d+\log 1/\delta)\log^2 K/\eps^4)$ 
\end{cor}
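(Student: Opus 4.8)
The plan is to use the unlabeled examples \emph{only} to estimate the covariance, reducing to the known-covariance setting of Corollary~\ref{cor:isotropic}. The algorithm is: (1) form the empirical covariance $\hat{\Sigma}$ of the unlabeled sample; (2) whiten every labeled context by $\hat{\Sigma}^{-1/2}$, so the effective arm vectors become $\beta_a'=\hat{\Sigma}^{1/2}\beta_a$, and run Algorithm~\ref{alg:main} pretending the covariance of the whitened contexts is exactly $I_d$. Two things must be controlled: how many unlabeled points make $\hat{\Sigma}$ a $(1\pm\alpha)$ spectral approximation of $\Sigma$, and how the residual $O(\alpha)$ covariance misspecification propagates to the reported value of $OPT$.

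\emph{Covariance estimation.} Rescaling the data by $\Sigma^{-1/2}$ changes neither the estimation problem (cf.\ the footnote following Corollary~\ref{cor:isotropic}) nor the spectral-approximation factor, so for this step we may assume $\Sigma = I_d$; then a standard matrix-concentration bound gives $\|\hat{\Sigma}-I_d\|_{\mathrm{op}}\le\alpha$ with probability $1-\delta/2$ once the unlabeled sample size is $\Theta\big((d+\log(1/\delta))/\alpha^2\big)$. Taking $\alpha = c\,\eps^{2}/\log K$ for a small absolute constant $c$ makes this exactly $\Theta\big((d+\log(1/\delta))\log^{2}K/\eps^{4}\big)$, the claimed amount of unlabeled data. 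For $\alpha<1/2$ this yields $(1-\alpha)\Sigma\preceq\hat{\Sigma}\preceq(1+\alpha)\Sigma$, hence after whitening the labeled contexts are distributed as $N(0,\Sigma')$ with $\|\Sigma'-I_d\|_{\mathrm{op}}\le 2\alpha$, while the effective per-arm variances are preserved: ${\beta_a'}^{\top}\Sigma'\beta_a' = \beta_a^{\top}\Sigma\beta_a = O(1)$.

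\emph{Error propagation.} In the isotropic setting the estimators used by Algorithm~\ref{alg:main} are bounded-degree polynomials in the data whose targets are functions of the $K\times K$ Gram matrix $M$ with $M_{a,a'}=\beta_a^{\top}\Sigma\beta_{a'}$, together with the offsets $b_a$, and $OPT=\E[\max_a(Z_a+b_a)]$ for $Z\sim N(0,M)$. Feeding those estimators data with covariance $\Sigma'=I_d+O(\alpha)$ rather than $I_d$ shifts each estimated Gram entry by only $O(\alpha)$ in expectation --- a bounded-degree moment of $N(0,\Sigma')$ differs from that of $N(0,I_d)$ by $O(\alpha)$, using ${\beta_a'}^{\top}\Sigma'\beta_{a'}'=O(1)$ --- which is of the same $O(\eps^2/\log K)$ scale as the statistical fluctuation already controlled in the proof of Corollary~\ref{cor:isotropic}. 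That an $\ell_\infty$ perturbation $\rho^2$ of the Gram matrix costs only $O(\rho\sqrt{\log K})$ on $OPT$ follows from a Gaussian comparison: if $M,M'\succeq 0$ with $|M_{a,a'}-M'_{a,a'}|\le\rho^2$ for all $a,a'$, then $|\E(Z_a-Z_b)^2-\E(Z'_a-Z'_b)^2|\le 4\rho^2$, so adding i.i.d.\ $N(0,2\rho^2)$ perturbations to the coordinates of $Z'$ makes its increment variances dominate those of $Z$, and two applications of the (quantitative) Sudakov--Fernique comparison --- which, in the form used for Algorithm~\ref{alg:main}, accommodates the common drift $(b_a)$ --- give $|\E\max_a(Z_a+b_a)-\E\max_a(Z'_a+b_a)|=O(\rho\sqrt{\log K})$. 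With $\rho^2=O(\alpha)=O(c\,\eps^2/\log K)$ this is $O(\sqrt c\,\eps)\le\eps/2$ for $c$ small enough. Adding the $\eps/2$ error of Corollary~\ref{cor:isotropic} run at failure probability $\delta/2$ --- which costs $T=\Theta\big(\sqrt d\,K\log K\,\log(K/\delta)/\eps^2\big)$ labeled samples --- and a union bound over the two failure events proves the corollary.

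The hard part is the error-propagation step: showing that the Sudakov--Fernique-based estimator of $OPT$ is stable, with modulus $O(\sqrt{\alpha\log K})$, under an $\ell_\infty$ perturbation of size $\alpha$ of the effective covariance; this includes checking that slightly anisotropic data perturbs the isotropic estimators' targets by only $O(\alpha)$ and that the Gaussian comparison argument survives the nonzero means $b_a$. The covariance-estimation step is routine --- the only care needed is to match constants so the unlabeled requirement lands exactly at $\Theta\big((d+\log(1/\delta))\log^2 K/\eps^4\big)$.
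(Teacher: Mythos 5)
Your proposal is correct and follows essentially the same route as the paper's proof: estimate $\Sigma$ from the unlabeled data to within spectral error $O(\eps^2/\log K)$, whiten the labeled contexts by $\hat{\Sigma}^{-1/2}$, run the known-covariance algorithm, bound the resulting bias on each Gram entry by $O(\eps^2/\log K)$ via $\tilde{\beta}_a^\top\tilde{\Sigma}\tilde{\beta}_{a'}$ versus $\tilde{\beta}_a^\top\tilde{\Sigma}^2\tilde{\beta}_{a'}$, and convert to an $O(\eps)$ error on $OPT$ through the Sudakov--Fernique-type comparison. The only difference is cosmetic: the ``hard'' error-propagation step you re-derive from scratch is exactly Proposition~\ref{prop:bH2OPT}, which the paper simply invokes.
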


The algorithm that achieves the above result is very straight forward. We first estimate the covariance of the context using the set of unlabeled examples up to $\eps$ spectral norm error, and let us denote $\hat{\Sigma}$ as the estimator. Given the covariance estimator, we will execute the known covariance version Algorithm~\ref{alg:main} and scale each context $\x_i$ as $\hat{\Sigma}^{-1/2}\x_i$. The covariance of the scaled context is not exactly identity, hence our estimator is biased. However, it is straight forward to show that the bias is at most $O(\eps)$, which is on the same magnitude of the standard deviation of our estimator. The proof is deferred to the appendix.

Finally, we made an attempt to generalize our result beyond the Gaussian context setting and showed that if each context is drawn from a mixture of $M$ Gaussians distribution which is completely known to the algorithm, then our algorithm can be applied to achieve $\eps$ estimation error while the sample complexity only increases by a factor of $\log M$. The proof is deferred to the appendix. 
\begin{theorem}[Extension to the mixture of Gaussians setting]\label{thm:main-mog}
Suppose each context $\x$ is drawn independently from a mixture of Gaussian distribution $\sum_{i=1}^M \alpha_i N(\mu_i,\Sigma_i)$, and the parameters $\mu_i, \Sigma_i, \alpha_i$ are all known to the algorithm. In addition, let us assume that $\|\mu_i\|$, $\|\Sigma_i\|$ are all bounded by a constant. Then for any $\eps\ge \frac{\sqrt{\log K}}{d^{1/4}}$, with probability $1-\delta$, there is an algorithm that estimate the optimal reward $OPT$ with additive error $\eps$ using a total number of samples 
$$
T = \Theta(\frac{\sqrt{d}K\log K}{\eps^2}\log (KM/\delta)).
$$
\end{theorem}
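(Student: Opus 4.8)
The plan is to reduce to the known-covariance guarantee of Corollary~\ref{cor:isotropic} by writing the optimal value as a weighted sum over mixture components and estimating it with a single re-weighted version of the estimator underlying Algorithm~\ref{alg:main}. Let $p_i$ denote the density of $N(\mu_i,\Sigma_i)$, $p=\sum_i\alpha_i p_i$ the mixture density, and $r_i(\x):=\alpha_i p_i(\x)/p(\x)$ the posterior probability that a context $\x$ came from component $i$ -- computable because every $\alpha_i,\mu_i,\Sigma_i$ is known. The workhorse is the conditioning identity $\E_{\x\sim p}[\,r_i(\x)\,h(\x)\,]=\alpha_i\,\E_{\x\sim p_i}[h(\x)]$, valid for any $h$ and, applied iteratively, for products over independent samples. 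By the tower rule $OPT=\sum_{i=1}^M\alpha_i\,OPT_i$ with $OPT_i:=\E_{\x\sim p_i}[\max_a(\beta_a^T\x+b_a)]$, and substituting $\x=\mu_i+\Sigma_i^{1/2}\z$ with $\z\sim N(0,I_d)$ exhibits $OPT_i$ as exactly the quantity Corollary~\ref{cor:isotropic}'s algorithm estimates for the identity-covariance instance with arm weights $\Sigma_i^{1/2}\beta_a$ and intercepts $b_a+\beta_a^T\mu_i$; equivalently $OPT_i=F(G^{(i)},m^{(i)})$ where $F(G,m):=\E_{g\sim N(0,G)}[\max_a(g_a+m_a)]$ is the functional Corollary~\ref{cor:isotropic}'s algorithm is built to estimate, $G^{(i)}_{a,a'}=\beta_a^T\Sigma_i\beta_{a'}$, and $m^{(i)}_a=b_a+\beta_a^T\mu_i$.

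For each $i$ I would take the polynomial-in-moments estimator Algorithm~\ref{alg:main} uses in the identity-covariance case, feed it the re-projected contexts $\Sigma_i^{-1/2}(\x_{a,j}-\mu_i)$ together with the original rewards, and weight every term of the resulting $U$-statistic (over distinct tuples of samples) by the corresponding product of responsibilities $r_i(\cdot)$. By the conditioning identity this is unbiased for $\alpha_i^{(\text{tuple size})}$ times precisely the geometric quantities Algorithm~\ref{alg:main} would estimate from genuine $N(0,I_d)$ samples of component $i$ ($\|\Sigma_i^{1/2}\beta_a\|^2$, $\beta_a^T\Sigma_i\beta_{a'}$, the intercepts, and the higher-degree quantities entering its polynomial), so dividing out the known powers of $\alpha_i$ yields an unbiased estimate of $P(G^{(i)},m^{(i)})$, where $P$ is the polynomial approximation to $F$ built inside Corollary~\ref{cor:isotropic}. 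Output $\widehat{OPT}=\sum_i\alpha_i\,\widehat P_i$; equivalently this is one re-weighted $U$-statistic on the shared sample. Components with $\alpha_i$ below a threshold $\sim\eps/(M\sqrt{\log K})$ contribute $O(\eps)$ to $OPT$ in total -- since each $OPT_i$ is the expected maximum of $K$ Gaussians with $O(1)$ mean and variance, hence $O(\sqrt{\log K})$ -- so for those one simply clamps $\widehat P_i$ to $[-c\sqrt{\log K},c\sqrt{\log K}]$.

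The error of $\widehat{OPT}$ splits into bias and fluctuation. The bias is $\big|\sum_i\alpha_i\big(P(G^{(i)},m^{(i)})-F(G^{(i)},m^{(i)})\big)\big|\le\big(\sup_{(G,m)}|P(G,m)-F(G,m)|\big)\sum_i\alpha_i$, i.e. exactly the polynomial-approximation error controlled in Corollary~\ref{cor:isotropic} -- which depends only on the approximation degree and on the $O(1)$ range of the entries of $G^{(i)},m^{(i)}$, not on $\alpha_i$ or $M$ -- hence $O(\eps)$ whenever $\eps\ge\sqrt{\log K}/d^{1/4}$. For the variance, the point is that the contributions of the $M$ components to the shared $U$-statistic are modulated by the weights $r_i(\x)$, which satisfy $\sum_i r_i(\x)\equiv1$ and $\sum_i r_i(\x)r_i(\x')\le1$; consequently both the first-order (H\'ajek-projection) and the higher-order variance terms of $\widehat{OPT}$ are of the same order as for a single run of Algorithm~\ref{alg:main}, namely $O(\sqrt d\,\log K/N)$, with no factor of $M$. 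Taking $N=\Theta(\tfrac{\sqrt d\,\log K}{\eps^2}\log(KM/\delta))$ and applying a Bernstein-type bound together with a union bound over the $M$ components (which is where the $\log M$ enters, through $\log(KM/\delta)$) then gives the $1-\delta$ guarantee. I expect this variance step -- showing that sharing one sample across $M$ re-weighted copies of the estimator costs only $\log M$ rather than $\mathrm{poly}(M)$, which forces one to analyze the combined estimator and exploit $\sum_i r_i\equiv1$ quantitatively (a naive per-component-then-Cauchy--Schwarz argument loses a factor of $M$) -- to be the main obstacle; the well-separated regime, where $r_i r_j\approx0$ makes the per-component estimators nearly independent, is the easy special case.
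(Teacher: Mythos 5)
Your decomposition $OPT=\sum_i\alpha_i\,OPT_i$ with $OPT_i=\E_{\r\sim N(\b^{(i)},H^{(i)})}[\max_k r_k]$, $H^{(i)}_{a,a'}=\beta_a^\top\Sigma_i\beta_{a'}$, $b^{(i)}_a=\beta_a^\top\mu_i+b_a$, is exactly the paper's starting point, but the estimator you build on top of it has two genuine gaps. First, the step you yourself flag as the main obstacle is where the argument actually fails as sketched: weighting each $U$-statistic term by products of responsibilities $r_i(\cdot)$ and then dividing by $\alpha_i^{\mathrm{tuple\ size}}$ means that component $i$ is effectively estimated from only $\approx\alpha_i n$ samples, so its $\hat H^{(i)}$ entries carry error $\approx\sqrt{d}/(\alpha_i n)$, and after the square-root degradation of Proposition~\ref{prop:bH2OPT} the weighted contribution to the final error is $\approx\alpha_i\sqrt{\log K}\,(\sqrt{d}/(\alpha_i n))^{1/2}=\sqrt{\alpha_i}\sqrt{\log K}\,(\sqrt{d}/n)^{1/2}$; summing over components with $\sum_i\alpha_i=1$ can cost a factor $\sqrt{M}$ (e.g.\ $M$ components of weight $1/M$, all far above your clamping threshold $\eps/(M\sqrt{\log K})$), i.e.\ $\mathrm{poly}(M)$ samples rather than $\log M$. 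The identity $\sum_i r_i(\x)\equiv 1$ does not rescue this, because the map from $(\hat H^{(i)},\hat\b^{(i)})$ to $\widehat{OPT}_i$ is not linear, so you cannot treat $\sum_i\alpha_i\widehat{OPT}_i$ as a single $U$-statistic whose variance you control term by term. Second, your bias argument leans on "the polynomial approximation $P$ to $F$ built inside Corollary~\ref{cor:isotropic}", but no such object exists there: the known-covariance algorithm estimates $\b$ and $H$ entrywise, projects onto the PSD cone, and evaluates $\E_{\r\sim N(\hat\b,\hat H^{(PSD)})}[\max_k r_k]$ directly (the paper's polynomial approximation concerns powers of an unknown covariance $\Sigma$, not the expected-max functional), so the claimed $O(\eps)$ bias bound has no support and the condition $\eps\ge\sqrt{\log K}/d^{1/4}$ plays a different role (it makes $\sqrt{d}$ dominate $\log K/\eps^2$ in the sample-size requirement).

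The paper avoids component identification entirely, which is why no responsibilities, clamping, or per-component sample splitting are needed. After re-projecting so the whole mixture has zero mean and identity covariance ($\sum_i\alpha_i\mu_i=0$, $\sum_i\alpha_i(\mu_i\mu_i^\top+\Sigma_i)=I_d$), the plain statistics $\frac1n\sum_j y_{a,j}\x_{a,j}^\top\mu_m$, $\frac1{\binom n2}\sum_{j<j'}y_{a,j}y_{a,j'}\x_{a,j}^\top\Sigma_m\x_{a,j'}$ and $\frac1{n^2}\sum_{j,j'}y_{a,j}y_{a',j'}\x_{a,j}^\top\Sigma_m\x_{a',j'}$, computed over \emph{all} samples regardless of which component generated them, are already unbiased for $\beta_a^\top\mu_m$ and $\beta_a^\top\Sigma_m\beta_{a'}$ because $\E[\x\x^\top]=I_d$, and their variances are $O(\|\mu_m\|^2/n)$ and $O(\|\Sigma_m\|^2/n+\tr[\Sigma_m^2]/n^2)$ using only the bounded fourth moments of the mixture. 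Median-of-means over $\Theta(\log(KM/\delta))$ repetitions plus a union bound over the $MK^2$ entries (this is the only place $M$ enters, as $\log M$), followed by Proposition~\ref{prop:bH2OPT} applied to each component and the triangle inequality with $\sum_m\alpha_m=1$, gives the stated $T=\Theta(\sqrt{d}K\log K\,\eps^{-2}\log(KM/\delta))$. If you want to salvage your responsibility-weighting route you would need a genuinely new variance analysis of the combined nonlinear estimator; as written, the proposal does not establish the theorem.
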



\section{The Estimators}
The basic idea of our estimator for the optimal reward of linear contextual bandits is as follows. For illustration, we assume that each context $\x$ is drawn from a standard Gaussian distribution $N(0,I_d)$. In the realizable setting where the reward for pulling arm $a$ on context $\x$ is $\beta_a^T\x+b_a+\eta$ where $\beta_a$, $b_a$ are the parameters associated with arm $a$ and $\eta$ is random noise with mean $0$, the expected reward of the optimal policy is simply $\E_\x[\max_a(\beta_a^T\x+b_a)]$. Let us define the $K$ dimensional random variable $r = (\beta_1^T\x+b_1,\beta_2^T\x+b_2,\ldots,\beta_K^T\x+b_K)$. Notice that in the setting where $\x\sim N(0,I)$, $r$ is an $K$ dimensional Gaussian random variable with mean $\b=(b_1,\ldots,b_K)$ and covariance $H$ where $H_{a,a'} = \beta_a^T\beta_{a'}$. Hence in this simplified setting, the optimal reward of the linear contextual bandit problem can be expressed as $\E_{\r\sim N(\b,H)}[\max_i r_i]$ which is a function of $\b$ and $H$. Naturally, one can hope to estimate the optimal reward by first accurately estimating $\b$ and $H$. The bias $\b$ can be accurately estimated up to entry-wise error $O(\sqrt{\frac{1}{N}})$ by computing the average of the reward of each arm, simply because for any $i$, $y_{a,i}$ is an unbiased estimator of $b_a$. 

Very recently, the authors of~\cite{kong2018estimating} proposed an estimator for $\beta^T\beta$ in the context of learnability estimation, or noise level estimation for linear regression. In the setting where each covariate $\x_i$ is drawn from a distribution with zero mean and identity covariance, and response variable $y_i=\beta^T\x_i+\eta_i$ with independent noise $\eta_i$ having zero mean, they observe that for any $i\ne j$, $y_iy_j\x_i^T\x_j$ is an unbiased estimator of $\beta^T\beta$. In addition, they showed that the error rate of estimating $\beta^T\beta$ using the proposed estimator $\frac{1}{\binom{N}{2}}\sum_{i\ne j}y_iy_j\x_i^T\x_j$ is $O(\frac{d+N}{N^2})$ which implies that one can accurately estimate $\beta^T\beta$ using $N = O(\sqrt{d})$ samples. Their estimator can be directly applied to estimate $\beta_a^T\beta_a$, and we extend their techniques to the contextual bandit setting for estimating $\beta_a\beta_{a'}$ for arbitrary $a,a'$. In order to estimate $\beta_a^T\beta_{a'}$ for $a\ne a'$, notice that for any $i,j$, $\E[\y_{a,i}\y_{a',j}\x_{a,i}^T\x_{a',j}] = \beta_a^T\x_{a,i}\x_{a,i}^T\x_{a',j}\x_{a',j}^T\beta_{a'} = \beta_a^T\beta_{a'}$, and we simply take the average of all all these unbiased estimators of $\beta_a^T\beta_{a'}$. We show that $O(\sqrt{d})$ samples for each arms suffices for accurate estimation of $\beta_a^T\beta_{a'}$ for arbitrary pairs of arms $a,a'$.

Once we have estimates $\hat{\b}, \hat{H}$ for $\b$ and $H$, if $\hat{H}$ is a PSD matrix, our algorithm simply outputs $\E_{\r\sim N(\hat{\b},\hat{H})}(\max_i r_i)$, otherwise, let $\hat{H}^{(PSD)}$ be the projection of $\hat{H}$ on to the PSD cone and output $E_{\r\sim N(\hat{\b},\hat{H}^{(PSD)})}(\max_i r_i)$. Given an approximation of $\b, H$, it is not immediately clear how do the errors on estimating $\b, H$ translate to the error on estimating the $\E_{\r\sim N(\b,H)}[\max_i r_i]$. Our Proposition~\ref{prop:bH2OPT} leverages a quantitative version of Sudakov-Fernique inequality due to Chatterjee~\cite{chatterjee2005error} and shows that if each entry of $H$ is perturbed by at most $\eps$, the optimal reward $\E_{\r\sim N(\b,H)}[\max_i r_i]$ can only change by $2\sqrt{\log K \eps}$. Because $\E_{\x\sim N(\hat{\b},\hat{H})}(\max_i x_i+b_i)$ has no closed-form expression in general, we use Monte Carlo simulation to approximate $\E_{\x\sim N(\hat{\b},\hat{H})}(\max_i x_i+b_i)$ in the implementation. 

Our estimator for the general unknown covariance setting is much more involved. Assuming each context $\x$ is drawn from an Gaussian distribution with zero mean and unknown covariance $\Sigma$, the optimal reward $\E_{\x\sim N(0,\Sigma)}[\max_a(\beta_a^T\x+b_a)]$ is equal to $\E_{\r\sim N(\b,H)}[\max_i r_i]$ where $\r\sim N(\b, H)$ and $H_{a,a'} = \beta_a\Sigma\beta_{a'}$. Again, we extend the estimator proposed in~\cite{kong2018estimating} for $\beta^T\Sigma\beta$
in the linear regression setting to the contextual linear bandit setting for estimating $\beta_a\Sigma\beta_{a'}$ for arbitrary $a,a'$. For each $a,a'$, we design a series of unbiased estimators for $\beta_a^T\Sigma^2\beta_{a'}, \beta_a^T\Sigma^3\beta_{a'}, \beta_a^T\Sigma^4\beta_{a'},\ldots$ and approximate $\beta_a^T\Sigma\beta_{a'}$ with a linear combination of these high order estimates. Our major contribution is a series of estimators which incorporate unlabeled examples. In the contextual bandit setting, especially when $K$ is large, it is essential to incorporate unlabeled data, simply because when we estimate $\beta_a\Sigma^k\beta_{a'}$, the large number of examples which do not involve arm $a$ or $a'$ are effectively unlabeled examples and can be leveraged to significantly reduce the overall variance for estimating $\beta_a\Sigma^k\beta_{a'}$. We prove variance bounds in Corollary~\ref{cor:Hgen} for these novel estimators whose accuracy depends on both the number of labeled examples and unlabeled examples. As a side note, our estimator can also be applied to the setting of estimating learnability to better utilize the unlabeled examples. Proofs, where omitted, are in the appendix. 

\subsection{Main Algorithm}
Our main algorithm is described as Algorithm~\ref{alg:main}.
In line~\ref{alg:line:repeat}, we repeat the for loop body $\Theta(\log (K/\delta))$ times, and at each time, we collect $n$ i.i.d. sample for each arm. Hence the total number of samples for each arm $N=\Theta( n(\log (K/\delta))$. For ease of notations, we will use $n$ instead of $N$ when we write down the error rate of the algorithm.

In line~\ref{alg:line:collect1}, \ref{alg:line:collect2}, \ref{alg:line:collect3}, for each arm $a$ we collect $n$ i.i.d. samples and estimate the bias of that arm $b_a$. The estimation error of the bias vector $\b$ is bounded by the following corollary, and the claim holds by applying Chebyshev's inequality with the variance of $y_{a,i}$.
\begin{cor}\label{cor:var-b}
For each arm $a$, with probability $2/3$, $|\frac{1}{n}\sum_{i=1}^n y_{a,i}-b_a|\le 3\sqrt{\frac{1}{n}}\sigma_a$, where $\sigma_a^2 = \Var[y_{a,i}]$.
\end{cor}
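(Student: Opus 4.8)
The plan is to apply Chebyshev's inequality to the empirical mean $\bar y_a := \frac{1}{n}\sum_{i=1}^n y_{a,i}$, so the first step is to verify that $\bar y_a$ is an unbiased estimator of $b_a$. Since $y_{a,i} = \beta_a^T\x_{a,i} + b_a + \eta_{a,i}$ with $\x_{a,i}\sim N(0,\Sigma)$ and $\E[\eta_{a,i}]=0$, linearity of expectation gives $\E[y_{a,i}] = \beta_a^T\E[\x_{a,i}] + b_a + \E[\eta_{a,i}] = b_a$, hence $\E[\bar y_a] = b_a$.

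Next I would compute the variance of $\bar y_a$. Because the $n$ samples $(\x_{a,i},y_{a,i})$ for a fixed arm $a$ are i.i.d., the $y_{a,i}$ are i.i.d. with common variance $\sigma_a^2 = \Var[y_{a,i}]$, so $\Var[\bar y_a] = \frac{1}{n^2}\sum_{i=1}^n \Var[y_{a,i}] = \frac{\sigma_a^2}{n}$.

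Finally, Chebyshev's inequality yields, for any $t>0$,
$$
\Pr\!\left[\,\big|\bar y_a - b_a\big| \ge t\,\right] \le \frac{\Var[\bar y_a]}{t^2} = \frac{\sigma_a^2}{n t^2}.
$$
Choosing $t = 3\sqrt{1/n}\,\sigma_a$ makes the right-hand side equal to $1/9 \le 1/3$, so with probability at least $2/3$ we have $|\bar y_a - b_a| \le 3\sqrt{1/n}\,\sigma_a$, which is the claim.

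There is no real obstacle here: the only points requiring care are (i) noting that the zero-mean assumptions on both the context distribution and the noise are exactly what make $\bar y_a$ unbiased (so that Chebyshev is centered at $b_a$), and (ii) using the i.i.d.\ structure within a single arm to get the $\sigma_a^2/n$ variance rather than something larger; neither is more than a line.
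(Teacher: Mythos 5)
Your proof is correct and follows exactly the route the paper intends: the paper also proves this by applying Chebyshev's inequality to the empirical mean using $\Var[y_{a,i}]=\sigma_a^2$ (unbiasedness coming from the zero-mean context and noise). The only minor note is that your bound actually gives success probability $8/9$, which of course implies the stated $2/3$.
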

After estimating $b_a$, we can subtract $b_a$ off from all the $y_{a,i}$. For sufficiently large $n$, our estimation of $b_a$ is accurate enough such that we can assume that $y_{a,i}=\beta_a^T\x_{a,i}+\eta_{a,i}$. After collecting $n$ i.i.d. samples from each arm, in the known covariance setting, we run Algorithm~\ref{alg:Hiso} to estimate the covariance $H$ in line~\ref{alg:line:iso-est-H}. In the unknown covariance setting, we need to split the $n$ examples for each arm into one labeled example set and one unlabeled examples set, and then run Algorithm~\ref{alg:Hgen} to estimate the covariance $H$. Bounds on Algorithm~\ref{alg:Hiso} and Algorithm~\ref{alg:Hgen} are formulated in the following two corollaries.
\begin{cor}\label{cor:Hiso}
Given $n$ independent samples for each arm, for a fixed pair $a,a'$, with probability at least $2/3$, the output of Algorithm~\ref{alg:Hiso} satisfies
$$
|\hat{H}_{a,a'}-H_{a,a'}| \le 3\sqrt{\frac{9d+3n}{n^2}}\sigma_a\sigma_{a'},
$$
where $\sigma_a^2 = \Var[y_{a,i}]$.
\end{cor}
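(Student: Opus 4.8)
The plan is to show that $\hat H_{a,a'}$ is an unbiased estimator of $H_{a,a'}$ and that its variance is at most $\frac{9d+3n}{n^2}\sigma_a^2\sigma_{a'}^2$; Chebyshev's inequality then gives the claim, since a deviation of more than three standard deviations has probability at most $1/9<1/3$. Throughout, since the covariance is known I re-project so that $\x_{a,i}\sim N(0,I_d)$, and after the bias-removal step of Algorithm~\ref{alg:main} I write $y_{a,i}=\beta_a^T\x_{a,i}+\eta_{a,i}$ with $\eta_{a,i}$ independent of $\x_{a,i}$ and mean zero, so that $\sigma_a^2=\Var[y_{a,i}]=\|\beta_a\|^2+\E[\eta_{a,i}^2]$; in particular $\|\beta_a\|^2\le\sigma_a^2$. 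Unbiasedness is the observation already stated in the text: each summand $y_{a,i}y_{a',j}\x_{a,i}^T\x_{a',j}$ involves two independent covariates (the arms are disjoint when $a\ne a'$, and $\x_{a,i}\indep\x_{a,j}$ for $i\ne j$ when $a=a'$), so its expectation is $\beta_a^T\E[\x\x^T]\,\E[\x\x^T]\beta_{a'}=\beta_a^T\beta_{a'}=H_{a,a'}$, and the estimator is an average of such terms.

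The core of the proof is the variance bound, which I would handle in two cases. For $a\ne a'$, writing $\bar u=\frac1n\sum_{i=1}^n y_{a,i}\x_{a,i}$ and $\bar v=\frac1n\sum_{j=1}^n y_{a',j}\x_{a',j}$ gives $\hat H_{a,a'}=\bar u^T\bar v$ with $\bar u\indep\bar v$, $\E\bar u=\beta_a$, $\E\bar v=\beta_{a'}$, and $\mathrm{Cov}(\bar u)=\frac1n C_a$, $\mathrm{Cov}(\bar v)=\frac1n C_{a'}$ where $C_a=\mathrm{Cov}(y_{a,1}\x_{a,1})$. By independence $\E[(\bar u^T\bar v)^2]=\tr\!\big(\E[\bar u\bar u^T]\,\E[\bar v\bar v^T]\big)$, and expanding $\E[\bar u\bar u^T]=\beta_a\beta_a^T+\tfrac1n C_a$ (and likewise for $\bar v$) yields
\begin{equation*}
\Var[\hat H_{a,a'}]=\tfrac1n\beta_{a'}^T C_a\beta_{a'}+\tfrac1n\beta_a^T C_{a'}\beta_a+\tfrac1{n^2}\tr(C_aC_{a'}).
\end{equation*}
A short Gaussian fourth-moment calculation, using $\E[(\beta^T\x)^2\x\x^T]=\|\beta\|^2 I_d+2\beta\beta^T$, gives $C_a=\sigma_a^2 I_d+\beta_a\beta_a^T$, hence $\beta_{a'}^T C_a\beta_{a'}\le 2\sigma_a^2\sigma_{a'}^2$ and $\tr(C_aC_{a'})\le(d+3)\sigma_a^2\sigma_{a'}^2$, which combine to a variance bound of the claimed form (the $1/n$ terms are in any case dominated in the regime $n=O(\sqrt d)$ that we use). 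For $a=a'$, $\hat H_{a,a}$ is the degree-two $U$-statistic $\binom{n}{2}^{-1}\sum_{i<j}y_{a,i}y_{a,j}\x_{a,i}^T\x_{a,j}$, and I would apply Hoeffding's variance decomposition $\Var=\binom{n}{2}^{-1}\big(2(n-2)\zeta_1+\zeta_2\big)$ for this kernel, where $\zeta_1=\Var(\E[h\mid \x_{a,1},y_{a,1}])$ and $\zeta_2=\Var(h)$, bounding $\zeta_1$ and $\zeta_2$ by analogous Gaussian moment calculations. This last bound is exactly the variance bound established in~\cite{kong2018estimating} for the $\beta^T\beta$ estimator, specialized to identity covariance and re-expressed in terms of $\sigma_a^2=\Var[y_{a,i}]$.

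I expect the main obstacle to be the moment bookkeeping in the diagonal case $a=a'$: one must carefully separate the terms of $\E[h^2]$ that scale with $d$ (the $\E[(\x_i^T\x_j)^2]$-type contributions, which produce the $O(d/n^2)$ piece) from those giving only the $O(1/n)$ degenerate-kernel piece, and verify that the constants collapse cleanly once everything is written in terms of $\sigma_a$ rather than the separate quantities $\|\beta_a\|^2$ and $\E[\eta_a^2]$; this is precisely the computation of~\cite{kong2018estimating}, which is why we quote it. The off-diagonal case is genuinely easier because independence of the two arms' covariates kills all the fourth-moment cross terms and leaves the clean trace identity above.
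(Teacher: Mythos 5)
Your proposal is correct and shares the paper's skeleton: show unbiasedness, bound the variance of $\hat{H}_{a,a'}$ by $O\bigl(\frac{d+n}{n^2}\bigr)\sigma_a^2\sigma_{a'}^2$, and finish with Chebyshev at three standard deviations (failure probability $1/9<1/3$); for the diagonal entries both you and the paper simply invoke Proposition 4 of Kong--Valiant (the paper's Proposition~\ref{prop:Hkk-iso} is an explicit restatement of it). Where you genuinely diverge is the off-diagonal case $a\ne a'$: the paper (Proposition~\ref{prop:Hkkp-iso}) expands $\E[(\hat{H}_{a,a'}-H_{a,a'})^2]$ as a sum over quadruples $(i,j,i',j')$ and does a case analysis on index coincidences using the Gaussian fourth-moment fact, whereas you factor $\hat{H}_{a,a'}=\bar u^T\bar v$ with $\bar u=\frac1n\sum_i y_{a,i}\x_{a,i}$, $\bar v=\frac1n\sum_j y_{a',j}\x_{a',j}$ independent, and get the exact identity $\Var[\hat H_{a,a'}]=\frac1n\beta_{a'}^TC_a\beta_{a'}+\frac1n\beta_a^TC_{a'}\beta_a+\frac1{n^2}\tr(C_aC_{a'})$ with $C_a=\sigma_a^2 I_d+\beta_a\beta_a^T$ (your computation of $C_a$ is right, assuming as the paper implicitly does that the noise is independent of the context). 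This is cleaner and avoids the quadruple bookkeeping entirely; it is in fact closer in spirit to the paper's own unknown-covariance argument (Proposition~\ref{prop:mom-aap}), which also works with the averaged vectors $\mu_a$. One small caveat: your exact bound is $\frac{4n+d+3}{n^2}\sigma_a^2\sigma_{a'}^2$, which matches the corollary's stated $\frac{3n+9d}{n^2}\sigma_a^2\sigma_{a'}^2$ only up to constants (it is smaller whenever $n\lesssim 8d$, slightly larger otherwise), so to recover the literal constant in the statement you would either note the regime $n=O(\sqrt d\,\cdot)$ in which the corollary is applied or just restate the corollary with your constant --- downstream the theorem only uses the $O\bigl(\sqrt{\tfrac{d+n}{n^2}}\bigr)$ scaling, so nothing breaks.
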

The above corollary follows from applying Chebyshev's inequality with the variance bound established in Proposition~\ref{prop:Hkkp-iso} and Proposition~\ref{prop:Hkk-iso}.
\begin{cor}\label{cor:Hgen}
Given $n$ independent samples for each arm, and $s$ unlabeled examples, for a fixed pair $a,a'$, with probability at least $2/3$, the output of Algorithm~\ref{alg:Hgen} satisfies
\begin{align*}
|\hat{H}_{a,a'}-H_{a,a'}| = \min(\frac{2}{k^2},2e^{-(k-1)\sqrt{\frac{{\sigma_{min}}}{{\sigma_{max}}}}})\\
+f(k)\max(\frac{d^{k/2}}{s^{k/2}},1)\sqrt{\frac{d+n}{n^2}},
\end{align*}
where $f(k) = k^{O(k)}$.
\end{cor}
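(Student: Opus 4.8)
The plan is to mirror the derivation of Corollary~\ref{cor:Hiso}: reduce the statement to a single variance bound for the output of Algorithm~\ref{alg:Hgen} and then apply Chebyshev's inequality at confidence $2/3$, reading the claimed identity as an upper bound. Write the target as $H_{a,a'}=\beta_a^{T}\Sigma\beta_{a'}$. The difficulty absent in the isotropic case is that the natural ``inner-product chain'' estimator built from the de-biased labelled samples of arms $a,a'$ and a path of $j-2$ further context vectors $\x_1,\dots,\x_{j-2}$, namely the average of $y_{a,i}\,(\x_{a,i}^{T}\x_1)(\x_1^{T}\x_2)\cdots(\x_{j-2}^{T}\x_{a',i'})\,y_{a',i'}$, has expectation $\beta_a^{T}\Sigma^{j}\beta_{a'}$ and so is unbiased for the target only when the target is a power $j\ge 2$, never for $j=1$. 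The algorithm therefore outputs $\hat H_{a,a'}=\sum_{j=2}^{k}c_j\hat E_j$, where $\hat E_j$ is the order-$j$ chain estimator and $c_2,\dots,c_k$ are chosen so that the polynomial $P(x)=\sum_{j=2}^{k}c_j x^{j}$ approximates the identity $x\mapsto x$ uniformly on the spectrum of $\Sigma$, which lies in $[\sigma_{min},\sigma_{max}]$.

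First I would control the bias. Unbiasedness of each $\hat E_j$ is the same independence computation used in \cite{kong2018estimating} and in the isotropic case: after de-biasing, $\E[y_{a,i}\x_{a,i}]=\Sigma\beta_a$, each interior vector contributes a factor $\E[\x\x^{T}]=\Sigma$, and $\E[\x_{a',i'}y_{a',i'}]=\Sigma\beta_{a'}$, which chain to $\beta_a^{T}\Sigma^{j}\beta_{a'}$; crucially the $j-2$ interior vectors may be unlabelled, and in the algorithm these are exactly the $s$ context vectors belonging to samples not drawn for arms $a$ or $a'$. Hence $|\E[\hat H_{a,a'}]-H_{a,a'}|=|\beta_a^{T}(P(\Sigma)-\Sigma)\beta_{a'}|\le\|\beta_a\|\|\beta_{a'}\|\max_{x\in[\sigma_{min},\sigma_{max}]}|P(x)-x|$, and $\|\beta_a\|,\|\beta_{a'}\|=O(1)$. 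Writing $P(x)=x^{2}Q(x)$ reduces this to approximating $1/x$ by the degree-$(k-2)$ polynomial $Q$, the extra $x^{2}$ weight only helping near $0$: a shifted Chebyshev polynomial gives error $O(e^{-(k-1)\sqrt{\sigma_{min}/\sigma_{max}}})$, while a complementary construction adapted to the whole interval $[0,\sigma_{max}]$ --- a polynomial with a double zero at the origin, whose weighted approximation rate is $O(1/k^{2})$ --- gives the condition-number-free bound; taking the better of the two yields the first term $\min(2/k^{2},\,2e^{-(k-1)\sqrt{\sigma_{min}/\sigma_{max}}})$.

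Next I would bound the variance of $\hat E_j$, which is the technical core and the novel estimator contribution. I would split $\Var(\hat E_j)$ into the contribution of the two labelled endpoints and that of the $j-2$ interior inner products. The endpoint contribution behaves exactly like the isotropic estimator of Corollary~\ref{cor:Hiso} and scales as $(d+n)/n^{2}$ (the $d$ from the variance of a single inner product, the $n$ from the number of labelled pairs). For the interior, I would expand $\Var(\hat E_j)=\E[\hat E_j^{2}]-(\E\hat E_j)^{2}$ as a sum over pairs of interior paths, organised by how many interior vertices the two paths share; each shared vertex produces a Gaussian moment of an inner product $\x^{T}\x'$ of size $O(d)$, and each fresh/averaged vertex produces a saving of $1/s$ from averaging over the $\approx s^{j-2}$ available paths. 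The upshot is that once $s\ge d$ the interior only multiplies the endpoint variance by a $j^{O(j)}$ factor (so the estimator behaves like the known-covariance one), while for $s<d$ it additionally multiplies it by $(d/s)^{j-2}$; combining, $\Var(\hat E_j)\le j^{O(j)}\max(d^{j}/s^{j},1)\,(d+n)/n^{2}$.

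Finally I would assemble the pieces: $\Var(\hat H_{a,a'})\le(\sum_{j=2}^{k}|c_j|)^{2}\max_{2\le j\le k}\Var(\hat E_j)$, and since the approximation coefficients satisfy $|c_j|=k^{O(k)}$ this is at most $f(k)^{2}\max(d^{k}/s^{k},1)(d+n)/n^{2}$ with $f(k)=k^{O(k)}$; Chebyshev at confidence $2/3$ turns this into the second term $f(k)\max(d^{k/2}/s^{k/2},1)\sqrt{(d+n)/n^{2}}$, and the triangle inequality with the bias bound gives the claim. The hard part will be the interior variance analysis: obtaining the clean $\max(d^{k/2}/s^{k/2},1)$ dependence with only a $k^{O(k)}$ overhead requires that Algorithm~\ref{alg:Hgen} use an appropriately symmetrised U-statistic over interior paths rather than a naive product chain, so that the dominant variance term is the endpoint term and every interior step yields a genuine $1/s$ variance reduction up to the point $s=d$ where the estimator effectively recovers the known-covariance behaviour.
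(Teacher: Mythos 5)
Your proposal is correct and takes essentially the same route as the paper: the paper likewise expresses $\hat{H}_{a,a'}$ as a linear combination, with polynomial-approximation coefficients (powers of $\Sigma$ at least $2$ only), of unbiased U-statistic chain estimators of $\beta_a^T\Sigma^{2+t}\beta_{a'}$ built from the labeled endpoints and unlabeled interior vectors, bounds the bias via the approximation of $x$ on $[\sigma_{\min},\sigma_{\max}]$ (Proposition 3 of~\cite{kong2018estimating}) and the variance of each chain by $k^{O(k)}\max(d^{k}/s^{k},1)\frac{d+n}{n^2}$ (Propositions~\ref{prop:mom-aa} and~\ref{prop:mom-aap}, assembled in Proposition~\ref{prop:poly} with the same $(\sum_i\sqrt{\E[X_i^2]})^2$ inequality you use), and finishes with Chebyshev's inequality. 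Your endpoint/interior split of the chain variance and your remark that a symmetrised U-statistic over interior paths is needed correspond exactly to the paper's invocation of Lemma 2 of~\cite{kong2018estimating} and the strictly upper-triangular $G$ in Algorithm~\ref{alg:Hgen}.
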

The above corollary follows from applying Chebyshev's inequality with the variance bound established in Proposition~\ref{prop:poly}. Notice that after sample splitting in Algorithm~\ref{alg:main}, the size of the set of the unlabeled examples $s=Kn/2$. 

Since each entry of our estimation of $\b$, the output of Algorithm~\ref{alg:Hiso} and Algorithm~\ref{alg:Hgen} only satisfies the bound in Corollary~\ref{cor:var-b}, Corollary~\ref{cor:Hiso} or Corollary~\ref{cor:Hgen} respectively with probability $2/3$, we boost the entry-wise success probability to $1-\delta/(K^2+K)$ by repeating the estimation procedure $\Theta(\log(K/\delta))$ times and compute the median of our estimation (line~\ref{alg:line:median-H} to line~\ref{alg:line:median-b}), such that the overall success probability is at least $1-\delta$. We formalize the effect of this standard boosting procedure in Fact~\ref{fact:boosting}.

Line~\ref{alg:line:proj} projects the matrix $\hat{H}$ onto the PSD cone and obtains the PSD matrix $\hat{H}_{PSD}$. This step is a convex optimization problem and can be solved efficiently. By the triangle inequality and the upper bound of $\max_{i,j} |\hat{H}_{i,j}-H_{i,j}|$, the discrepancy after this projection: $\max_{i,j}|\hat{H}^{(PSD)}_{i,j}-H_{i,j}|$ can be bounded with the upper bound in Corollary~\ref{cor:Hiso} and Corollary~\ref{cor:Hgen} up to a factor of $2$.

Now that we have established upper bounds on the estimation error of $\b$ and $H$, we use these to bound 
the estimation error of the optimal reward.
\begin{proposition}\label{prop:bH2OPT}
Let $H\in R^{m\times m}$ and $H'\in R^{m\times m}$ be two PSD matrices, $\b,\b'$ be two $d$-dimensional real vectors. We have $|\E_{\x\sim N(\b,H)}[\max_i x_i]-\E_{\x\sim N(\b',H')}[\max_i x_i]|\le 2\sqrt{\max_{i,j}|H_{i,j}-H'_{i,j}|\log K}+\max_i|b_i-b_i'|$.
\end{proposition}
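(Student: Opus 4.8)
The plan is to split the difference into a ``change of mean'' part and a ``change of covariance'' part via the triangle inequality, passing through the intermediate law $N(\b',H)$:
\begin{align*}
&|\E_{\x\sim N(\b,H)}[\max_i x_i] - \E_{\x\sim N(\b',H')}[\max_i x_i]| \\
&\qquad \le |\E_{\x\sim N(\b,H)}[\max_i x_i] - \E_{\x\sim N(\b',H)}[\max_i x_i]| + |\E_{\x\sim N(\b',H)}[\max_i x_i] - \E_{\x\sim N(\b',H')}[\max_i x_i]|.
\end{align*}
Note that $H,H'$ being PSD guarantees both Gaussians (and the intermediate one) are well defined.

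For the first term I would use a coupling: draw a single $\z\sim N(0,H)$, so $\b+\z\sim N(\b,H)$ and $\b'+\z\sim N(\b',H)$. Since $\x\mapsto\max_i x_i$ is $1$-Lipschitz with respect to $\|\cdot\|_\infty$, pointwise $|\max_i(b_i+z_i)-\max_i(b'_i+z_i)|\le\|\b-\b'\|_\infty=\max_i|b_i-b'_i|$; taking expectations bounds the first term by $\max_i|b_i-b'_i|$.

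For the second term, both Gaussian vectors $\x=\b'+\z$ with $\z\sim N(0,H)$ and $\x'=\b'+\w$ with $\w\sim N(0,H')$ share the common mean $\b'$, so I would invoke the quantitative Sudakov--Fernique inequality of Chatterjee~\cite{chatterjee2005error}: for Gaussian vectors in $\RB^K$ with a common mean, $|\E[\max_i x_i]-\E[\max_i x'_i]|\le\sqrt{\log K\cdot\max_{i,j}|\E(x_i-x_j)^2-\E(x'_i-x'_j)^2|}$. The canonical squared distances are insensitive to the common mean since $x_i-x_j=z_i-z_j$, giving $\E(x_i-x_j)^2=H_{ii}+H_{jj}-2H_{ij}$ and likewise $\E(x'_i-x'_j)^2=H'_{ii}+H'_{jj}-2H'_{ij}$. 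Hence $|\E(x_i-x_j)^2-\E(x'_i-x'_j)^2|\le|H_{ii}-H'_{ii}|+|H_{jj}-H'_{jj}|+2|H_{ij}-H'_{ij}|\le 4\max_{i,j}|H_{i,j}-H'_{i,j}|$, so the second term is at most $\sqrt{4\log K\cdot\max_{i,j}|H_{i,j}-H'_{i,j}|}=2\sqrt{\max_{i,j}|H_{i,j}-H'_{i,j}|\log K}$. Adding the two bounds gives the claim.

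The coupling argument and the expansion of $\E(x_i-x_j)^2$ in terms of the entries of $H$ are routine; the factor $4$ there is exactly what produces the stated constant $2$. The main point requiring care is the precise form and constant of Chatterjee's inequality, and checking it applies to non-centered Gaussian vectors with a common mean (equivalently, the reduction to the centered case) --- this is the substantive ingredient, and it is precisely what the paper attributes to~\cite{chatterjee2005error}.
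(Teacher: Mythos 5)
Your proposal is correct and follows essentially the same route as the paper: a triangle inequality through the intermediate law $N(\b',H)$, a coupling (shift) argument showing the mean perturbation contributes at most $\max_i|b_i-b_i'|$, and Chatterjee's quantitative Sudakov--Fernique bound~\cite{chatterjee2005error} for the covariance perturbation. The only cosmetic difference is that you derive the entrywise constant $2$ from the increment form of Chatterjee's inequality via $|\E(x_i-x_j)^2-\E(x'_i-x'_j)^2|\le 4\max_{i,j}|H_{i,j}-H'_{i,j}|$, whereas the paper cites Theorem 1.2 of~\cite{chatterjee2005error} directly in the entrywise form $2\sqrt{\gamma\log m}$.
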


\begin{algorithm}[ht!]
\begin{algorithmic}[1]
\FOR{$i=1$ \TO $\lceil 48(\log (K^2/\delta)+1)\rceil$}\label{alg:line:repeat}
    \FOR{$a=1$ \TO $K$}
    \STATE Pull the $a$'th arm $n$ times, and let matrix $X_a =
    \begin{bmatrix} 
    \x_{a,1}^\top \cdots \x_{a,n}^\top
    \end{bmatrix}^\top$ consists of the $n$ contexts, $\y_a = 
    \begin{bmatrix} 
    y_{a,1} \cdots y_{a,n}
    \end{bmatrix}^\top$ consists of the $n$ rewards.\label{alg:line:collect1}
    \STATE $\hat{b}_{a}^{(i)} \gets \bm{1}^T\y_a/n.$\label{alg:line:collect2}    \COMMENT{Estimate $b_a$.}
    \STATE $\y_a \gets \y_a-\hat{b}_a^{(i)}\bm{1}$.\label{alg:line:collect3} \COMMENT{Subtract $b_a$ off to make it zero mean.}
    \ENDFOR
    \IF{Known covariance} 
    \STATE $\hat{H}^{(i)}\gets$ \textbf{Algorithm~\ref{alg:Hiso}}$(\{X_a\}_{a=1}^K, \{\y_a\}_{a=1}^K)$.\label{alg:line:iso-est-H} \COMMENT{Corollary~\ref{cor:Hiso}}
    \ELSE
    \FOR{$a=1$ \TO $K$}
        \STATE $X_a\gets \begin{bmatrix} 
        \x_{a,1}^\top \cdots \x_{a,n/2}^\top
        \end{bmatrix}^\top$. 
        \STATE $\y_a\gets \begin{bmatrix} 
        y_{a,1}^\top \cdots y_{a,n/2}^\top
        \end{bmatrix}^\top$.
        \STATE $S_a \gets \begin{bmatrix} 
        \x_{a,n/2+1}^\top \cdots \x_{a,n}^\top
        \end{bmatrix}^\top$. \COMMENT{Split $\x$ into a labeled example set and an unlabeled example set}
    \ENDFOR
    \STATE $S\gets \begin{bmatrix} 
        S_1^\top \cdots S_K^\top
        \end{bmatrix}^\top$.
    \STATE{$\hat{H}^{(i)}\gets$ \textbf{Algorithm}~\ref{alg:Hgen}$ ( \{X_a\}_{i=a}^K, \{\y_a\}_{a=1}^K, S, p(x) $).} \COMMENT{Corollary~\ref{cor:Hgen}}
    \ENDIF\COMMENT{Estimate $H$.}
\ENDFOR
\STATE For all $1 \le i,j\le K$,\\
$\hat{H}_{i,j} \gets \textbf{median}(\hat{H}_{i,j}^{(1)},\ldots, \hat{H}_{i,j}^{(\lceil 48(\log (K^2/\delta)+1)\rceil)})$.\label{alg:line:median-H}
\STATE For all $1\le i\le K$,\\
$\hat{b}_i\gets \textbf{median}(\hat{b}_i^{(1)},\ldots, \hat{b}_i^{(\lceil 48(\log (K^2/\delta)+1)\rceil)})$.\label{alg:line:median-b}
\STATE $\hat{H}^{(PSD)} \gets \argmin_{M\succ 0}\max_{i,j}|\hat{H}_{i,j}-M_{i,j}|$ \COMMENT{Project onto the PSD cone under the max norm.}\label{alg:line:proj}
\STATE \textbf{Output}:$\E_{\r\sim N(\hat{\b},\hat{H}^{(PSD)})}[\max_i r_i]$. 
\caption{Main Algorithm for Estimating  $OPT$, the Optimal Reward [Corollary~\ref{cor:isotropic}, Corollary~\ref{cor:main-gen}]}\label{alg:main}
\end{algorithmic}
\end{algorithm}

We are ready to state our main theorem for the known covariance setting.
\begin{theorem}[Main theorem on Algorithm~\ref{alg:main}, known covariance setting]\label{thm:iso}
In the known covariance setting, with probability at least $1-\delta$, Algorithm~\ref{alg:main} estimates the expected reward of the optimal policy with error bounded as follows:
$$
|OPT-\widehat{OPT}| = O(\sqrt{\log K}(\frac{d+n}{n^2})^{1/4})
$$
\end{theorem}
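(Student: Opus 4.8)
The plan is to treat Theorem~\ref{thm:iso} as a routing of the already-established per‑coordinate guarantees on $\hat\b$ and $\hat H$ through the perturbation bound of Proposition~\ref{prop:bH2OPT}, which is precisely the tool that converts entrywise accuracy of the mean vector and covariance matrix into accuracy of $OPT=\E_{\r\sim N(\b,H)}[\max_i r_i]$ (here $H_{a,a'}=\beta_a^\top\Sigma\beta_{a'}$, which is a Gram matrix of the whitened parameters $\Sigma^{1/2}\beta_a$ and hence PSD, while the algorithm's output $\hat H^{(PSD)}$ is PSD by construction). Applying Proposition~\ref{prop:bH2OPT} with $\b'=\hat\b$ and $H'=\hat H^{(PSD)}$ gives
$$|OPT-\widehat{OPT}|\ \le\ 2\sqrt{\log K\cdot \max_{i,j}\bigl|H_{i,j}-\hat H^{(PSD)}_{i,j}\bigr|}\ +\ \max_i\bigl|b_i-\hat b_i\bigr|,$$
so the whole proof reduces to controlling these two entrywise errors with probability $1-\delta$.

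For the covariance term I would first absorb the PSD projection into a factor of two: since the true $H$ lies in the PSD cone and $\hat H^{(PSD)}$ minimizes the max-norm distance to $\hat H$ over that cone, $\max_{i,j}|\hat H^{(PSD)}_{i,j}-\hat H_{i,j}|\le \max_{i,j}|H_{i,j}-\hat H_{i,j}|$, and the triangle inequality then yields $\max_{i,j}|\hat H^{(PSD)}_{i,j}-H_{i,j}|\le 2\max_{i,j}|\hat H_{i,j}-H_{i,j}|$. Next, Corollary~\ref{cor:Hiso} says a single-round estimate of a fixed entry $H_{a,a'}$ is within $3\sqrt{(9d+3n)/n^2}\,\sigma_a\sigma_{a'}=O(\sqrt{(d+n)/n^2})$ with probability $\ge 2/3$ (using that the $\sigma_a$ are constant), and the median-of-$\Theta(\log(K/\delta))$-rounds boosting (Fact~\ref{fact:boosting}) upgrades this to probability $1-\delta/(K^2+K)$ per entry; likewise Corollary~\ref{cor:var-b} plus boosting gives $|\hat b_a-b_a|=O(\sqrt{1/n})$ with probability $1-\delta/(K^2+K)$ per arm. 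A union bound over the $\le K(K+1)/2$ entries of $\hat H$ and the $K$ entries of $\hat\b$ (at most $K^2+K$ events in all) yields, with probability $\ge 1-\delta$, the simultaneous bounds $\max_{i,j}|\hat H_{i,j}-H_{i,j}|=O(\sqrt{(d+n)/n^2})$ and $\max_i|\hat b_i-b_i|=O(\sqrt{1/n})$.

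Substituting these into the displayed inequality gives $|OPT-\widehat{OPT}|=O\bigl(\sqrt{\log K}\,((d+n)/n^2)^{1/4}\bigr)+O(\sqrt{1/n})$, and since $\sqrt{1/n}=n^{-1/2}\le n^{-1/4}\le ((d+n)/n^2)^{1/4}$ the additive $\hat\b$-term is dominated and the claimed bound $O(\sqrt{\log K}((d+n)/n^2)^{1/4})$ follows. The one technical point that needs more than bookkeeping is that Corollaries~\ref{cor:var-b} and~\ref{cor:Hiso} are phrased for exactly zero-mean rewards, whereas Algorithm~\ref{alg:main} feeds Algorithm~\ref{alg:Hiso} the residuals $y_{a,i}-\hat b_a^{(i)}$; one has to check that the leftover bias $b_a-\hat b_a^{(i)}=O(1/\sqrt n)$ perturbs $\hat H$ by only a lower-order amount (it enters as an $O(1/n)$-type correction, below the $\Omega(1/\sqrt n)$ floor of the target rate), so it does not affect the conclusion. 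I expect no genuine obstacle here — the hard content lives entirely in Proposition~\ref{prop:bH2OPT} (the quantitative Sudakov–Fernique/Chatterjee estimate), which we take as given — and the only things that need care are the factor-of-two PSD-projection step, the union-bound accounting, and this residual-bias remark.
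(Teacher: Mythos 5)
Your proposal is correct and follows essentially the same route as the paper's proof: boosting the per-entry guarantees of Corollary~\ref{cor:Hiso} and Corollary~\ref{cor:var-b} via Fact~\ref{fact:boosting}, a union bound over the $K^2+K$ entries, the factor-of-two PSD-projection argument, and then Proposition~\ref{prop:bH2OPT}, with the $O(\sqrt{1/n})$ bias term absorbed into the main rate. Your closing remark about the residual bias from subtracting $\hat b_a^{(i)}$ rather than $b_a$ is a point the paper handles only informally (it assumes $n$ large enough that the subtraction is exact), so flagging it is a reasonable addition rather than a deviation.
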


For the following main theorem on the general unknown covariance setting, the proof is identical to the proof of Theorem~\ref{thm:iso}. 
\begin{theorem}[Main theorem on Algorithm~\ref{alg:main}, unknown covariance setting]
\label{thm:unknownc}
In the unknown covariance setting, for any positive integer $k$, with probability $1-\delta$, Algorithm~\ref{alg:main} estimates the optimal reward OPT with additive error:
\begin{align*}
&|OPT-\widehat{OPT}| \le O\bigg(\sqrt{\log K} \big(\min(\frac{1}{k^2},e^{-(k-1)\sqrt{\frac{{\sigma_{min}}}{{\sigma_{max}}}}})\\
&+f(k)\max(\frac{d^{k/2}}{s^{k/2}},1)\sqrt{\frac{d+n}{n^2}}\Big)^{1/2}\bigg),
\end{align*}
where $f(k) = k^{O(k)}$.
\end{theorem}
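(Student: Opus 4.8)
The plan is to follow verbatim the template of the proof of Theorem~\ref{thm:iso}, substituting the unknown-covariance per-entry error bound of Corollary~\ref{cor:Hgen} for the known-covariance bound of Corollary~\ref{cor:Hiso}, and then plugging the resulting uniform $(\b,H)$-error into the stability estimate of Proposition~\ref{prop:bH2OPT}.

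First I would control the per-entry estimation error. Fix a pair $(a,a')$. After the sample-splitting step of Algorithm~\ref{alg:main} the unlabeled set has size $s=Kn/2$, so Corollary~\ref{cor:Hgen} gives, for a single repetition, with probability at least $2/3$,
\[
|\hat H^{(i)}_{a,a'}-H_{a,a'}| \le \min\!\Big(\tfrac{2}{k^2},\,2e^{-(k-1)\sqrt{\sigma_{\min}/\sigma_{\max}}}\Big) + f(k)\max\!\Big(\tfrac{d^{k/2}}{s^{k/2}},1\Big)\sqrt{\tfrac{d+n}{n^2}} =: \Delta_H,
\]
and Corollary~\ref{cor:var-b} gives $|\hat b^{(i)}_a-b_a|\le 3\sqrt{1/n}\,\sigma_a =: \Delta_b$, where $\Delta_b = O(\sqrt{1/n})$ and in particular $\Delta_b\le\Delta_H$ since $\max(d^{k/2}/s^{k/2},1)\ge 1$, $f(k)\ge 1$, and $\sqrt{(d+n)/n^2}\ge\sqrt{1/n}$. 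Next I invoke the boosting guarantee (Fact~\ref{fact:boosting}): running the estimation $\lceil 48(\log(K^2/\delta)+1)\rceil$ times and taking the coordinatewise median (lines~\ref{alg:line:median-H}--\ref{alg:line:median-b}) pushes the failure probability of each of the $K^2$ entries of $\hat H$ and each of the $K$ entries of $\hat b$ below $\delta/(K^2+K)$; a union bound over all $K^2+K$ entries then shows that with probability at least $1-\delta$, \emph{every} entry of $\hat\b$ and $\hat H$ simultaneously satisfies $|\hat b_a-b_a|\le\Delta_b$ and $|\hat H_{a,a'}-H_{a,a'}|\le\Delta_H$.

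For the PSD projection in line~\ref{alg:line:proj}: since $H$ is itself PSD (it is the covariance of $\r$), it is a feasible point of the max-norm projection that produces $\hat H^{(PSD)}$, so $\max_{i,j}|\hat H^{(PSD)}_{i,j}-\hat H_{i,j}|\le\max_{i,j}|H_{i,j}-\hat H_{i,j}|\le\Delta_H$, and the triangle inequality yields $\max_{i,j}|\hat H^{(PSD)}_{i,j}-H_{i,j}|\le 2\Delta_H$. Finally I apply Proposition~\ref{prop:bH2OPT} with $(\b,H)$ the true parameters and $(\hat\b,\hat H^{(PSD)})$ the estimates (both PSD), obtaining
\[
|OPT-\widehat{OPT}| \le 2\sqrt{2\Delta_H\log K} + \Delta_b = O\!\Big(\sqrt{\log K}\,\Delta_H^{1/2}\Big),
\]
where the last step absorbs $\Delta_b=O(\sqrt{1/n})$ into the first term using $\Delta_b\le\Delta_H$ and $\sqrt{\log K}\ge n^{-1/4}$ (for $n\ge 1$). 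Expanding $\Delta_H$ gives exactly the claimed bound.

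The main obstacle here is almost entirely bookkeeping rather than new mathematics: one must check that the repetition count $\lceil 48(\log(K^2/\delta)+1)\rceil$ hardwired into Algorithm~\ref{alg:main} is precisely what Fact~\ref{fact:boosting} requires to drive the per-entry failure probability from $1/3$ down to $\delta/(K^2+K)$, and that the $O(\sqrt{1/n})$ bias error and the factor-$2$ loss from the PSD projection really are dominated and can be swept into the $O(\cdot)$. All the genuinely hard analysis — the variance bounds underlying Corollary~\ref{cor:Hgen} (Proposition~\ref{prop:poly}) and the quantitative Sudakov--Fernique estimate underlying Proposition~\ref{prop:bH2OPT} — has already been established, so this proof is just the assembly step, identical in structure to that of Theorem~\ref{thm:iso}.
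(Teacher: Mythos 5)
Your proposal is correct and follows essentially the same route as the paper: boost the per-entry guarantees of Corollary~\ref{cor:Hgen} (with $s=Kn/2$) and Corollary~\ref{cor:var-b} via Fact~\ref{fact:boosting} and a union bound, lose a factor $2$ through the optimality/triangle-inequality argument for the PSD projection, and conclude with Proposition~\ref{prop:bH2OPT}, absorbing the $O(\sqrt{1/n})$ bias term. This matches the paper's proof, which is itself the same assembly as for Theorem~\ref{thm:iso} with Corollary~\ref{cor:Hiso} replaced by Corollary~\ref{cor:Hgen}.
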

Choosing the optimal $k$ in Theorem~\ref{thm:unknownc} yields the following Corollary~\ref{cor:unknown} on the overall sample complexity in the unknown covariance setting.
\begin{cor}\label{cor:unknown}
For any $\eps>\frac{\sqrt{\log K}}{d^{1/4}}$, with probability $1-\delta$, Algorithm 1 estimates the optimal reward OPT with additive error $\eps$ using a total number of 
\begin{align*}
T =& \Theta\Big(\log(K/\delta)\max(k^{O(1)}d^{1-1/k}K^{2/k}, \frac{k^{O(k)}K \log K\sqrt{d}}{\eps^2})\Big)
\end{align*}
samples, where $k = \min(C_1\sqrt{\log K}/\eps+2, \sqrt{\frac{\sigma_{\max}}{\sigma_{\min}}}(\log(\log K/\eps^2)+C_2))$ for universal constants $C_1, C_2$.
\end{cor}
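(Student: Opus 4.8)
The plan is to treat the error bound of Theorem~\ref{thm:unknownc} as a black box and optimize over its two free parameters: the integer $k$ and the per-arm, per-repetition sample count $n$. Recall that in Algorithm~\ref{alg:main} the unknown-covariance branch splits each arm's $n$ samples into a labeled half and an unlabeled half, so the unlabeled pool has size $s = Kn/2$ (as noted after Corollary~\ref{cor:Hgen}), the outer loop is run $\Theta(\log(K/\delta))$ times, and the total sample budget is $T = \Theta(Kn\log(K/\delta))$. Substituting $s = Kn/2$, Theorem~\ref{thm:unknownc} reads
$$
|OPT-\widehat{OPT}| \le O\!\left(\sqrt{\log K}\,\Bigl(\min\bigl(\tfrac{1}{k^2},\,e^{-(k-1)\sqrt{\sigma_{\min}/\sigma_{\max}}}\bigr) + k^{O(k)}\max\bigl((\tfrac{2d}{Kn})^{k/2},1\bigr)\sqrt{\tfrac{d+n}{n^2}}\Bigr)^{1/2}\right).
$$
Requiring the right-hand side to be at most $\eps$ and squaring, it suffices that each of the two summands inside the square root be at most $c\,\eps^2/\log K$ for a suitable absolute constant $c$; I will call them the \emph{bias term} and the \emph{variance term} and impose the two constraints in turn.

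\emph{Choosing $k$.} The bias term depends only on $k$, and since it is a minimum it is controlled as soon as \emph{either} $1/k^2 \le c\,\eps^2/\log K$ (which holds once $k = \Omega(\sqrt{\log K}/\eps)$) \emph{or} $e^{-(k-1)\sqrt{\sigma_{\min}/\sigma_{\max}}} \le c\,\eps^2/\log K$ (which holds once $k = \Omega(1 + \sqrt{\sigma_{\max}/\sigma_{\min}}\log(\log K/\eps^2))$). Hence the smallest admissible value is precisely $k^\star = \min(C_1\sqrt{\log K}/\eps + 2,\ \sqrt{\sigma_{\max}/\sigma_{\min}}(\log(\log K/\eps^2)+C_2))$, which is the $k$ appearing in the statement. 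I would then note that the variance term is monotone increasing in $k$ (both the prefactor $k^{O(k)}$ and $(2d/(Kn))^{k/2}$ grow with $k$), so the choice $k=k^\star$ is simultaneously optimal for the full bound; this monotonicity is what justifies picking the smallest feasible $k$.

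\emph{Choosing $n$.} Fixing $k = k^\star$, it remains to choose the smallest $n$ with
$$
k^{O(k)}\max\!\bigl((\tfrac{2d}{Kn})^{k/2},\,1\bigr)\sqrt{\tfrac{d+n}{n^2}} \le c\,\frac{\eps^2}{\log K}.
$$
The hypothesis $\eps \ge \sqrt{\log K}/d^{1/4}$ enters here: it forces $n = O(d)$ in the relevant regime, so $\sqrt{(d+n)/n^2} = \Theta(\sqrt{d}/n)$, and it is precisely the condition under which the two-term expression in the statement is the right answer (for smaller $\eps$ the known-covariance-type term would exceed the trivial $O(d)$ bound). I split into two cases. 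If $Kn \ge 2d$, the $\max$ equals $1$ and the inequality reduces to $n = \Omega(k^{O(k)}\sqrt{d}\log K/\eps^2)$, contributing the term $\tfrac{k^{O(k)}K\sqrt{d}\log K}{\eps^2}$ to $T$. If $Kn < 2d$, writing $u = Kn$ the inequality becomes $k^{O(k)}(2d/u)^{k/2}K\sqrt{d}/u \le c\,\eps^2/\log K$; solving for $u$ gives $u = \Omega\bigl((k^{O(k)}d^{(k+1)/2}K\log K/\eps^2)^{2/(k+2)}\bigr)$, and using $k^\star \ge 2$ one checks $(\log K/\eps^2)^{2/(k^\star+2)} = O(1)$ and $k^{O(k)\cdot 2/(k+2)} = k^{O(1)}$, so this is $u = \Omega(k^{O(1)} d^{1-1/k}K^{2/k})$ up to the harmless $k$-versus-$(k+2)$ discrepancy in the exponents, i.e. the term $k^{O(1)}d^{1-1/k}K^{2/k}$ in $T$. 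Taking the larger of the two requirements on $n$ and multiplying by $\log(K/\delta)$ produces the stated $T$; specializing to constant $\eps,\delta$ and constant $\sigma_{\max}/\sigma_{\min}$ (so $k^\star = \Theta(\log\log K)$) recovers Corollary~\ref{cor:main-gen} as a sanity check.

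\emph{Main obstacle.} The delicate step is the $n$-optimization bookkeeping: one must track how the super-polynomial factor $k^{O(k)}$, the $\log K$ and $\eps$ factors, and the (implicit) magnitude $k^\star \asymp \min(\sqrt{\log K}/\eps,\ \sqrt{\sigma_{\max}/\sigma_{\min}}\log(\log K/\eps^2))$ interact when the inequality $(2d/(Kn))^{k/2}\sqrt{d}/n \le c\,\eps^2/\log K$ is solved for $n$, and then verify that everything that is not of the form $d^{1-1/k}$, $K^{2/k}$, $k^{O(1)}$, or $\log(K/\delta)$ really does collapse to a constant — in particular that $(\log K/\eps^2)^{2/(k^\star+2)} = O(1)$ in both branches of the definition of $k^\star$ (which uses $k^\star\ge 2$ and, in the exponential branch, $\sigma_{\max}/\sigma_{\min}\ge 1$). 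Getting the case boundary $Kn = 2d$ right, so that exactly the two advertised terms appear in the final $\max$ and no cross term survives, is the other place where care is needed.
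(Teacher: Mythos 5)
Your plan is correct and follows essentially the same route as the paper's own proof: choose $k$ to drive the $\min(1/k^2, e^{-(k-1)\sqrt{\sigma_{\min}/\sigma_{\max}}})$ bias term below $\eps^2/\log K$, then solve the variance constraint for $n$ in the two regimes of the $\max$ with $s=Kn/2$, using $\eps\ge\sqrt{\log K}/d^{1/4}$ and the choice of $k$ to absorb the $(\log K/\eps^2)^{2/(k+2)}$ factor, and finally set $T=\Theta(Kn\log(K/\delta))$. Even the $k$-versus-$(k+2)$ exponent slack you flag is present in the paper (its proof derives $d^{1-1/(k+2)}K^{2/(k+2)}$ while the corollary states $d^{1-1/k}K^{2/k}$), so no genuine gap remains.
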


In the next two sections, we describe our estimators for $H$ in both known and unknown covariance settings.

\subsection{Estimating $H$ in the Known Covariance Setting}
In this section, we show that the output of Algorithm~\ref{alg:Hiso} satisfies Proposition~\ref{prop:Hkk-iso} and Proposition~\ref{prop:Hkkp-iso}. As stated earlier, we assume $\Sigma=I$ and $\E[\x]=0$ in this section.


\begin{algorithm}[ht]
\begin{algorithmic}[1]
\STATE \textbf{Input}: $X_1 = 
\begin{bmatrix} 
\x_{1,1}\\
\vdots\\
\x_{1,n}
\end{bmatrix},\ldots, X_K = 
\begin{bmatrix} 
\x_{K,1}\\
\vdots\\
\x_{K,n}
\end{bmatrix},\quad 
\y_1 = 
\begin{bmatrix} 
y_{1,1}\\
\vdots\\
y_{1,n}
\end{bmatrix},\ldots,\y_K = 
\begin{bmatrix} 
y_{K,1}\\
\vdots\\
y_{K,n}
\end{bmatrix}$
\FOR{$a=1$ \TO $K$}
\STATE $A \gets (X_aX_a^T)_{up}$ where $(X_aX_a^T)_{up}$ is the matrix $X_aX_a^T$ with the diagonal and lower triangular entries set to zero.
\STATE $\hat{H}_{a,a} \gets \y_a^TA_{up}\y_a/\binom{n}{2}$.
\FOR{$a'=a+1$ \TO $K$}
\STATE $\hat{H}_{a,a'} \gets \y_a^TX_aX_{a'}^T\y_{a'}/\binom{n}{2}$.
\STATE $\hat{H}_{a',a} \gets \hat{H}_{a,a'}$.
\ENDFOR
\ENDFOR
\STATE \textbf{Output}: $\hat{H}$.
\caption{Estimating $\beta_a^T\beta_{a'}$, Identity covariance [Proposition~\ref{prop:Hkk-iso}, Proposition~\ref{prop:Hkkp-iso}]}\label{alg:Hiso}
\end{algorithmic}
\end{algorithm}




To bound the estimation error of $H$, first observe that  $\hat{H}_{a,a}\frac{\y_a^TA_{up}\y_a}{\binom{n}{2}}$ computed in Algorithm~\ref{alg:Hiso} is equal to $\frac{1}{\binom{n}{2}}\sum_{i<j} y_{a,i}y_{a,j}\x_{a,i}^T\x_{a,j}$. The following proposition on the estimation error of $\hat{H}_{a,a}$ is a restatement of Proposition 4 in~\cite{kong2018estimating}.
\begin{proposition}[Restatement of Proposition 4 in~\cite{kong2018estimating}]\label{prop:Hkk-iso}
For each arm $a$, define $\hat{H}_{a,a} = \frac{1}{\binom{n}{2}}\sum_{i<j} y_{a,i}y_{a,j}\x_{a,i}^T\x_{a,j}$ and $H_{a,a} = \beta_a^T\beta_a$. Then $\E[\hat{H}_{a,a}] = H_{a,a}$ and
$
\E[(\hat{H}_{a,a}-H_{a,a})^2] \le \frac{9d+3n}{n^2}\sigma_a^4
$.
\end{proposition}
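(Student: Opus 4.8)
This proposition is the single-arm special case of Proposition~4 of \cite{kong2018estimating}: take covariates $\x_{a,1},\dots,\x_{a,n}\stackrel{iid}{\sim}N(0,I_d)$, responses $y_{a,i}=\beta_a^\top\x_{a,i}+\eta_{a,i}$ with the $\eta_{a,i}$ independent, zero-mean, and $\E[\eta_{a,i}^2]\le\sigma_a^2$, and note that in this setting $\sigma_a^2=\Var[y_{a,i}]=\|\beta_a\|^2+\E[\eta_{a,i}^2]$. So the plan is essentially to invoke that result; for completeness I would reproduce its argument, which has two parts.

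\emph{Unbiasedness.} For a fixed pair $i\ne j$, expand $y_{a,i}y_{a,j}\,\x_{a,i}^\top\x_{a,j}=(\beta_a^\top\x_{a,i}+\eta_{a,i})(\beta_a^\top\x_{a,j}+\eta_{a,j})\,\x_{a,i}^\top\x_{a,j}$. Since $\eta_{a,i}$ and $\eta_{a,j}$ are independent of the covariates and of each other and have mean zero, every term containing an $\eta$ has zero expectation, so the expectation equals $\E[\beta_a^\top\x_{a,i}\x_{a,i}^\top\x_{a,j}\x_{a,j}^\top\beta_a]=\beta_a^\top\E[\x_{a,i}\x_{a,i}^\top]\,\E[\x_{a,j}\x_{a,j}^\top]\,\beta_a=\|\beta_a\|^2$, using independence of $\x_{a,i},\x_{a,j}$ and $\E[\x\x^\top]=I_d$. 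Averaging over the $\binom{n}{2}$ pairs gives $\E[\hat H_{a,a}]=H_{a,a}$.

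\emph{Variance.} Write $u_{ij}:=y_{a,i}y_{a,j}\,\x_{a,i}^\top\x_{a,j}$ and $c:=\|\beta_a\|^2$, so $\hat H_{a,a}-H_{a,a}=\binom{n}{2}^{-1}\sum_{i<j}(u_{ij}-c)$; squaring and taking expectations produces a double sum over pairs of index-pairs that I would split by the size of their overlap. When the two index-pairs are disjoint the corresponding factors are independent, so the cross term vanishes; this leaves the $\binom{n}{2}$ ``diagonal'' terms, each contributing $\Var(u_{ij})$, and the $O(n^3)$ terms sharing exactly one index, each contributing $\Cov(u_{ij},u_{ik})$. Both quantities are evaluated by first conditioning on the covariates --- which is essential since $y_{a,i}$ need not be Gaussian, only $\x_{a,i}$ is --- and then computing fourth-order Gaussian moments; the identities one needs are $\E[(a^\top\x)^2(b^\top\x)^2]=\|a\|^2\|b\|^2+2(a^\top b)^2$ and $\E[(\beta^\top\x)^2\|\x\|^2]=\|\beta\|^2(d+2)$ for $\x\sim N(0,I_d)$. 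One obtains $\Var(u_{ij})=O(d)\,\sigma_a^4$ and $\Cov(u_{ij},u_{ik})=O(1)\,\sigma_a^4$, where each monomial in $\|\beta_a\|^2$ and $\E[\eta_{a,i}^2]$ is bounded by a constant multiple of $\sigma_a^4$ via $\|\beta_a\|^2\le\sigma_a^2$, $\E[\eta_{a,i}^2]\le\sigma_a^2$ and the AM--GM inequality. Multiplying by the combinatorial counts and the $\binom{n}{2}^{-2}$ normalization yields a bound of the form $O(d/n^2+1/n)\,\sigma_a^4$, and carrying the constants exactly as in \cite{kong2018estimating} gives the stated $\tfrac{9d+3n}{n^2}\sigma_a^4$.

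\emph{Main obstacle.} The conceptual content --- the U-statistic / Hoeffding-type decomposition, the vanishing of disjoint-pair and odd-$\eta$ terms --- is routine; the only real work is the exact bookkeeping of $\Var(u_{ij})$ and $\Cov(u_{ij},u_{ik})$ and the bounding of each resulting term by the right multiple of $\sigma_a^4$ so that the coefficients $9$ and $3$ emerge, precisely as in the proof of Proposition~4 of \cite{kong2018estimating}.
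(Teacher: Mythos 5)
Your proposal matches the paper exactly: the paper also treats this as a restatement of Proposition~4 of~\cite{kong2018estimating} and gives no independent proof, and your sketched argument (unbiasedness from zero-mean noise and identity covariance, then a variance bound via splitting the double sum by index overlap, conditioning on the covariates, and Gaussian fourth-moment identities) is the same structure the paper itself uses when it proves the analogous off-diagonal bound in Proposition~\ref{prop:Hkkp-iso}. No gaps; the only deferred detail, the exact constants $9$ and $3$, is deferred to the cited work in the paper as well.
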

The estimate $\hat{H}_{a,a'} = \y_a^TX_aX_{a'}^T\y_{a'}/\binom{n}{2}$ computed in Algorithm~\ref{alg:Hiso} is equivalent to $\frac{1}{n^2}\sum_{i,j} y_{a,i}y_{a',j}\x_{a,i}^T\x_{a',j}$, and the following proposition bounds the estimation error of $\hat{H}_{a,a'}$ for $a\ne a'$.
\begin{proposition}\label{prop:Hkkp-iso}
For a pair of arms $a,a'$, define $\hat{H}_{a,a'} = \frac{1}{n^2}\sum_{i,j} y_{a,i}y_{a',j}\x_{a,i}^T\x_{a',j}$ and $H_{a,a'} = \beta_a^T\beta_{a'}$. Then $\E[\hat{H}_{a,a'}] = H_{a,a'}$ and $\E[(\hat{H}_{a,a'}-H_{a,a'})^2]\le \frac{9d+3n}{n^2}\sigma_a^2\sigma_{a'}^2$.
\end{proposition}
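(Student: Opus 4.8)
The key simplification is that, since $a \neq a'$, the samples drawn for arm $a$ are independent of those for arm $a'$, so the estimator is a product of two independent empirical means. Set $Z_{a,i} := y_{a,i}\,\x_{a,i}\in\RB^d$ and $\bar Z_a := \tfrac1n\sum_{i=1}^n Z_{a,i}$; then $\hat H_{a,a'} = \bar Z_a^\top \bar Z_{a'}$ with $\bar Z_a \indep \bar Z_{a'}$. Because $\x_{a,i}\sim N(0,I_d)$, $\eta_{a,i}\indep \x_{a,i}$, and $\E[\eta_{a,i}]=0$, we get $\E[Z_{a,i}] = \E[\x_{a,i}\x_{a,i}^\top]\beta_a = \beta_a$, hence $\E[\bar Z_a]=\beta_a$ and, by independence, $\E[\hat H_{a,a'}] = \beta_a^\top\beta_{a'}=H_{a,a'}$, which is the unbiasedness claim.

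For the variance I would first compute the per-sample second-moment matrix. Expanding $y_{a,i}^2$ and using the Gaussian fourth-moment identity $\E[(\beta_a^\top\x)^2\,\x\x^\top] = \|\beta_a\|^2 I_d + 2\beta_a\beta_a^\top$, together with $\E[\eta_{a,i}]=0$ (which kills the cross term and makes $\E[\eta_{a,i}^2\x_{a,i}\x_{a,i}^\top]=\Var[\eta_{a,i}]I_d$), gives $\E[Z_{a,i}Z_{a,i}^\top] = \E[y_{a,i}^2\,\x_{a,i}\x_{a,i}^\top] = \sigma_a^2 I_d + 2\beta_a\beta_a^\top$ (recall $\sigma_a^2 = \|\beta_a\|^2 + \Var[\eta_{a,i}]$), so $M_a := \cov(Z_{a,i}) = \sigma_a^2 I_d + \beta_a\beta_a^\top$ and $\cov(\bar Z_a) = \tfrac1n M_a$. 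Writing $\bar Z_a = \beta_a + U$, $\bar Z_{a'} = \beta_{a'} + V$ with $U\indep V$ both mean zero, we have $\hat H_{a,a'} - H_{a,a'} = \beta_a^\top V + \beta_{a'}^\top U + U^\top V$, and the three summands are pairwise uncorrelated (by independence and the zero-mean property), so
\begin{align*}
\E\big[(\hat H_{a,a'}-H_{a,a'})^2\big] &= \beta_a^\top\cov(V)\beta_a + \beta_{a'}^\top\cov(U)\beta_{a'} + \tr\big(\cov(U)\cov(V)\big)\\
&= \tfrac1n\big(\beta_a^\top M_{a'}\beta_a + \beta_{a'}^\top M_a\beta_{a'}\big) + \tfrac1{n^2}\tr(M_a M_{a'}).
\end{align*}

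It then remains to plug in $\beta_a^\top M_{a'}\beta_a = \sigma_{a'}^2\|\beta_a\|^2 + (\beta_a^\top\beta_{a'})^2$ (and the symmetric term) and $\tr(M_aM_{a'}) = d\,\sigma_a^2\sigma_{a'}^2 + \sigma_{a'}^2\|\beta_a\|^2 + \sigma_a^2\|\beta_{a'}\|^2 + (\beta_a^\top\beta_{a'})^2$, and bound everything crudely via $\|\beta_a\|^2\le\sigma_a^2$, $\|\beta_{a'}\|^2\le\sigma_{a'}^2$, and Cauchy--Schwarz $(\beta_a^\top\beta_{a'})^2\le\|\beta_a\|^2\|\beta_{a'}\|^2\le\sigma_a^2\sigma_{a'}^2$; this collapses the right-hand side to at most $\frac{9d+3n}{n^2}\sigma_a^2\sigma_{a'}^2$, as claimed.

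The argument is essentially routine and is in fact simpler than the diagonal case of Proposition~\ref{prop:Hkk-iso}, since the cross-arm independence replaces a $U$-statistic variance bound by this product-of-independent-means computation. The only point requiring a little care is the $U^\top V$ term, whose second moment is $\E[U^\top V V^\top U] = \tr(\cov(U)\cov(V))$ — obtained by conditioning on $U$, using $\E[V]=0$ for the inner expectation and $\E[U]=0$ to discard the $\E[U]^\top\cov(V)\E[U]$ piece — and it is this term that produces the dominant $d/n^2$ contribution; the rest is bookkeeping with the second and fourth Gaussian moments of $\x$.
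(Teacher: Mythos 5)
Your approach is correct and genuinely different from the paper's. You exploit that $\hat H_{a,a'}=\bar Z_a^\top\bar Z_{a'}$ is a product of two \emph{independent} empirical means, decompose the error as $\beta_a^\top V+\beta_{a'}^\top U+U^\top V$ with the three terms pairwise uncorrelated, and compute the variance exactly; the paper instead expands $\E[(\hat H_{a,a'}-H_{a,a'})^2]$ as a sum over quadruples $(i,j,i',j')$ and bounds each of the three nonzero cases via the Gaussian fourth-moment identity. I checked your closed form, $\frac{1}{n}\big(\beta_a^\top M_{a'}\beta_a+\beta_{a'}^\top M_a\beta_{a'}\big)+\frac{1}{n^2}\tr(M_aM_{a'})$ with $M_a=\sigma_a^2I+\beta_a\beta_a^\top$, against the term-by-term expansion and it agrees; your route is cleaner and sharper (it gives an equality rather than case-by-case upper bounds), at the cost of using the product structure that is specific to the off-diagonal case, whereas the paper's case analysis is the template it reuses for the diagonal entries and for the unknown-covariance estimators. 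As in the paper, you implicitly need $\eta_{a,i}$ independent of (or conditionally centered and variance-bounded given) $\x_{a,i}$; that is not a gap relative to the paper.

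The one step that does not go through as written is the final ``collapse'': your crude bounds give $\frac{1}{n}\cdot 4\sigma_a^2\sigma_{a'}^2+\frac{d+3}{n^2}\sigma_a^2\sigma_{a'}^2=\frac{4n+d+3}{n^2}\sigma_a^2\sigma_{a'}^2$, and $4n+d+3\le 3n+9d$ only when $n\le 8d-3$, so the claimed bound $\frac{3n+9d}{n^2}\sigma_a^2\sigma_{a'}^2$ does not follow for all $n,d$. In fact your exact formula shows the stated constant cannot hold in general: with $\eta\equiv 0$ and $\beta_a=\beta_{a'}$ the variance equals $\frac{4n+d+3}{n^2}\sigma_a^2\sigma_{a'}^2$, which exceeds $\frac{3n+9d}{n^2}\sigma_a^2\sigma_{a'}^2$ once $n>8d-3$; the paper's proof reaches its constant only because it reuses the quadruple counts $4\binom{n}{3}$ and $\binom{n}{2}$ from the diagonal ($i<j$) estimator, whereas the off-diagonal sum runs over all $n^2$ ordered pairs, giving roughly $2n^3$ and $n^2$ quadruples and hence $\frac{6n+9d}{n^2}$. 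This is a constant-level issue, not a flaw in your method: in the regime where the proposition is invoked ($n=O(\sqrt d\log K/\eps^2)\le d$ given $\eps\ge \sqrt{\log K}/d^{1/4}$) both bounds hold, and only the rate $O\big(\frac{d+n}{n^2}\big)\sigma_a^2\sigma_{a'}^2$ is used downstream. You should either state your conclusion as, say, $\frac{4n+9d}{n^2}\sigma_a^2\sigma_{a'}^2$, or add the restriction $n\le 8d-3$, rather than assert the literal constant of the proposition.
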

\begin{proof}
We need the following fact about the $4$-th moment of Gaussian distribution in the proof of this proposition.
\begin{fact}\label{fact:gaussian4mom}
Let $\x\sim N(0,I_d)$. $\E[(\u^T\x)^2(\v^T\x)^2] = \|\u\|^2\|\v\|^2+2(\u^T\v)^2$
\end{fact}

It's easy to verify that $\E[\hat{H}_{a,a'}] = H_{a,a'}$. $\E[(\hat{H}_{a,a'}-H_{a,a'})^2]$ can be expressed as \begin{align*}
&\frac{1}{n^4}\sum_{i,j,i',j'}(\E[y_{a,i}y_{a',j}y_{a,i'}y_{a',j'}\x_{a,i}^T\x_{a',j}\x_{a,i'}^T\x_{a',j'}]\\
&-\E[y_{a,i}y_{a',j}\x_{a,i}^T\x_{a',j}]\E[y_{a,i'}y_{a',j'}\x_{a,i'}^T\x_{a',j'}]).\end{align*}
For each term in the summation, we classify it into one of the $3$ different cases according to $i,j,i',j'$:
\begin{enumerate}
\item If $i\ne i'$ and $j\ne j'$, the term is $0$.
\item If $i=i'$ and $j\ne j'$, the term can then be expressed as:
$\E[y_{a,i}^2y_{a',j}y_{a',j'}\x_{a,i}^T \x_{a',j} \x_{a,i}^T \x_{a',j'}] - (\beta_a^T\beta_{a'})^2
= \E[y_{a,i}^2(\beta_{a'}^T \x_{a,i})^2]- (\beta_{a}^T\beta_{a'})^2 \le \E[(\beta_a\x_{a,i})^2(\beta_{a'}\x_{a,i})^2]+\sigma^2\|\beta_{a'}\|^2\le 3\sigma_{a}^2\sigma_{a'}^2$. The last equality follows from Fact~\ref{fact:gaussian4mom}.
\item If $i\ne i'$ and $j=j'$, this case is symmetric to the last case and $3\sigma_{a}^2\sigma_{a'}^2$ is an upper bound.
\item If $i=i'$ and $j=j'$, the term can then be expressed as:
$\E[y_{a,i}^2y_{a',j}^2(\x_{a,i}^T \x_{a',j})^2] - (\beta_a^T\beta_{a'})^2$. First taking the expectation over $\x_{a',j},y_{a',j}$, we get the following upper bound 
$3\E[y_{a,i}^2(\x_{a,i}^T\x_{a,i})]\sigma_{a'}^2$. Notice that $\x_{a,i}^T\x_{a,i}=\sum_{l=1}^d(\e_j^T\x_{a,i})^2$. Taking the expectation over the $i$th sample and applying the fourth moment condition of $\x$, we get the following bound:
$
9d\sigma_a^2\sigma_{a'}^2.
$
\end{enumerate}
The final step is to sum the contributions of these $3$ cases. Case $2$ and $3$ have $4\binom{n}{3}$ different quadruples $(i,j,i',j')$. Case $4$ has $\binom{n}{2}$ different quadruples $(i,j,i',j')$. Combining the resulting bounds yields:
$\frac{1}{n^4}\sum_{i,j,i',j'}(\E[y_{a,i}y_{a',j}y_{a,i'}y_{a',j'}\x_{a,i}^T\x_{a',j}\x_{a,i'}^T\x_{a',j'}]-\E[y_{a,i}y_{a',j}\x_{a,i}^T\x_{a',j}]\E[y_{a,i'}y_{a',j'}\x_{a,i'}^T\x_{a',j'}])\le \frac{3n+9d}{n^2}\sigma_a^2\sigma_{a'}^2.
$
\end{proof}

\subsection{Estimating $H$ in the Unknown Covariance Setting}
In this section, we present the algorithm for estimating $H$ in the unknown covariance setting and its main proposition. We assume each context $\x_{a,i}$ of the input of Algorithm~\ref{alg:Hgen} is drawn from $N(0,\Sigma)$.
\begin{algorithm}[ht]
\begin{algorithmic}[1]
\STATE \textbf{Input}: $X_1 = 
\begin{bmatrix} 
\x_{1,1}\\
\vdots\\
\x_{1,n}
\end{bmatrix},\ldots, X_K = 
\begin{bmatrix} 
\x_{K,1}\\
\vdots\\
\x_{K,n}
\end{bmatrix},$ 
$\y_1 = 
\begin{bmatrix} 
y_{1,1}\\
\vdots\\
y_{1,n}
\end{bmatrix},\ldots,\y_K = 
\begin{bmatrix} 
y_{K,1}\\
\vdots\\
y_{K,n}
\end{bmatrix}$, unlabeled examples $X = \begin{bmatrix} 
\x_{1}\\
\vdots\\
\x_{s}
\end{bmatrix}$ and degree $k+2$ polynomial $p(x)=\sum_{i=0}^{k} a_i x^{i+2}$ that approximates the function $f(x)=x$ for all $x\in [\sigma_{\min},\sigma_{\max}],$ where $\sigma_{\min}$ and $\sigma_{\max}$ are the minimum and maximum singular values of the covariance of the distribution from which the $\x_i$'s are drawn.
\STATE $G \gets (X X^T)_{up}$ where $(X X^T)_{up}$ is the matrix $X X^T$ with the diagonal and lower triangular entries set to zero.
\STATE $P \gets a_0I_d + \sum_{t=1}^k \frac{a_t}{\binom{s}{t}}X^TG^{t-1}X$.
\FOR{$i=1$ \TO $m$}
\STATE $\hat{H}_{a,a} \gets \y_a^T(X_aPX_a^T)_{up}\y_a / \binom{n}{2}$.
\FOR{$a'=a+1$ \TO $m$}
     \STATE $\hat{H}_{a,a'} \gets \y_a^TX_aPX_{a'}^T\y_{a'}/n^2$.
\ENDFOR
\ENDFOR
\STATE \textbf{Output}: $\hat{H}$.
\caption{Estimating $\beta_a^T\Sigma\beta_{a'}$, General covariance [Proposition~\ref{prop:poly}]}\label{alg:Hgen}
\label{alg:gen}
\end{algorithmic}
\end{algorithm}



The following is the main proposition for Algorithm~\ref{alg:Hgen}. Note  $\frac{1}{\binom{n}{2}}\y_a^T(X_aX_a^T)_{up}\y_a$ is an unbiased estimator of $\beta_a^T\Sigma^{2}\beta_a$, and $\frac{1}{n^2}\y_a^TX_aX_{a'}^T\y_{a'}$ is an unbiased estimator of $\beta_a^T\Sigma^{2}\beta_{a'}$. For any $t\ge 1$, $\frac{1}{\binom{n}{2}}\y_a^T(X_a\frac{X^TG^{t-1}X}{\binom{s}{t}}X_a^T)_{up}\y_a$ is an unbiased estimator of $\beta_a^T\Sigma^{2+t}\beta_a$, and $\frac{1}{n^2}\y_a^TX_a\frac{X^TG^{t-1}X}{\binom{s}{t}}X_{a'}^T\y_{a'}$ is an unbiased estimator of $\beta_a^T\Sigma^{2+t}\beta_{a'}$. Proposition 3 of~\cite{kong2018estimating} provides a degree $k$ polynomial with approximation error $\min(\frac{2}{k^2},2e^{-1(k-1)\sqrt{\frac{{\sigma_{min}}}{{\sigma_{max}}}}})$ in the interval $[\sigma_{\min},\sigma_{\min}]$. Given accurate estimation of $\beta_a^T\Sigma^{2}\beta_{a'}, \beta_a^T\Sigma^{3}\beta_{a'}, \beta_a^T\Sigma^{4}\beta_{a'},\ldots$, one can linearly combine these estimates to approximate $\beta_a\Sigma\beta$ where the coefficients correspond to the coefficients of $x^2,x^3,x^4,\ldots$ in the polynomial provided by Proposition 3 of~\cite{kong2018estimating}. We plug in such a polynomial to Algorithm~\ref{alg:Hgen} and obtain the following proposition on the approximation of diagonal entry $H_{a,a} = \beta_a\Sigma\beta_{a}$ and off-diagonal entry $H_{a,a'} = \beta_a\Sigma\beta_{a'}$.
\begin{proposition}\label{prop:poly}
Let $p(x)$ be a degree $k+2$ polynomial $p(x)=\sum_{i=0}^{k} a_i x^{i+2}$ that approximates the function $f(x)=x$ for all $x\in [\sigma_{\min},\sigma_{\max}],$ where $\sigma_{\min}I_d\preceq \Sigma\preceq I_d\sigma_{\max}$. Let $P = a_0I_d + \sum_{t=1}^k \frac{a_t}{\binom{s}{t}}X^TG^{t-1}X$ be the matrix $P$ defined in Algorithm~\ref{alg:Hgen}. We have that for any $a\ne a'$,
\begin{align*}
&\E[(\frac{\y_a^T(X_aPX_a^T)_{up}\y_a}{\binom{n}{2}}-\beta_a\Sigma\beta_a)^2]\\
&\le \min(\frac{4}{k^4},4e^{-2(k-1)\sqrt{\frac{{\sigma_{min}}}{{\sigma_{max}}}}})+f(k)\max(\frac{d^{k}}{s^{k}},1)\frac{d+n}{n^2}, 
\end{align*}
and 
\begin{align*}
&\E[(\frac{\y_a^TX_aPX_{a'}^T\y_{a'}}{n^2}-\beta_a\Sigma\beta_{a'})^2]\\
&\le \min(\frac{4}{k^4},4e^{-2(k-1)\sqrt{\frac{{\sigma_{min}}}{{\sigma_{max}}}}})+f(k)\max(\frac{d^{k}}{s^{k}},1)\frac{d+n}{n^2},
\end{align*}
for $f(k) = k^{O(k)}.$
\end{proposition}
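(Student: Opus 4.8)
The plan is to write the mean-squared error as squared bias plus variance, $\E[(\hat H_{a,a'}-H_{a,a'})^2]=(\E[\hat H_{a,a'}]-H_{a,a'})^2+\Var(\hat H_{a,a'})$, and bound the two pieces separately, where $\hat H_{a,a'}=\frac{1}{n^2}\y_a^TX_aPX_{a'}^T\y_{a'}=\sum_{t=0}^{k}a_t\hat M_t$ with $\hat M_0=\frac{1}{n^2}\y_a^TX_aX_{a'}^T\y_{a'}$ and $\hat M_t=\frac{1}{\binom{s}{t}n^2}\y_a^TX_aX^TG^{t-1}XX_{a'}^T\y_{a'}$ for $t\ge 1$ (and analogously with the $(\cdot)_{up}$ operation and $\binom{n}{2}$ in place of $n^2$ when $a=a'$). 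First I would check that each $\hat M_t$ is an unbiased estimator of $\beta_a^T\Sigma^{t+2}\beta_{a'}$: expanding $X^TG^{t-1}X=\sum_{c}\x_{c_0}\big(\prod_{r=0}^{t-2}\x_{c_r}^T\x_{c_{r+1}}\big)\x_{c_{t-1}}^T$ over strictly increasing chains $c=(c_0<\dots<c_{t-1})$ of unlabeled indices, the $t$ unlabeled vectors in a chain are i.i.d.\ $N(0,\Sigma)$ and distinct, so a telescoping expectation ($\E_{\x_{c_0}}[\x_{c_0}\x_{c_0}^T]=\Sigma$, then $\E_{\x_{c_1}}[\Sigma\x_{c_1}\x_{c_1}^T]=\Sigma^2$, and so on) gives $\E[\binom{s}{t}^{-1}X^TG^{t-1}X]=\Sigma^t$; since $X^TG^{t-1}X$ depends only on the unlabeled data while $\E[\tfrac1n\y_a^TX_a]=\beta_a^T\Sigma$ and $\E[\tfrac1nX_{a'}^T\y_{a'}]=\Sigma\beta_{a'}$, mutual independence of the three blocks yields $\E[\hat M_t]=\beta_a^T\Sigma\cdot\Sigma^t\cdot\Sigma\beta_{a'}=\beta_a^T\Sigma^{t+2}\beta_{a'}$ (the $(\cdot)_{up}$ and $\binom{n}{2}$ normalization in the diagonal case removes exactly the $i=j$ terms that would otherwise break unbiasedness, cf.\ Proposition~\ref{prop:Hkk-iso}).

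With unbiasedness in hand, $\E[\hat H_{a,a'}]=\sum_{t=0}^k a_t\beta_a^T\Sigma^{t+2}\beta_{a'}=\beta_a^Tp(\Sigma)\beta_{a'}$, so the bias equals $\beta_a^T(p(\Sigma)-\Sigma)\beta_{a'}$ and $|\E[\hat H_{a,a'}]-H_{a,a'}|\le\|\beta_a\|\|\beta_{a'}\|\,\|p(\Sigma)-\Sigma\|_{\mathrm{op}}\le\|\beta_a\|\|\beta_{a'}\|\max_{\lambda\in[\sigma_{\min},\sigma_{\max}]}|p(\lambda)-\lambda|$, since every eigenvalue of $\Sigma$ lies in $[\sigma_{\min},\sigma_{\max}]$. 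By the construction of $p$ (Proposition~3 of~\cite{kong2018estimating}) the last quantity is at most $\min(\tfrac{2}{k^2},2e^{-(k-1)\sqrt{\sigma_{\min}/\sigma_{\max}}})$; as the $\beta$'s have bounded norm, squaring gives the first term of the claimed bound. For the variance, Cauchy--Schwarz gives $\Var(\hat H_{a,a'})\le(k+1)\sum_{t=0}^k a_t^2\Var(\hat M_t)$, and the polynomial of~\cite{kong2018estimating} has coefficients $|a_t|=k^{O(k)}$, so it suffices to prove $\Var(\hat M_t)\le k^{O(k)}\max(d^t/s^t,1)\tfrac{d+n}{n^2}$, which is at most $k^{O(k)}\max(d^k/s^k,1)\tfrac{d+n}{n^2}$ since $t\le k$.

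For $t=0$ the bound on $\Var(\hat M_0)$ is Proposition~\ref{prop:Hkkp-iso} (and Proposition~\ref{prop:Hkk-iso} when $a=a'$); the only change from identity to a general bounded $\Sigma$ is that the fourth-moment identity of Fact~\ref{fact:gaussian4mom} becomes $\E[(\u^T\x)^2(\v^T\x)^2]=\|\Sigma^{1/2}\u\|^2\|\Sigma^{1/2}\v\|^2+2(\u^T\Sigma\v)^2$, which only introduces bounded spectral-norm factors. For $t\ge1$, I would expand $\hat M_t^2$ as a double sum over pairs of tuples $(i,c,j)$ and $(i',c',j')$ and take the expectation. Whenever $i\ne i'$, $j\ne j'$ and $c\cap c'=\emptyset$, the unprimed and primed factors involve disjoint independent Gaussians, so the expectation factors and these terms cancel exactly against $(\E[\hat M_t])^2$; what remains is a sum of $\E[\text{product}]-\E[\text{unprimed}]\E[\text{primed}]$ over tuple pairs with at least one coincidence, organized into the $O(1)^t$ possible coincidence patterns. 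For each pattern I would bound the number of contributing tuples (a coincidence among the two arm-$a$ or two arm-$a'$ indices costs a factor $n$, and an overlap in $\ell$ shared chain indices costs a factor $\approx s^{\ell}$) against the magnitude of the associated Gaussian moment (each inner product ``squared'' by a coincidence contributes at most $O(\sigma_{\max}^2 d)$ by Wick's theorem, all remaining combinatorial factors being absorbed into $k^{O(k)}$); after dividing by the $\binom{s}{t}^{-2}n^{-4}$ normalization each pattern contributes at most $k^{O(k)}\max(d^t/s^t,1)\tfrac{d+n}{n^2}$, and summing over the $O(1)^t$ patterns and over $t\le k$ gives the proposition. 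The diagonal case $a=a'$ is entirely analogous.

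The main obstacle is precisely this variance bookkeeping for $t\ge1$: showing that when a degree-$(t+2)$ estimator assembled from three independent Gaussian blocks plus a shared pool of $s$ unlabeled examples is squared, the worst coincidence pattern among the $t$ shared chain indices nets out no worse than $\max(d^t/s^t,1)\tfrac{d+n}{n^2}$ --- in particular, a heavy chain overlap, which is combinatorially rare but expensive in powers of $d$ from contracted Gaussians, must not dominate. This is an adaptation of the higher-order-estimator analysis of~\cite{kong2018estimating}; the genuinely new ingredient is the clean separation of the arm-$a$ and arm-$a'$ labeled contributions from the unlabeled contribution via cross-block independence, which is what guarantees that only \emph{same-block} coincidences generate variance and hence that the error depends on the number of unlabeled examples $s$ rather than on $Kn$, making the bound behave correctly after the sample-splitting step of Algorithm~\ref{alg:main}.
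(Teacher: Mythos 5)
Your top-level skeleton matches the paper's: split mean-squared error into squared bias plus variance, check that each power estimator $\hat M_t$ is unbiased for $\beta_a^T\Sigma^{t+2}\beta_{a'}$ (your strictly-increasing-chain expansion of $X^TG^{t-1}X$ is exactly the paper's observation that the upper-triangular $G$ turns the quadratic form into the ordered-chain estimator), bound the bias by $\|\beta_a\|\|\beta_{a'}\|\max_{\lambda\in[\sigma_{\min},\sigma_{\max}]}|p(\lambda)-\lambda|$ via Proposition 3 of~\cite{kong2018estimating}, and reduce the variance to per-power variance bounds with the polynomial coefficients absorbed into $f(k)=k^{O(k)}$. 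Where you diverge is in how the per-power variances are proved. The paper does \emph{not} redo the coincidence-pattern combinatorics over the shared pool of unlabeled examples: in its Propositions on $\beta_a^T\Sigma^{k+2}\beta_a$ and $\beta_a^T\Sigma^{k+2}\beta_{a'}$ it conditions on the labeled data (for the off-diagonal case, on $\mu_a=\frac1n\sum_j y_{a,j}\x_{a,j}$ and $\mu_{a'}$), invokes Lemma 2 of~\cite{kong2018estimating} to control the inner variance coming from chain overlaps among the $s$ unlabeled vectors, and then bounds the remaining outer variance of $\mu_a^T\Sigma^k\mu_{a'}$ (resp.\ the $\Sigma^k$-contracted labeled sum) by the same case analysis as in Proposition~\ref{prop:Hkkp-iso}. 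Your plan instead expands $\hat M_t^2$ monolithically over pairs of tuples and proposes to bound every coincidence pattern, including overlaps among the chain indices, from scratch; this would in effect reprove the content of that cited lemma, and it is precisely the step you leave as a sketch (the assertion that every pattern nets out to $k^{O(k)}\max(d^t/s^t,1)\frac{d+n}{n^2}$ after normalization). So the argument is organized correctly and the bias part and the $t=0$ part are complete, but the heaviest bookkeeping is asserted rather than carried out; the paper's conditioning route buys exactly that modularity. One point in your favor: your treatment of the terms with $i\ne i'$, $j\ne j'$ but overlapping chains ($c\cap c'\ne\emptyset$) is more careful than the paper's, which labels the whole $i\ne i'$, $j\ne j'$ case as zero even though chain sharing makes those terms only small (they are controlled by the same Lemma 2 bound), not exactly cancelling.
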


\section{Experiments}
We now briefly provide some empirical indication of the benefit of our approach. In all these experiments, we consider the known covariance setting. Note that as long as prior data about contexts is available, as it will commonly be in consumer, health and many other applications, it would be possible to estimate the covariance in advance. 

We first present results in a synthetic contextual multi-armed bandits setting. There are $K=5$ arms, and the  input context vectors are drawn from a normal distribution with 0 mean and an identity covariance matrix.  Our results are displayed in Figure~\ref{fig:plots} for context vectors of dimension 500, 2,000 and 50,000. Here our aim is to illustrate that we are able to estimate the optimal reward accurately after seeing significant fewer contexts than would be required by the standard alternative approach for contextual bandits which would try to estimate the optimal policy, and then estimate the performance of that optimal policy. More precisely, in this setting we use the linear disjoint contextual bandits algorithm~\cite{li2010contextual} to estimate the betas and covariance for each arm (with an optimally chosen regularization parameter in the settings where $n<d$). We then define the optimal policy as the best policy given those empirical estimates. We show the true reward of this learned policy.

\begin{figure*}[!h]
\centering
    \includegraphics[width=1\linewidth]{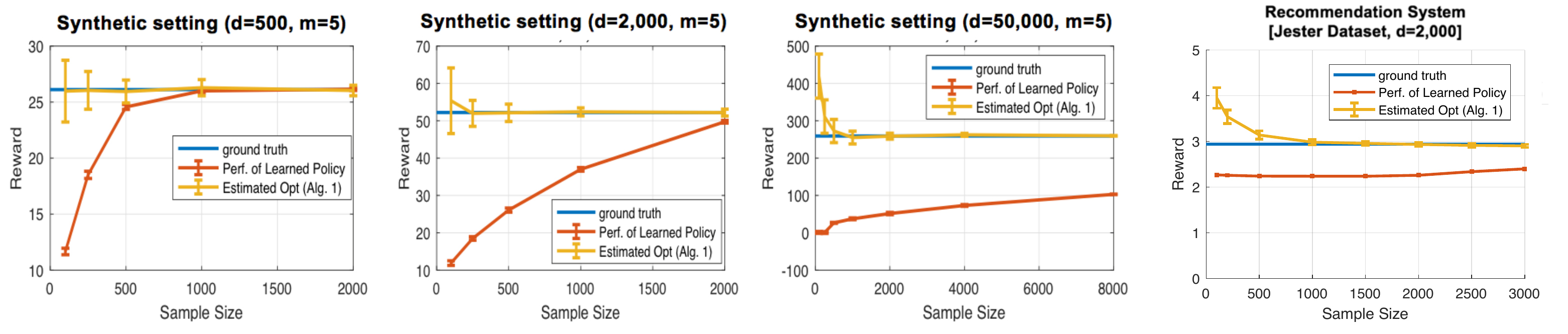}
    \vspace{-0.3in}
    \caption{The three synthetic data plots depict our algorithm for estimating the optimal reward in a synthetic domain with dimension $d=500$ (left), $d$=2,000 (center), and $d=$50,000 (right) in the setting with $m=5$ arms corresponding to independently chosen vectors $\beta_1,\ldots,\beta_{5} \in \mathbb{R}^d$ with entries chosen independently from $N(0,1).$   Our estimated value of the optimal reward is accurate when the sample size is significantly less than $d$, a regime where the best learned policy does not accurately represent the optimal policy. The right plot depicts optimal reward estimation for a  recommendation system that recommends one of 10 jokes (arms), where features are based on evaluations of 90 other jokes, represented in a $d=2000$ dimensional space.  
    In each plot the blue line corresponds to the true reward of the optimal policy. and the red lines depicts the performance of the learned policy at that sample size using disjount linUCB. 
    \label{fig:plots}}
\end{figure*}

We also present results for a real-world setting that mimics a standard recommendation platform trying to choose which products to recommend to a user, given a high-dimensional featurization for that user.  Our experiment is based on the Jester dataset~\cite{goldberg2001eigentaste}. This is a well studied dataset which includes data for >70,000 individuals providing ratings for 100 jokes. We frame this a multi-armed bandit setting by holding out the 10 most-rated jokes, and seeking to learn a policy to select which of these jokes to offer to a particular input user, based on a feature set that captures that user's preferences based on the ratings for the remaining 90 jokes. We keep a set of 48447 users who rated all the 10 most popular jokes. For each person, we create a $d=2000$ dimensional feature vector by multiplying their 90-dimensional vector of joke ratings (with missing entries replaced by that user's average rating) by a random $90 \times 2000$ matrix (with i.i.d. $N(0,1)$ entries), and then applying a sigmoid to each of the resulting values.  The reward is the user's reported rating for the joke selected by the policy. We found that the optimal expected linear policy value using this featurization was 2.98 (out of a range of 0 to 5).  For comparison, the same approach with $d=100$ has optimal policy with value 2.81, reflecting the fact that linear functions of the lower dimensional featurization cannot capture the preferences of the user as accurately as the higher dimensional featurization.  Even for $d=2000,$ the full dataset of $\approx 50,000$ people is sufficient to accurately estimate this ``ground truth'' optimal policy.  Based on this $d=2000$ representation of the user's context, we find that even with $n=500$ contexts, we can accurately estimate the optimal reward of the best threshold policy, to within about 0.1 accuracy, which improves significant for $n\ge 1000$ (Figure~\ref{fig:plots} (right)). Note that this is significantly lower than we would need to compute any optimal policy.

We also evaluated our algorithm on NCI-60 Cancer Growth Inhibition dataset, where the cell growth inhibition effect is recorded for different types of chemical compounds tested on 60 different cancer cell lines with different concentration levels. We picked $26555$ types of chemicals that are tested on the NCI-H23 (non-small cell lung cancer) cell line with concentration level: $-4, -5, -6, -7, -8$ log10(M). We obtain the $1000$-dimensional Morgan Fingerprints representation of each chemical from its SMILES representation using the Morgan algorithm implemented in RDKit. The task is to choose the most effective concentration level (among the five concentration levels) for the chemical compound, given the high-dimensional feature representation of the compound. We re-scaled the cancer inhibition effect as between $0$ and $200$, where $0$ means no growth inhibition, $100$ means completion growth inhibition, and $200$ means the cancer cells are all dead. Figure~\ref{fig:cancer} depicts the result of running our algorithm and LinUCB algorithm~\cite{li2010contextual}. The blue line depicts the true reward ($65.29$) of the optimal policy estimated from all $26555$ datapoints. The red line depicts the average reward and confidence interval over the last $100$ rounds by executing the LinUCB algorithm with $\alpha=1$ and different sample size. Notice that the LinUCB algorithm is fully adaptive and a given sample size $n$ in Figure~\ref{fig:cancer} actually corresponds to running LinUCB algorithm for $5n$ rounds.  Unlike our algorithm which achieves an accurate estimation with roughly $500$ samples per arm, LinUCB is unable to learn a good policy even with $5\times 4000 = 20000$ adaptive rounds. In this example, there is very little linear correlation between the feature of the chemical compound and the inhibition effect, and simply always choosing the highest concentration achieves near-optimal reward. However, it takes thousands of rounds for the disjoint LinUCB algorithm to start playing near optimally. 
\vspace{-0.1in}
\begin{figure}[h]\label{fig:cancer}
\centering
    \includegraphics[width=0.5\linewidth]{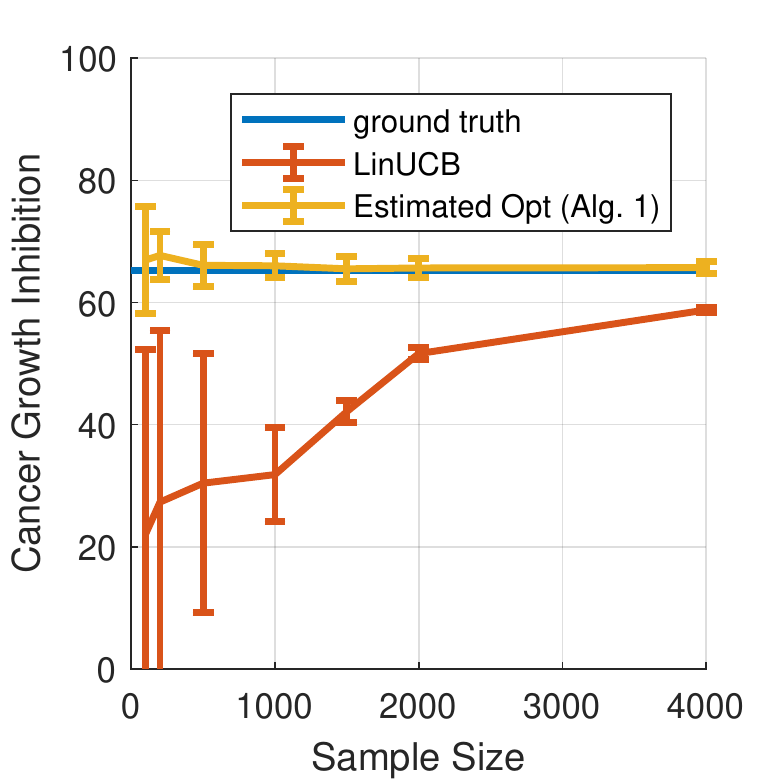}
    \vspace{-0.1in}
    \caption{Evaluation on NCI-60 growth inhibition data. 
    }
\end{figure}

\section{Conclusion}
To conclude, we present a promising approach for estimating the optimal reward in linear disjoint contextual bandits using a number of samples that is sublinear in the input contextual dimension. Without further assumptions a linear number of samples is required to output a single potentially optimal policy. There exist many interesting directions for future work, including considering more generic contextual bandit settings with an infinite set of arms.

\section*{Acknowledgments}
The contributions of Weihao Kong and Gregory Valiant were partially supported by a Google Faculty Fellowship, an Amazon Faculty Fellowship, and by NSF award 1704417 and an ONR Young Investigator award. Emma Brunskill was supported in part by a NSF CAREER award. 

\bibliography{ref}
\bibliographystyle{plain}

\section{Proof of Proposition~\ref{prop:bH2OPT}}
\begin{proof}[Proof of Proposition~\ref{prop:bH2OPT}]
The following lemma is a restatement of Theorem 1.2 of~\cite{chatterjee2005error} which bound the change of the expected maximum by the entry-wise perturbation of the covariance matrix.
\begin{lemma}[Theorem 1.2 of~\cite{chatterjee2005error}]\label{lemma:chatterjee}
Let $H\in R^{m\times m}, H'\in R^{m\times m}$ be two PSD matrices, and $\b\in R^m$ be a $m$-dimensional real vector. Let $\gamma = \max_{i,j}|H_{i,j}-H'_{i,j}|$, then 
$$
|\E_{\x\sim N(\b, H)}[\max_i x_i]-\E_{\x\sim N(\b, H')}[\max_i x_i]|\le 2\sqrt{\gamma\log m}.
$$
\end{lemma}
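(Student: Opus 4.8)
The plan is to prove this quantitative Gaussian comparison by the interpolation (``smart path'') method together with a softmax smoothing of the maximum, the standard route to Sudakov--Fernique-type inequalities with explicit constants. Since the two laws share the mean $\b$, the mean is harmless: writing $\x=\b+\z$ with $\z$ centered, the map $\z\mapsto\max_i(b_i+z_i)$ is only a constant translate of the argument and drops out of all derivatives below, so I carry $\b$ as a fixed shift. I then replace the non-smooth maximum by the log-sum-exp surrogate $F_\lambda(\x)=\lambda^{-1}\log\sum_{i=1}^m e^{\lambda x_i}$, which satisfies $\max_i x_i\le F_\lambda(\x)\le\max_i x_i+\lambda^{-1}\log m$ for every $\x$ and $\lambda>0$. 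Thus under either law the bias $F_\lambda-\max$ lies in $[0,\lambda^{-1}\log m]$; because this bias has the same sign at both endpoints, the difference of the two values $\E[\max_i x_i]$ equals the difference of the two values $\E[F_\lambda]$ up to an additive error of magnitude at most $\lambda^{-1}\log m$. The advantage is that $F_\lambda$ is smooth with bounded gradient, so integration by parts becomes legitimate.

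Next I set up the interpolation. Let $\u\sim N(0,H)$ and $\v\sim N(0,H')$ be independent, and put $\z(t)=\sqrt{t}\,\u+\sqrt{1-t}\,\v$, so that $\z(t)\sim N(0,\,tH+(1-t)H')$ with $\z(1)\sim N(0,H)$ and $\z(0)\sim N(0,H')$. Setting $\phi(t)=\E[F_\lambda(\b+\z(t))]$, the endpoints $\phi(1),\phi(0)$ are exactly the two smoothed quantities. Differentiating and applying Gaussian integration by parts (equivalently, the interpolation identity for $N(0,\Sigma(t))$ with $\Sigma(t)=tH+(1-t)H'$ and $\Sigma'(t)=H-H'$) gives
$$\phi'(t)=\tfrac12\sum_{i,j}(H_{ij}-H'_{ij})\,\E\!\left[\partial_i\partial_j F_\lambda(\b+\z(t))\right].$$

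It remains to bound the Hessian term. A direct computation yields $\partial_i\partial_j F_\lambda=\lambda(p_i\delta_{ij}-p_ip_j)$, where $p_i=e^{\lambda x_i}/\sum_\ell e^{\lambda x_\ell}$ is the softmax probability vector. Writing $D=H-H'$ with $|D_{ij}|\le\gamma$ and using $\sum_j(p_i\delta_{ij}-p_ip_j)=0$, one gets $\sum_{i,j}D_{ij}(p_i\delta_{ij}-p_ip_j)=\sum_i D_{ii}p_i(1-p_i)-\sum_{i\ne j}D_{ij}p_ip_j$, whose magnitude is at most $2\gamma(1-\sum_i p_i^2)\le 2\gamma$; hence $|\phi'(t)|\le\lambda\gamma$. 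Integrating over $t\in[0,1]$ yields $|\phi(1)-\phi(0)|\le\lambda\gamma$, and combining with the smoothing error gives
$$\big|\E_{\x\sim N(\b,H)}[\max_i x_i]-\E_{\x\sim N(\b,H')}[\max_i x_i]\big|\le\lambda\gamma+\frac{\log m}{\lambda}.$$
Choosing $\lambda=\sqrt{\log m/\gamma}$ makes both terms equal and produces exactly the claimed bound $2\sqrt{\gamma\log m}$.

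The main obstacle is the non-smoothness of the maximum: the interpolation step needs a twice-differentiable functional, so the real work is concentrated in introducing the softmax surrogate and tuning the temperature $\lambda$ to balance the $O(\lambda\gamma)$ derivative bound against the $O(\lambda^{-1}\log m)$ smoothing bias. The observation that this bias has a common sign at both endpoints is what saves a factor of two and delivers the clean constant. The remaining analytic ingredients---differentiating $\phi$ under the expectation (justified since $F_\lambda$ has bounded gradient) and Gaussian integration by parts---are routine.
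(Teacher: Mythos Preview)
The paper does not actually prove this lemma; it is stated as a direct restatement of Theorem~1.2 of Chatterjee and used as a black box inside the proof of Proposition~\ref{prop:bH2OPT}. Your argument is correct and is essentially Chatterjee's own proof: the softmax surrogate $F_\lambda$ with its uniform $[0,\lambda^{-1}\log m]$ bias, the covariance interpolation $\Sigma(t)=tH+(1-t)H'$, Gaussian integration by parts giving $\phi'(t)=\tfrac12\sum_{i,j}(H_{ij}-H'_{ij})\E[\partial_i\partial_j F_\lambda]$, the Hessian bound $|\sum_{i,j}D_{ij}(p_i\delta_{ij}-p_ip_j)|\le 2\gamma$, and the final optimization of $\lambda$---all of this matches the standard route. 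The one-sided sign of the smoothing bias is exactly what keeps the constant at $2$ rather than $4$, as you note. There is nothing in the paper to compare against and no gap in your derivation.
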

Lemma~\ref{lemma:chatterjee} handles the perturbation of the covariance matrix. The following simple proposition handles the perturbation of the mean, which, combined with Lemma~\ref{lemma:chatterjee}, immediately implies the statement of our proposition.

\begin{lemma}\label{lemma:mean-perturb}
Let $H\in R^{m\times m}$ be a PSD matrices, and $\b, \b' \in R^m$ be two $m$-dimensional real vectors. Then 
$$
|\E_{\x\sim N(\b, H)}[\max_i x_i]-\E_{\x\sim N(\b', H)}[\max_i x_i]|\le \max_i|b_i-b'_i|.
$$
\end{lemma}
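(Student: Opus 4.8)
The plan is to prove Lemma~\ref{lemma:mean-perturb} by a one-line coupling argument, exploiting two elementary facts: that the coordinate-wise maximum is $1$-Lipschitz in the $\ell_\infty$ norm, and that two Gaussians with the same covariance are translates of one another. First I would record the Lipschitz estimate: for any $\x,\x'\in\RB^m$, writing $x_i\le x_i'+|x_i-x_i'|\le x_i'+\|\x-\x'\|_\infty$ and taking the maximum over $i$ gives $\max_i x_i\le \max_i x_i'+\|\x-\x'\|_\infty$, and by symmetry $|\max_i x_i-\max_i x_i'|\le \|\x-\x'\|_\infty$.

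Next I would set up the coupling. Let $\z\sim N(\0,H)$ on some probability space, and define $\x=\b+\z$ and $\x'=\b'+\z$; then $\x\sim N(\b,H)$ and $\x'\sim N(\b',H)$, and crucially $\x-\x'=\b-\b'$ deterministically. Applying the Lipschitz bound pointwise yields $|\max_i x_i-\max_i x_i'|\le \|\b-\b'\|_\infty=\max_i|b_i-b_i'|$ almost surely.

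Finally I would take expectations: by the triangle inequality for expectations, $|\E_{\x\sim N(\b,H)}[\max_i x_i]-\E_{\x\sim N(\b',H)}[\max_i x_i]|=|\E[\max_i x_i-\max_i x_i']|\le \E|\max_i x_i-\max_i x_i'|\le \max_i|b_i-b_i'|$, which is exactly the claim. Combining this with Lemma~\ref{lemma:chatterjee} through the intermediate distribution $N(\b',H)$ and one more triangle inequality gives Proposition~\ref{prop:bH2OPT}.

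I do not expect a genuine obstacle here; unlike Lemma~\ref{lemma:chatterjee}, which needs Chatterjee's quantitative Sudakov--Fernique bound, this mean-perturbation statement uses no Gaussian-specific machinery beyond the trivial shift-invariance of the covariance. The only points meriting a sentence of care are the $\ell_\infty$-Lipschitz property of $\x\mapsto\max_i x_i$ and the observation that the two measures live on a common space as translates of a single draw.
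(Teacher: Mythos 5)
Your proposal is correct and follows essentially the same route as the paper: the paper also couples the two Gaussians as translates of a single draw (it sets $\x'=\x+\b'-\b$ for $\x\sim N(\b,H)$) and then bounds $\E[|\max_i x_i-\max_i x'_i|]$ by $\max_i|b_i-b'_i|$, which is exactly your $\ell_\infty$-Lipschitz step made implicit. No gap to report.
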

\begin{proof}[Proof of Lemma~\ref{lemma:mean-perturb}]
Let $\x\sim N(\b,H)$ and $\x' = \x+\b'-\b$. Then the random vector $\x'$ follows from $N(\b',H)$. We have
$$
|\E[\max_i x_i]-\E[\max_i x'_i]|\le \E[|\max_i x_i-\max_i x'_i|]\le \max_i|b_i-b'_i|,
$$
which concludes the proof.
\end{proof}
Combining the two lemma, we have that 
$$
|\E_{\x\sim N(\b, H)}[\max_i x_i]-\E_{\x\sim N(\b', H')}[\max_i x_i]|\le 2\sqrt{\max_{i,j}|H_{i,j}-H'_{i,j}|\log m}+ \max_i|b_i-b'_i|,
$$
which concludes the proof.
\end{proof}
\section{Proofs of the Upper Bounds in the Known Covariance Setting}
\begin{proof}[Proof of Theorem~\ref{thm:iso}]
Applying Fact~\ref{fact:boosting} on top of Corollary~\ref{cor:Hiso}, we have that for a fixed $i,j$, with probability at least $1-\exp(-\log(K^2/\delta-1)\ge 1-\delta/(K^2+K)$, the median estimates $\hat{H}$ of Algorithm~\ref{alg:main} satisfies $|\hat{H}_{i,j}-H_{i,j}|\le3 \sqrt{\frac{9d+3n}{n^2}}\sigma_i\sigma_j$. We define $\bar{\sigma}$ such that $\sigma_i\le \bar{\sigma}$ for all $i$. Applying Fact~\ref{fact:boosting} with Corollary~\ref{cor:var-b}, we get that for a fixed $i$, with probability at least $1-\exp(-\log (K^2/\delta)-1)\ge 1-\delta/(K^2+K)$, the median estimates $\hat{\b}$ of Algorithm~\ref{alg:main} satisfies $|\hat{b}_{i}-b_{i}|\le 3\sqrt{\frac{1}{n}}\sigma_i$. Hence by a union bound, we have that with probability at least $1-\delta$, $\hat{H}$ and $\hat{\b}$ satisfy
$$
\max_{i,j} |\hat{H}_{i,j}-H_{i,j}|\le 3\sqrt{\frac{9d+3n}{n^2}}\bar{\sigma}^2,
$$
$$
\quad  \max_{i} |\hat{b}_{i}-b_{i}|\le 3\sqrt{\frac{1}{n}}\bar{\sigma}.
$$
In order to bound the discrepancy between $\hat{H}^{(PSD)}$ and $H$, notice that by the optimality of $\hat{H}^{(PSD)}$, there is $\max |\hat{H}^{(PSD)}_{i,j}-\hat{H}_{i,j}|\le \max |H_{i,j}-\hat{H}_{i,j}|$. Applying triangle inequality, we have 
\begin{align*}
&\max |\hat{H}^{(PSD)}_{i,j}-H_{i,j}|\\
\le& \max |\hat{H}^{(PSD)}_{i,j}-\hat{H}_{i,j}|+\max |H_{i,j}-\hat{H}_{i,j}|\\
\le& 6\sqrt{\frac{9d+3n}{n^2}}\bar{\sigma}^2
\end{align*} 
Thus, by Proposition~\ref{prop:bH2OPT}, with probability $1-\delta$ the final estimation error is bounded by
\begin{align*}
|OPT-\widehat{OPT}| &\le 7\sqrt{\log K}(\frac{3d+n}{n^2})^{1/4}\bar{\sigma}+3\frac{1}{\sqrt{n}}\bar{\sigma}\\
&\le 10\sqrt{\log K}(\frac{3d+n}{n^2})^{1/4}\bar{\sigma}\\
& = O(\sqrt{\log K}(\frac{d+n}{n^2})^{1/4}),
\end{align*}
where we have apply the fact that $\bar{\sigma}$ is a constant.
\end{proof}
Corollary~\ref{cor:isotropic} follows immediately from Theorem~\ref{thm:iso}.
\begin{proof}[Proof of Corollary~\ref{cor:isotropic}]
In order to achieve additive error $\eps$, we set $n = \Theta(\frac{\log K}{\eps^2} \max(\sqrt{d}, \frac{\log K}{\eps^2})) = \Theta(\frac{\sqrt{d}\log K}{\eps^2})$ where the last equality holds by Theorem~\ref{thm:iso} and the assumption on $\eps$. Algorithm~\ref{alg:main} in total requires $T = \Theta(nK(\log K+\log(1/\delta))) = \Theta(\frac{\sqrt{d}K\log K}{\eps^2}(\log K+\log(1/\delta)))$ samples.
\end{proof}

\section{Proofs of the Upper Bounds in the Unknown Covariance Setting}
\subsection{Proof of Proposition~\ref{prop:poly}, Estimating $H$ in the Unknown Covariance Setting.}
In order to prove Proposition~\ref{prop:poly}, we first prove Proposition~\ref{prop:mom-aa} and Proposition~\ref{prop:mom-aap}, where Proposition~\ref{prop:mom-aa} gives a variance bound for our estimator of $\beta_a\Sigma^{k+2}\beta_a$ for $k\ge 0$, and Proposition~\ref{prop:mom-aap} gives a variance bound for our estimator of $\beta_a\Sigma^{k+2}\beta_{a'}$ for $k\ge 0$. Then Proposition~\ref{prop:poly} holds by combining the two propositions.
\begin{proposition}\label{prop:mom-aa}
For arm $a$, We denote $\{\x_i\}$ as a set of unlabeled examples, where $|{\x_i}|=s$.
$$
\E[\frac{1}{\binom{s}{k}\binom{n}{2}}\sum_{i<j}y_{a,i}y_{a,j}\x_{a,i}^T\left( \sum_{i_1<i_2<\ldots<i_{k}}\x_{i_1}\x_{i_1}^T\x_{i_2}\x_{i_2}^T\ldots\x_{i_{k}}\x_{i_k}^T\right)\x_{a,j})] = \beta_a^T\Sigma^{k+2}\beta_a
$$
$$
\Var[\frac{1}{\binom{s}{k}\binom{n}{2}}\sum_{i<j}y_{a,i}y_{a,j}\x_{a,i}^T\left( \sum_{i_1<i_2<\ldots<i_{k}}\x_{i_1}\x_{i_1}^T\x_{i_2}\x_{i_2}^T\ldots\x_{i_{k}}\x_{i_k}^T\right)\x_{a,j})] = f(k)\max(\frac{d^{k}}{s^{k}},1)\frac{d+n}{n^2}),
$$ where $f(k)=k^{O(k)}$
\end{proposition}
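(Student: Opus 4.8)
The plan is to prove unbiasedness by a direct moment computation and then bound the variance via a configuration-overlap expansion, in the spirit of the proof of Proposition~\ref{prop:Hkkp-iso}. Denote the estimator by $\hat Z$, and recall that after bias subtraction we take $y_{a,i}=\beta_a^T\x_{a,i}+\eta_{a,i}$. For unbiasedness: fix a labeled pair $i<j$ and an unlabeled $k$-subset $\{i_1<\cdots<i_k\}$; then $\x_{a,i},\x_{a,j},\x_{i_1},\ldots,\x_{i_k}$ are mutually independent and independent of the mean-zero, pairwise-independent noises. Taking the expectation over the noise first kills the cross term $\eta_{a,i}\eta_{a,j}$, leaving $\E\big[(\beta_a^T\x_{a,i})(\beta_a^T\x_{a,j})\,\x_{a,i}^T M\,\x_{a,j}\big]$ with $M=\x_{i_1}\x_{i_1}^T\cdots\x_{i_k}\x_{i_k}^T$; then, using $\E[(\beta_a^T\x)\x^T]=\beta_a^T\Sigma$ together with $\E[M]=\Sigma^k$ (independence of the $\x_{i_t}$'s), this equals $\beta_a^T\Sigma\cdot\Sigma^k\cdot\Sigma\beta_a=\beta_a^T\Sigma^{k+2}\beta_a$. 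Since every one of the $\binom sk\binom n2$ summands has the same expectation, the normalization is exact and $\hat Z$ is unbiased.

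For the variance I would expand $\E[\hat Z^2]$ as a double sum over pairs of configurations (a labeled pair together with an unlabeled $k$-subset) and subtract $(\E\hat Z)^2$. A pair of configurations sharing no index at all contributes exactly $(\E\hat Z)^2$ by independence and the computation above, so it cancels; only overlapping pairs survive (in particular every surviving noise factor is tied to a shared labeled index and contributes an $O(\sigma^2)=O(1)$ factor). Classify the survivors by overlap type: $c_1\in\{1,2\}$ shared labeled indices and $c_2\in\{0,\ldots,k\}$ shared unlabeled indices, with the all-disjoint type excluded and $c_1=0$ forcing $c_2\ge 1$. For each type one needs two bounds. First, a count bound: the number of configuration pairs of a given type, divided by the squared normalization $\big(\binom sk\binom n2\big)^2$, is $O(n^{-c_1})\cdot k^{O(k)}\,s^{-c_2}$, reading $n^{-c_1}=O(1)$ when $c_1=0$, where the $k^{O(k)}$ absorbs binomial factors such as $\binom{k}{c_2}^2$. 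Second, a per-term moment bound: the expectation of the product of the two corresponding summands is at most $k^{O(k)}\,d^{\,e}$ for an exponent $e=e(c_1,c_2)\le k$. I would obtain the second bound by whitening $\x\mapsto\Sigma^{-1/2}\x$, which costs only constants because $\|\Sigma\|_{op}$, $\sigma_{\min}^{-1}$, $\|\beta_a\|$, $\sigma$ are all constants and $\tr(\Sigma^j)\le d\,\sigma_{\max}^j$; in the resulting isotropic problem each such expectation is a signed sum over Wick pair-partitions (generalizing Fact~\ref{fact:gaussian4mom}), and its power of $d$ equals the number of free cycles the partition forms among the contracted coordinates, a number one checks never exceeds $k$ in any overlap type.

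Multiplying the two bounds and summing over the $O(k^2)$ types gives the result: the unlabeled-heavy types (large $c_2$) are controlled by the $d^k/s^k$ factor, with any leftover factor of $1/d$ absorbed into $\tfrac{d+n}{n^2}$ in the relevant regime $n=O(d)$, whereas the types with $c_2=0$ carry no $s$-dependence and are dominated by the $c_1=2,\ c_2=0$ term, which contributes $k^{O(k)}\,d/n^2\le k^{O(k)}\tfrac{d+n}{n^2}$ — exactly the analogue of the $\tfrac{9d+3n}{n^2}$ bound of Proposition~\ref{prop:Hkk-iso}. Collecting everything yields $\Var[\hat Z]\le f(k)\max\!\big(\tfrac{d^k}{s^k},1\big)\tfrac{d+n}{n^2}$ with $f(k)=k^{O(k)}$. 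The main obstacle is the per-term moment bound: each such term is a product of up to roughly $2k+4$ linear and bilinear Gaussian forms, and one must show that the Wick combinatorics never produce a power of $d$ above $k$, nor a hidden $d^{\Theta(k)}$ masquerading as a combinatorial constant — i.e., that the cycle count is simultaneously at most $k$ and accompanied by only a $k^{O(k)}$ multiplicity, uniformly over all types including the mixed ones where $c_1\ge1$ and $c_2\ge1$. The remaining steps are the routine bookkeeping already exemplified by the three-case split in the proof of Proposition~\ref{prop:Hkkp-iso}.
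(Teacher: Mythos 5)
Your route genuinely differs from the paper's. The paper never performs a global Wick expansion over both labeled and unlabeled indices: it classifies terms only by the overlap pattern of the labeled quadruple $(i,j,i',j')$, and for each surviving case splits the summand into (a) the fluctuation caused by replacing the unlabeled-sample product $\frac{1}{\binom{s}{k}}\sum_{i_1<\cdots<i_k}\x_{i_1}\x_{i_1}^T\cdots\x_{i_k}\x_{i_k}^T$ by its mean $\Sigma^k$, which it bounds by invoking Lemma 2 of~\cite{kong2018estimating} conditionally on the labeled data, and (b) the variance of the idealized estimator with $\Sigma^k$ plugged in, handled exactly as in Proposition~\ref{prop:Hkkp-iso} via Gaussian fourth-moment computations. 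Your plan — a joint overlap expansion over labeled and unlabeled configurations with per-type counting and per-term moment bounds proved from scratch by whitening and Wick combinatorics — is a legitimate alternative in principle, and your unbiasedness argument and your counting factors $O(n^{-c_1})\,k^{O(k)}s^{-c_2}$ are correct.

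The problem is that the pivotal quantitative claim your variance bound rests on is false as stated. You assert the per-pair moment is at most $k^{O(k)}d^{e}$ with $e\le k$ uniformly over overlap types (``the cycle count never exceeds $k$''). Already for $k=1$ the diagonal type $c_1=2,\ c_2=k$ violates this: $\E[y_{a,i}^2y_{a,j}^2(\x_{a,i}^T\x_{i_1})^2(\x_{i_1}^T\x_{a,j})^2]=\Theta(d^2)$, since conditioning on $\x_{i_1}$ leaves $\Theta(\|\x_{i_1}\|^4)$; in general the diagonal second moment is $\Theta(d^{k+1})$ (the squared product of outer products yields a chain of $k-1$ squared inner products times $\|\x_{i_1}\|^2\|\x_{i_k}\|^2$). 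The stated variance bound still holds, but only because this extra power of $d$ is paired with the $n^{-2}$ count factor, producing $\max(d^k/s^k,1)\cdot d/n^2$; so the bookkeeping must track type-dependent exponents (roughly $e\le c_2+1$ when $c_1=2$, $e\le c_2$ when $c_1=1$, $e\le\max(c_2-1,0)$ when $c_1=0$) and cancel leftover powers of $d$ against $n^{-c_1}$, rather than rely on a uniform $d^k$ cap. Since you yourself flag this per-term moment bound as the main obstacle and leave it unproven — and the version you propose to prove is not true — this is a genuine gap, not routine bookkeeping. (Your restriction to $n=O(d)$, or more generally $s\gtrsim n^2/(d+n)$, to absorb the $c_1=0$ types is acceptable: it holds in the paper's use where $s=Kn/2$, and the paper's own proof simply treats the fully disjoint labeled case as zero, which is if anything less careful than your treatment.)
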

\begin{proof}
Notice that for $i< j$,  $y_{a,i}y_{a,j}\x_{a,i}^T\left( \sum_{i_1<i_2<\ldots<i_{k}}\x_{i_1}\x_{i_1}^T\x_{i_2}\x_{i_2}^T\ldots\x_{i_{k}}\x_{i_k}^T\right)\x_{a,j}$ is an unbiased estimator for $\beta_a^T\Sigma^{k+2}\beta_a$. Since the average of unbiased estimators is still an unbiased estimator, the proposition statement about the expectation holds. We write the variance of the estimator as follows,
\begin{align}
\frac{1}{\binom{n}{2}^2}\sum_{i<j,i'<j'}(\E[y_{a,i}y_{a,j}\x_{a,i}^T\left( \frac{1}{\binom{s}{k}}\sum_{i_1<i_2<\ldots<i_{k}}\x_{i_1}\x_{i_1}^T\x_{i_2}\x_{i_2}^T\ldots\x_{i_{k}}\x_{i_k}^T\right)\x_{a,j}y_{a,i'}y_{a,j'}\x_{a,i'}^T\\
\left(\frac{1}{\binom{s}{k}} \sum_{i_1<i_2<\ldots<i_{k}}\x_{i_1}\x_{i_1}^T\x_{i_2}\x_{i_2}^T\ldots\x_{i_{k}}\x_{i_k}^T\right)\x_{a,j'}]-(\beta_a^T\Sigma^{k+2}\beta_a)^2 )\\
=\frac{1}{\binom{n}{2}^2}\sum_{i<j,i'<j'}(\E[y_{a,i}y_{a,j}\x_{a,i}^T\left(\frac{1}{\binom{s}{k}} \sum_{i_1<i_2<\ldots<i_{k}}\x_{i_1}\x_{i_1}^T\x_{i_2}\x_{i_2}^T\ldots\x_{i_{k}}\x_{i_k}^T\right) \x_{a,j}y_{a,i'}y_{a,j'}\x_{a,i'
}^T\\
\left(\frac{1}{\binom{s}{k}} \sum_{i_1<i_2<\ldots<i_{k}}\x_{i_1}\x_{i_1}^T\x_{i_2}\x_{i_2}^T\ldots\x_{i_{k}}\x_{i_k}^T\right)\x_{a,j'}-y_{a,i}y_{a,j}\x_{a,i}^T\Sigma^k\x_{a,j}y_{a,i'}y_{a,j'}\x_{a,i'}^T\Sigma^k\x_{a,j'}]\\
+\E[y_{a,i}y_{a,j}\x_{a,i}^T\Sigma^k\x_{a,j}y_{a,i'}y_{a,j'}\x_{a,i'}^T\Sigma^k\x_{a,j'}] - (\beta_a^T\Sigma^{k+2}\beta_a)^2).
\end{align}

For each term in the summation, we classify it into one of the 3 different cases according to $i,j,i',j'$:
\begin{enumerate}
\item If $i\ne i'$ and $j\ne j'$, the term is $0$.
\item If $i=i'$ and $j\ne j'$, the term can be written as $\E[y_{a,i}^2\x_{a,i}^T\left( \sum_{i_1<i_2<\ldots<i_{k}}\x_{i_1}\x_{i_1}^T\x_{i_2}\x_{i_2}^T\ldots\x_{i_{k}}\x_{i_k}^T\right)\Sigma\beta\\
\x_{a,i
}^T\left( \sum_{i_1<i_2<\ldots<i_{k}}\x_{i_1}\x_{i_1}^T\x_{i_2}\x_{i_2}^T\ldots\x_{i_{k}}\x_{i_k}^T\right)\Sigma\beta-y_{a,i}^2\x_{a,i}^T\Sigma^{k+1}\beta \x_{a,i}^T\Sigma^{k+1}\beta] + \E[y_{a,i}^2\x_{a,i}^T\Sigma^{k+1}\beta \x_{a,i}^T\Sigma^{k+1}\beta] -(\beta_a^T\Sigma^{k+2}\beta_a)^2)$. By Lemma 2 of~\cite{kong2018estimating}, the first expectation is bounded by $f(k)\max(\frac{d^{k-1}}{s^k},\frac{1}{s})\E[y_{a,i}^2\x_{a,i}^T\x_{a,i}] = O(f(k)\max(\frac{d^{k}}{s^k},\frac{d}{s}))$, and the second difference is bounded by constant by the four moment condition of Gaussian. 
\item If $i\ne i'$ and $j=j'$, this case is symmetric to case 2.
\item If $i=i'$ and $j=j'$, the term can be written as $\E[y_{a,i}^2y_{a,j}^2\x_{a,i}^T\left( \sum_{i_1<i_2<\ldots<i_{k}}\x_{i_1}\x_{i_1}^T\x_{i_2}\x_{i_2}^T\ldots\x_{i_{k}}\x_{i_k}^T\right)\x_{a,j}\\
\x_{a,i
}^T\left( \sum_{i_1<i_2<\ldots<i_{k}}\x_{i_1}\x_{i_1}^T\x_{i_2}\x_{i_2}^T\ldots\x_{i_{k}}\x_{i_k}^T\right)\x_{a,j}-(y_{a,i}y_{a,j}\x_{a,i}^T\Sigma^{k}\x_{a,j})^2 ] + \E[(y_{a,i}y_{a,j}\x_{a,i}^T\Sigma^{k}\x_{a,j})^2] -(\beta_a^T\Sigma^{k+2}\beta_a)^2)$. By Lemma 2 of~\cite{kong2018estimating}, the first expectation is bounded by $$
f(k)\max(\frac{d^{k-1}}{s^k},\frac{1}{s})\E[y_{a,i}^2\x_{a,i}^T\x_{a,i}]\E[y_{a,j}^2\x_{a,j}^T\x_{a,j}] = O(f(k)d\max(\frac{d^{k}}{s^k},\frac{d}{s}))
$$, and the second difference is bounded by $O(d)$ by the four moment condition of Gaussian.
\end{enumerate}
The final step is to sum the contributions of these $3$ cases. Case $2$ has $O(n^3)$ different quadruples $(i,j,i',j')$. Case $4$ has $n^2$ different quadruples $(i,j,i',j')$. Combining the resulting bounds yields the following bound on the variance:
$
\frac{1}{\binom{n}{2}^2}(n^3\max(\frac{d^k}{s^k},\frac{d}{s},1)+n^2d\max(\frac{d^k}{s^k},\frac{d}{s},1)) = f(k)\max(\frac{d^{k}}{s^{k}},1)\frac{d+n}{n^2}.
$
\end{proof}
\begin{proposition}\label{prop:mom-aap}
For arm $a\ne a'$, let $\mu_a = \frac{\sum_{j}y_{a,j}\x_{a,j}}{n}, \mu_{a'} = \frac{\sum_{j}y_{a',j}\x_{a',j}}{n}$. We denote $\{\x_i\}$ as a set of unlabeled examples, where $|{\x_i}|=s$.
$$
\E[\frac{1}{\binom{s}{k}}\mu_a^T\left(\sum_{i_1<i_2<\ldots<i_{k}}\x_{i_1}\x_{i_1}^T\x_{i_2}\x_{i_2}^T\ldots\x_{i_{k}}\x_{i_k}^T\right)\mu_{a'}] = \beta_a^T\Sigma^{k+2}\beta_{a'}
$$
$$
\Var[\frac{1}{\binom{s}{k}}\mu_a^T\left(\sum_{i_1<i_2<\ldots<i_{k}}\x_{i_1}\x_{i_1}^T\x_{i_2}\x_{i_2}^T\ldots\x_{i_{k}}\x_{i_k}^T\right)\mu_{a'}] = f(k)\max(\frac{d^{k}}{s^{k}},1)\frac{d+n}{n^2},
$$where $f(k)=k^{O(k)}$.
\end{proposition}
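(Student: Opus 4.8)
The plan is to follow the proof of Proposition~\ref{prop:mom-aa} almost verbatim, taking advantage of the fact that in the off-diagonal case the two arms enter through \emph{independent} factors, which actually streamlines the case analysis. Write $M = \frac{1}{\binom{s}{k}}\sum_{i_1<\cdots<i_k}\x_{i_1}\x_{i_1}^T\cdots\x_{i_k}\x_{i_k}^T$ for the symmetric matrix built from the unlabeled examples. Since the $\x_i$ are i.i.d.\ $N(0,\Sigma)$ and each summand involves distinct indices, $\E[M]=\Sigma^k$. After centering, $\E[y_{a,j}\x_{a,j}]=\E[(\beta_a^T\x_{a,j})\x_{a,j}]=\Sigma\beta_a$, so $\E[\mu_a]=\Sigma\beta_a$ and $\E[\mu_{a'}]=\Sigma\beta_{a'}$. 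The arm-$a$ samples, the arm-$a'$ samples, and the unlabeled samples are mutually independent, so $\E[\tfrac{1}{\binom{s}{k}}\mu_a^T(\cdots)\mu_{a'}]=\E[\mu_a^TM\mu_{a'}]=(\Sigma\beta_a)^T\Sigma^k(\Sigma\beta_{a'})=\beta_a^T\Sigma^{k+2}\beta_{a'}$, which gives the expectation claim.

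For the variance, expand $\mu_a^TM\mu_{a'}=\frac{1}{n^2}\sum_{i,j}y_{a,i}y_{a',j}\,\x_{a,i}^TM\x_{a',j}$, so that
\begin{align*}
\Var\big[\mu_a^TM\mu_{a'}\big] = \frac{1}{n^4}\sum_{i,i',j,j'}\Big(\E\big[y_{a,i}y_{a',j}y_{a,i'}y_{a',j'}\,\x_{a,i}^TM\x_{a',j}\,\x_{a,i'}^TM\x_{a',j'}\big] - (\beta_a^T\Sigma^{k+2}\beta_{a'})^2\Big),
\end{align*}
where $i,i'\in[n]$ index arm-$a$ pulls and $j,j'\in[n]$ index arm-$a'$ pulls. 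We split on whether $i=i'$ and whether $j=j'$; since $a\ne a'$ and distinct within-arm indices give distinct samples, the case $i\ne i',\,j\ne j'$ is precisely the case of four distinct samples — this is cleaner than in Proposition~\ref{prop:mom-aa}, where ``$i\ne i'$'' does not preclude a cross-coincidence. In each case we first integrate out the arm samples: a singly-occurring $y_{a,i}\x_{a,i}$ contributes the deterministic vector $\Sigma\beta_a$ (similarly for $a'$), while a doubly-occurring pair together with its $y^2$ contributes the matrix $C_a:=\E[y_{a,1}^2\x_{a,1}\x_{a,1}^T]$, which by the Gaussian fourth-moment identity and the boundedness of $\|\beta_a\|,\|\Sigma\|,\sigma_a$ satisfies $\tr(C_a)=\E[y_{a,1}^2\|\x_{a,1}\|^2]=O(d)$ while $\|C_a\|=O(1)$. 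This reduces each term to a deterministic contraction of $M\otimes M$, which we write as $(\Sigma^k\otimes\Sigma^k)+\text{remainder}$: the remainder is bounded by Lemma~2 of~\cite{kong2018estimating} exactly as in Proposition~\ref{prop:mom-aa} (contributing $f(k)\max(d^{k-1}/s^k,1/s)$ times the relevant trace/norm factors), and the $\Sigma^k\otimes\Sigma^k$ part is evaluated by the Gaussian fourth-moment computation exactly as in the proof of Proposition~\ref{prop:Hkkp-iso}.

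Counting then finishes the argument. In the four-distinct case, integrating out all arm samples (valid since they are independent of $M$) collapses the term to $\Var_M[\beta_a^T\Sigma M\Sigma\beta_{a'}]$, which Lemma~2 of~\cite{kong2018estimating} bounds by $f(k)\max(d^{k-1}/s^k,1/s)\|\Sigma\beta_a\|^2\|\Sigma\beta_{a'}\|^2 = O(f(k)\max(d^{k-1}/s^k,1/s))$; there are $O(n^4)$ such quadruples. The case of exactly one repeated index leaves one $\tr(C_\cdot)=O(d)$ factor, giving a per-term bound $O(f(k)\max(d^k/s^k,d/s))$ over $O(n^3)$ quadruples; the fully repeated case $i=i',j=j'$ leaves two trace factors, giving $O(f(k)d\max(d^k/s^k,d/s))$ over $n^2$ quadruples. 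Dividing by $n^4$, summing, and using $s=Kn/2\ge n$ (so $1/s\le 1/n$ and $\tfrac{d^k}{s^k}\cdot\tfrac1n\le \tfrac{d^k}{s^k}\cdot\tfrac{d+n}{n^2}$), each contribution is dominated by $f(k)\max(d^k/s^k,1)\tfrac{d+n}{n^2}$, with $f(k)=k^{O(k)}$, which is the claimed bound. (Combined with Proposition~\ref{prop:mom-aa} this yields Proposition~\ref{prop:poly}.)

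\textbf{Main obstacle.} The one genuinely delicate point is the four-distinct case: if one conditions on the arm samples \emph{before} invoking Lemma~2, the bound picks up a spurious factor $\prod\E[|y|\,\|\x\|]=O(d^2)$, which is far too large and would destroy the sublinear-in-$d$ scaling. One must instead exploit that the arm samples are independent of $M$ to integrate them out first, collapsing the term to the scalar $\Var_M[\beta_a^T\Sigma M\Sigma\beta_{a'}]$ before applying Lemma~2. Everything else — keeping track of the $\tr(C_\cdot)=O(d)$ versus $\|C_\cdot\|=O(1)$ distinction and checking that all three per-case bounds collapse into $f(k)\max(d^k/s^k,1)\tfrac{d+n}{n^2}$ using $s\ge n$ — is routine bookkeeping of the same flavor as in Propositions~\ref{prop:mom-aa} and~\ref{prop:Hkkp-iso}.
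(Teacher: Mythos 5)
Your proposal is correct, but it takes a different route from the paper's own proof of Proposition~\ref{prop:mom-aap}. You expand $\mu_a^T M\mu_{a'}$ into the $n^4$ quadruples $(i,i',j,j')$ and redo the coincidence case analysis of Propositions~\ref{prop:mom-aa} and~\ref{prop:Hkkp-iso}, integrating out the labeled samples case by case; the paper instead exploits the fact that, because $a\ne a'$, the off-diagonal estimator is exactly the bilinear form $\mu_a^T M\mu_{a'}$ in the two aggregate vectors, and applies a conditional (law-of-total-variance style) decomposition: condition on $\mu_a,\mu_{a'}$, bound the conditional fluctuation of $M$ by Lemma~2 of~\cite{kong2018estimating} times $\E[\|\mu_a\|^2\|\mu_{a'}\|^2]=O(\max(d^2/n^2,1))$ (Fact~\ref{fact:mu-a-bound}), and bound the remaining term $\E[(\mu_a^T\Sigma^k\mu_{a'})^2]-(\beta_a^T\Sigma^{k+2}\beta_{a'})^2$ by Proposition~\ref{prop:muaAmub}, which is the Prop.~\ref{prop:Hkkp-iso} computation with a fixed bounded matrix inserted. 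The paper's route is shorter precisely because the concentration $\E[\|\mu_a\|^2]=O((d+n)/n)$ tames what you call the spurious $d^2$ factor, so no per-case integration order needs to be managed; your route buys a more explicit, self-contained computation that parallels the diagonal case, and it correctly surfaces a point the paper glosses over there: with shared unlabeled examples the all-distinct quadruples are \emph{not} zero (unlike in the purely labeled Proposition~\ref{prop:Hkkp-iso}) but carry a $\Var_M[\beta_a^T\Sigma M\Sigma\beta_{a'}]$ contribution, and your observation that one must integrate out the independent arm samples before invoking Lemma~2 in that case is exactly the right fix. Your final domination bookkeeping is fine, though the cleanest way to absorb the $d^{k-1}/s^k$ term is a small case split on whether $d\le s$ (resp.\ $d\le n$) rather than the single inequality you quote; this is cosmetic and does not affect correctness.
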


\begin{proof}
Notice that $\E[\mu_a] = \Sigma\beta_a$.
It's easy to see that 
$$
\E[\mu_a\x_{i_1}\x_{i_1}^T\ldots\x_{i_k}\x_{i_k}^T\mu_{a'}] = \beta_a\Sigma^{k+2}\beta_{a'}.
$$
For the variance bound, we can express the variance as the summation of the following two terms,
\begin{align*}
\E_{\mu_a,\mu_{a'}}[\E_{\x}[(\frac{1}{\binom{s}{k}}\mu_a^T\left(\sum_{i_1<i_2<\ldots<i_{k}}\x_{i_1}\x_{i_1}^T\x_{i_2}\x_{i_2}^T\ldots\x_{i_{k}}\x_{i_k}^T\right)\mu_{a'})^2]-(\mu_a\Sigma^{k+2}\mu_{a'})^2]\\
+\E_{\mu_a,\mu_{a'}}[(\mu_a\Sigma^{k}\mu_{a'})^2-(\beta_a\Sigma^{k+2}\beta_{a'})^2]
\end{align*}
The first term, by Lemma 2 of~\cite{kong2018estimating} and Fact~\ref{fact:mu-a-bound}, is bounded by 
$$
f(k)\min(\frac{d^{k-1}}{s^k},\frac{1}{s})\E_{\mu_a,\mu_{a'}}[\|\mu_a\|^2\|\mu_{a'}\|^2]\le f(k)\max(\frac{d^{k-1}}{s^{k}},\frac{1}{s})\max(\frac{d^2}{n^2},1).
$$
The second term, by Proposition~\ref{prop:muaAmub}, is bounded by $O(\frac{d+n}{n^2})$, and summing up the two bounds yields the desired variance bound. 
\end{proof}

Before proving Proposition~\ref{prop:poly}, we first briefly show that the quantity computed in Algorithm~\ref{alg:Hgen} is equivalent to the estimators appear in Proposition~\ref{prop:mom-aa} and Proposition~\ref{prop:mom-aap}. 

\begin{fact}
For any $t\ge 1$, $\frac{1}{\binom{n}{2}}\y_a^T(X_a\frac{X^TG^{t-1}X}{\binom{s}{t}}X_a^T)_{up}\y_a = \beta_a^T\Sigma^{2+t}\beta_a$ and $\frac{1}{n^2}\y_a^TX_a\frac{X^TG^{t-1}X}{\binom{s}{t}}X_{a'}^T\y_{a'} =\beta_a^T\Sigma^{2+t}\beta_{a'}$
\end{fact}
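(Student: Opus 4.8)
The statement should be read as a claim about expectations (equivalently, unbiasedness): the plan is to show that, after unwinding the matrix notation in Algorithm~\ref{alg:Hgen}, the quantity $\frac{1}{\binom{n}{2}}\y_a^\top(X_a\frac{X^\top G^{t-1}X}{\binom{s}{t}}X_a^\top)_{up}\y_a$ is \emph{literally} the estimator appearing in Proposition~\ref{prop:mom-aa} with $k=t$, and likewise $\frac{1}{n^2}\y_a^\top X_a\frac{X^\top G^{t-1}X}{\binom{s}{t}}X_{a'}^\top\y_{a'}$ is literally the estimator of Proposition~\ref{prop:mom-aap} with $k=t$; the claimed values $\beta_a^\top\Sigma^{t+2}\beta_a$ and $\beta_a^\top\Sigma^{t+2}\beta_{a'}$ for their expectations then follow immediately from those propositions.

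The one algebraic fact to verify is
\[
X^\top G^{t-1}X=\sum_{1\le i_1<i_2<\cdots<i_t\le s}\x_{i_1}\x_{i_1}^\top\x_{i_2}\x_{i_2}^\top\cdots\x_{i_t}\x_{i_t}^\top .
\]
Since $G=(XX^\top)_{up}$ has entries $G_{p,q}=\x_p^\top\x_q$ for $p<q$ and $G_{p,q}=0$ otherwise, writing out the $(t-1)$-st power gives $(G^{t-1})_{p,q}=\sum(\x_{i_1}^\top\x_{i_2})(\x_{i_2}^\top\x_{i_3})\cdots(\x_{i_{t-1}}^\top\x_{i_t})$, where the sum is over chains $p=i_1<i_2<\cdots<i_t=q$ and every index tuple that is not strictly increasing drops out because of the masking. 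Then $X^\top G^{t-1}X=\sum_{p,q}(G^{t-1})_{p,q}\,\x_p\x_q^\top$ collapses to a single sum over strictly increasing $t$-tuples, and the telescoping identity $\x_{i_1}(\x_{i_1}^\top\x_{i_2})(\x_{i_2}^\top\x_{i_3})\cdots(\x_{i_{t-1}}^\top\x_{i_t})\x_{i_t}^\top=\x_{i_1}\x_{i_1}^\top\x_{i_2}\x_{i_2}^\top\cdots\x_{i_t}\x_{i_t}^\top$ (consecutive scalar inner products absorbed into the outer products) yields the display. The boundary case $t=1$ is $G^0=I_s$, giving $X^\top X=\sum_i\x_i\x_i^\top$, consistent with a ``chain'' consisting of a single index.

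To finish, I would reduce the two $\y$-sandwiched expressions. Writing $M$ for $\frac{1}{\binom{s}{t}}X^\top G^{t-1}X$: since $\frac{1}{n}\y_a^\top X_a=\frac{1}{n}\sum_i y_{a,i}\x_{a,i}^\top=\mu_a^\top$ and $\frac{1}{n}X_{a'}^\top\y_{a'}=\mu_{a'}$, the off-diagonal expression equals $\mu_a^\top M\mu_{a'}$, which with the display above is exactly the estimator of Proposition~\ref{prop:mom-aap}; and since $(X_aMX_a^\top)_{up}$ has $(i,j)$-entry $\x_{a,i}^\top M\x_{a,j}$ for $i<j$ and $0$ otherwise, the diagonal expression equals $\frac{1}{\binom{n}{2}}\sum_{i<j}y_{a,i}y_{a,j}\x_{a,i}^\top M\x_{a,j}$, exactly the estimator of Proposition~\ref{prop:mom-aa}. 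Together with the $a_0I_d$ term in $P$ and linearity of expectation, this also gives $\E[\hat H_{a,a}]=\beta_a^\top p(\Sigma)\beta_a$ and $\E[\hat H_{a,a'}]=\beta_a^\top p(\Sigma)\beta_{a'}$, which is the form used in Proposition~\ref{prop:poly}. There is no genuine analytic obstacle here; the only point needing care is the combinatorial bookkeeping of the two masking operators, namely checking that expanding $G^{t-1}$ and applying $(\cdot)_{up}$ to $X_aMX_a^\top$ produces each strictly increasing index tuple exactly once, which is precisely what makes the $1/\binom{s}{t}$ and $1/\binom{n}{2}$ normalizations correct.
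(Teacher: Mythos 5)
Your proposal is correct and follows essentially the same route as the paper: expand the two masking operators, observe that the strictly upper-triangular structure of $G$ (and of the outer $(\cdot)_{up}$) collapses the sums to strictly increasing index tuples, identify the resulting expressions with the estimators of Proposition~\ref{prop:mom-aa} and Proposition~\ref{prop:mom-aap}, and invoke their unbiasedness. Your reading of the stated equalities as equalities in expectation is also exactly how the paper treats them.
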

\begin{proof}
Denote $A = (X_a\frac{X^TG^{t-1}X}{\binom{s}{t}}X_a^T)_{up}$. $A_{i,j}$ can be expanded as $$\frac{1}{\binom{s}{t}}\sum_{i_1,i_2,\ldots,i_t}\x^T_{a,i}\x_{i_1}\x^T_{i_1}\x_{i_2}\x^T_{i_2}\x_{i_3}\ldots\x_{i_t}\x_{a,j}$$.
Since $G$ is an upper triangular matrix, the summation is equivalent to $$
\frac{1}{\binom{s}{t}}\sum_{i_1<i_2<\ldots<i_t}\x^T_{a,i}\x_{i_1}\x^T_{i_1}\x_{i_2}\x^T_{i_2}\x_{i_3}\ldots\x_{i_t}\x_{a,j}.
$$ We can further expand $\frac{1}{\binom{n}{2}\binom{s}{t}}\y_a^TA\y_a$ as $\frac{1}{\binom{n}{2}\binom{s}{t}}\sum_{i,j}y_{a,i}A_{i,j}y_{a,j} = \frac{1}{\binom{n}{2}\binom{s}{t}}\sum_{i<j}y_{a,i}A_{i,j}y_{a,j}$ since $A$ is an upper triangular matrix. Plugging in the expansion of $A_{i,j}$, we get the expansion
$$
\frac{1}{\binom{s}{k}\binom{n}{2}}\sum_{i<j}y_{a,i}y_{a,j}\x_{a,i}^T\left( \sum_{i_1<i_2<\ldots<i_{t}}\x_{i_1}\x_{i_1}^T\x_{i_2}\x_{i_2}^T\ldots\x_{i_{t}}\x_{i_t}^T\right)\x_{a,j})
$$, which by Proposition~\ref{prop:mom-aa} is an unbiased estimator of $\beta_a\Sigma^{t+2}\beta_a$. The case for $\beta_a\Sigma^{t+2}\beta_{a'}$ can be proved analogously.
\end{proof}
We restate Proposition~\ref{prop:poly} as follows:

\smallskip
\vspace{.2cm}\noindent \textbf{Proposition~\ref{prop:poly}.} \emph{ Let $p(x)$ be a degree $k+2$ polynomial $p(x)=\sum_{i=0}^{k} a_i x^{i+2}$ that approximates the function $f(x)=x$ for all $x\in [\sigma_{\min},\sigma_{\max}],$ where $\sigma_{\min}$ and $\sigma_{\max}$ are the minimum and maximum singular values of $\Sigma$. Let $P = a_0I_d + \sum_{t=1}^k \frac{a_t}{\binom{s}{t}}X^TG^{t-1}X$ be the matrix $P$ defined in Algorithm~\ref{alg:Hgen}. We have that for any $a\ne a'$,
$$
\E[(\frac{\y_a^T(X_aPX_a^T)_{up}\y_a}{\binom{n}{2}}-\beta_a\Sigma\beta_a)^2]\le \min(\frac{4}{k^4},4e^{-2(k-1)\sqrt{\frac{{\sigma_{min}}}{{\sigma_{max}}}}})+f(k)\max(\frac{d^{k}}{s^{k}},1)\frac{d+n}{n^2}.
$$
$$
\E[(\frac{\y_a^TX_aPX_{a'}^T\y_{a'}}{\binom{n}{2}}-\beta_a\Sigma\beta_{a'})^2]\le \min(\frac{4}{k^4},4e^{-2(k-1)\sqrt{\frac{{\sigma_{min}}}{{\sigma_{max}}}}})+f(k)\max(\frac{d^{k}}{s^{k}},1)\frac{d+n}{n^2}.
$$
for $f(k) = k^{O(k)}$.
}

\begin{proof}[Proof of Proposition~\ref{prop:poly}]
Notice that $P = a_0I_d + \sum_{t=1}^k \frac{a_t}{\binom{s}{t}}X^TG^{t-1}X$. By definition, we have
\begin{align*}
&\E[(\frac{\y_a^T(X_aPX_a^T)_{up}\y_a}{\binom{n}{2}}-\beta_a\Sigma\beta_a)^2] \\
&= \E[\biggl(a_0\frac{\y_a^T(X_aX_a^T)_{up}\y_a}{\binom{n}{2}}-\beta_a^T\Sigma^2\beta_a+\sum_{t=1}^k\left(a_t\frac{a_t\y_a^T(X_aX^TG^{t-1}XX_a^T)_{up}\y_a}{\binom{n}{2}}-a_t\beta_a^T\Sigma^{2+t}\beta_a\right)\\
&+\sum_{t=0}^k\beta_a^T\Sigma^{2+t}\beta_a-\beta_a\Sigma\beta_a\biggr)^2]\\
&= \E[\left(a_0\frac{\y_a^T(X_aX_a^T)_{up}\y_a}{\binom{n}{2}}-\beta_a^T\Sigma^2\beta_a+\sum_{t=1}^k\left(a_t\frac{a_t\y_a^T(X_aX^TG^{t-1}XX_a^T)_{up}\y_a}{\binom{n}{2}}-a_t\beta_a^T\Sigma^{2+t}\beta_a\right)\right)^2]\\
&+\biggl(\sum_{t=0}^k\beta_a^T\Sigma^{2+t}\beta_a-\beta_a\Sigma\beta_a\biggr)^2,
\end{align*}
where in the last inequality we use the unbiasedness of these estimators. By Proposition~\ref{prop:mom-aa}, Proposition~\ref{prop:mom-aap} and due to the fact that for any random variable $X_1,X_2,\ldots,X_k$, $\E[(X_1+X_2+\ldots+X_k)^2] = \sum_{i,j}\E[X_iX_j] \le \sum_{i,j}\sqrt{\E[X_i^2]\E[X_j^2]} = (\sum_{i}\sqrt{\E[X_i^2]})^2$. The above equation is bounded by 
\begin{align*}
\le k^2f'(k)\max(\frac{d^{k}}{s^{k}},\frac{d}{s})\frac{d+n}{n^2}+\biggl(\sum_{t=0}^k\beta_a^T\Sigma^{2+t}\beta_a-\beta_a\Sigma\beta_a\biggr)^2\\
\le f(k)\max(\frac{d^{k}}{s^{k}},1)\frac{d+n}{n^2}+ \min(\frac{4}{k^4},4e^{-2(k-1)\sqrt{\frac{{\sigma_{min}}}{{\sigma_{max}}}}})
\end{align*} where we have applied Proposition 3 of~\cite{kong2018estimating} in the last inequality. The case for $\beta_a\Sigma\beta_{a'}$ can be handled analogously, and this concludes the proof.
\end{proof}
The following are the auxiliary propositions that facilitate the proof in this sections.
\begin{fact}\label{fact:mu-a-bound}
For each arm $a$, let $\mu_a = \frac{\sum_{j}y_{a,j}\x_{a,j}}{n}$. Then $\E[\|\mu_a\|^2] = O(\frac{d+n}{n}) $
\end{fact}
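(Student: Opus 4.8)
The plan is to expand the squared norm and compute its expectation term by term. Write $\|\mu_a\|^2 = \frac{1}{n^2}\sum_{j,j'} y_{a,j}y_{a,j'}\,\x_{a,j}^T\x_{a,j'}$ and split the $n^2$ summands into the $n$ diagonal pairs ($j=j'$) and the $n(n-1)$ off-diagonal pairs ($j\ne j'$). Throughout I will use $y_{a,j} = \beta_a^T\x_{a,j} + \eta_{a,j}$ (as is assumed in the body once the bias estimate has been subtracted off), with $\eta_{a,j}$ mean-zero and independent of $\x_{a,j}$, together with the standing assumptions that $\|\beta_a\|$, $\|\Sigma\|$ and $\sigma$ are bounded by constants.

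First I would dispatch the off-diagonal terms. By independence across samples, $\E[y_{a,j}y_{a,j'}\,\x_{a,j}^T\x_{a,j'}] = \|\E[y_{a,1}\x_{a,1}]\|^2$, and since $\E[y_{a,1}\x_{a,1}] = \E[(\beta_a^T\x_{a,1})\x_{a,1}] = \Sigma\beta_a$ this equals $\|\Sigma\beta_a\|^2 \le \|\Sigma\|^2\|\beta_a\|^2 = O(1)$. Hence the off-diagonal contribution to $\E[\|\mu_a\|^2]$ is $\frac{n(n-1)}{n^2}\cdot O(1) = O(1)$.

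Next I would bound the diagonal contribution $\frac{1}{n}\E[y_{a,1}^2\|\x_{a,1}\|^2]$. Expanding $y_{a,1}^2 = (\beta_a^T\x_{a,1})^2 + 2\eta_{a,1}(\beta_a^T\x_{a,1}) + \eta_{a,1}^2$, the cross term drops by independence and mean-zeroness of $\eta_{a,1}$; the pure-noise term contributes $\E[\eta_{a,1}^2]\,\E[\|\x_{a,1}\|^2] \le \sigma^2\tr(\Sigma) = O(d)$; and the remaining term $\E[(\beta_a^T\x_{a,1})^2\|\x_{a,1}\|^2]$ I would evaluate with a Gaussian fourth-moment identity — writing $\x_{a,1} = \Sigma^{1/2}\z$ with $\z\sim N(0,I_d)$ and applying Fact~\ref{fact:gaussian4mom} coordinate-wise to $\|\x_{a,1}\|^2 = \sum_i x_i^2$ — giving $(\beta_a^T\Sigma\beta_a)\tr(\Sigma) + 2\,\beta_a^T\Sigma^2\beta_a = O(d)$. (Equivalently, Cauchy--Schwarz together with $\E[(\beta_a^T\x_{a,1})^4] = O(1)$ and $\E[\|\x_{a,1}\|^4] = O(d^2)$ yields the same $O(d)$ bound.) Thus the diagonal contribution is $\frac{1}{n}\cdot O(d) = O(d/n)$.

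Adding the two pieces gives $\E[\|\mu_a\|^2] = O(1) + O(d/n) = O\big(\frac{d+n}{n}\big)$, as claimed. I do not expect a genuine obstacle: the statement is a routine second-moment estimate, and the only places that require attention are the independence of $\eta_{a,1}$ from $\x_{a,1}$ and the repeated use of the boundedness of $\|\beta_a\|$, $\|\Sigma\|$ and $\sigma$. If one wishes to track constants in the style of Propositions~\ref{prop:Hkk-iso}--\ref{prop:Hkkp-iso}, the entire computation can be recast in terms of $\sigma_a^2 = \Var[y_{a,1}]$ instead, at no real cost.
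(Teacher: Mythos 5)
Your proof is correct and follows essentially the same route as the paper: expand $\|\mu_a\|^2$ into the $n(n-1)$ off-diagonal cross terms (each contributing $\|\Sigma\beta_a\|^2=O(1)$ by independence) plus the $n$ diagonal terms (each $\E[y_{a,i}^2\|\x_{a,i}\|^2]=O(d)$ via the Gaussian fourth-moment bound), yielding $O(1)+O(d/n)=O(\frac{d+n}{n})$. The paper's version is just a terser rendering of the same decomposition and bounds.
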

\begin{proof}
$\E[\|\mu_a\|^2] = \frac{1}{n^2}(\sum_{i\ne j}y_{a,i}y_{a,j}\x_{a,i}^T\x_{a,j} + \sum_i y_{a,i}^2\x_{a,i}^T\x_{a,i}) = O(\|\beta_a\|^2) + \frac{1}{n^2}\sum_i y_{a,i}^2\x_{a,i}^T\x_{a,i}\le O((1+\frac{nd}{n^2})(\|\beta_a\|^2+\sigma^2)) = O(\frac{n+d}{n})$.
\end{proof}
The following proposition is a slightly more stronger version of Proposition~\ref{prop:Hkkp-iso}.
We omit the proof since it is almost identical to the proof of Proposition~\ref{prop:Hkkp-iso}.
\begin{proposition}\label{prop:muaAmub}
For arm $a\ne a'$, let $\mu_a = \frac{\sum_{j}y_{a,j}\x_{a,j}}{n}, \mu_{a'} = \frac{\sum_{j}y_{a',j}\x_{a',j}}{n}$. Let $A$ be a real $d\times d$ matrix such that$\|A\|=O(1)$. We have $\Var[\mu_a A\mu_b] =O(\frac{d+n}{n^2})$
\end{proposition}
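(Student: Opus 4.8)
\noindent\emph{Proof plan for Proposition~\ref{prop:muaAmub}.} The plan is to reproduce the proof of Proposition~\ref{prop:Hkkp-iso} almost line for line, with the identity replaced by the fixed matrix $A$ and the isotropic fourth-moment identity (Fact~\ref{fact:gaussian4mom}) upgraded to its anisotropic form for $\x\sim N(0,\Sigma)$; all suppressed constants are allowed to depend on $\|A\|$, $\|\Sigma\|$ and $\sigma_a,\sigma_{a'}$, which are $O(1)$ by assumption.

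First I would expand $\mu_a^T A\mu_{a'}=\frac{1}{n^2}\sum_{i,j}y_{a,i}y_{a',j}\,\x_{a,i}^T A\x_{a',j}$ and, using that the $n$ samples drawn for arm $a$ are independent of those for arm $a'$, record $\E[\mu_a^T A\mu_{a'}]=(\Sigma\beta_a)^T A(\Sigma\beta_{a'})=:m$. Then I would write
\[
\Var[\mu_a^T A\mu_{a'}]=\frac{1}{n^4}\sum_{i,j,i',j'}\Big(\E\big[y_{a,i}y_{a',j}y_{a,i'}y_{a',j'}(\x_{a,i}^T A\x_{a',j})(\x_{a,i'}^T A\x_{a',j'})\big]-m^2\Big)
\]
and split the quadruples $(i,i',j,j')$ into the same three cases as in Proposition~\ref{prop:Hkkp-iso}. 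If $i\neq i'$ and $j\neq j'$ the four samples are distinct, the expectation factors as $m^2$, and the term is $0$. If $i=i'$ and $j\neq j'$, integrating out the two independent arm-$a'$ samples replaces each $y_{a',\cdot}\x_{a',\cdot}$ by $\Sigma\beta_{a'}$, leaving $\E[y_{a,i}^2(\x_{a,i}^T A\Sigma\beta_{a'})^2]-m^2$; expanding $y_{a,i}=\beta_a^T\x_{a,i}+\eta_{a,i}$, discarding the noise cross term, using $\E[\eta_{a,i}^2]\le\sigma^2$, and applying the anisotropic fourth-moment identity to $\E[(\beta_a^T\x)^2(\x^T A\Sigma\beta_{a'})^2]$ shows this term is $O(1)$. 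The case $i\neq i'$, $j=j'$ is symmetric and also $O(1)$. If $i=i'$ and $j=j'$, integrating out the arm-$a'$ sample leaves $\E\big[y_{a,i}^2\,\x_{a,i}^T A W A^T\x_{a,i}\big]$ with $W=\E[y_{a',j}^2\x_{a',j}\x_{a',j}^T]\preceq O(1)\big(\Sigma+\Sigma\beta_{a'}\beta_{a'}^T\Sigma\big)$; the rank-one piece again contributes $O(1)$, while the $\Sigma$ piece contributes $\E[y_{a,i}^2\,\x_{a,i}^T A\Sigma A^T\x_{a,i}]=O(\tr(A\Sigma A^T\Sigma))=O(d)$, using $\tr(A\Sigma A^T\Sigma)\le\|A\|^2\|\Sigma\|\tr(\Sigma)=O(d)$.

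To finish I would count quadruples: each of the two middle cases has $O(n^3)$ of them and the last has $n^2$, so
\[
\Var[\mu_a^T A\mu_{a'}]\le\frac{1}{n^4}\big(O(n^3)\cdot O(1)+n^2\cdot O(d)\big)=O\!\left(\frac{d+n}{n^2}\right).
\]
The step needing slightly more care than the identity-covariance argument is isolating the trace term $\tr(A\Sigma A^T\Sigma)$ in the $i=i',\,j=j'$ case — one must resist bounding $\E[(\x_{a,i}^T A\x_{a',j})^2]$ crudely, since the clean bound (a routine instance of Isserlis' theorem, e.g. $\E_{\x\sim N(0,\Sigma)}[(\u^T\x)^2(\x^T M\x)]=(\u^T\Sigma\u)\tr(M\Sigma)+2\,\u^T\Sigma M\Sigma\u$) is what keeps that case at $O(d)$ rather than $O(d^2)$ and thereby produces exactly the $d/n^2$ contribution. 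Everything else is identical bookkeeping to Proposition~\ref{prop:Hkkp-iso}, so I do not anticipate a genuine obstacle.
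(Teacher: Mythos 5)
Your proposal is correct and is exactly the route the paper intends: the paper omits the proof of Proposition~\ref{prop:muaAmub}, stating it is ``almost identical'' to that of Proposition~\ref{prop:Hkkp-iso}, and your argument is precisely that adaptation, with the anisotropic fourth-moment/Isserlis identities correctly supplying the $O(1)$ bounds in the $i=i'$, $j\ne j'$ case and the $\tr(A\Sigma A^T\Sigma)=O(d)$ bound in the $i=i'$, $j=j'$ case. The bookkeeping over quadruples and the resulting $O\bigl(\frac{d+n}{n^2}\bigr)$ bound match the paper's intended proof.
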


\subsection{Proof of Theorem~\ref{thm:unknownc} and Corollary~\ref{cor:unknown}, Main Result in the Unknown Covariance Setting}

We are ready to prove the main theorem of the unknown covariance, and we restate Theorem~\ref{thm:unknownc} as follows,

\smallskip
\vspace{.2cm}\noindent \textbf{Theorem~\ref{thm:unknownc}.} \emph{In the unknown covariance setting, for any positive integer $k$, with probability $1-\delta$, Algorithm~\ref{alg:main} estimates the optimal reward OPT with additive error:
$$
|OPT-\widehat{OPT}| \le O\left(\sqrt{\log K} {\left(\min(\frac{1}{k^2},e^{-(k-1)\sqrt{\frac{{\sigma_{min}}}{{\sigma_{max}}}}})+f(k)\max(\frac{d^{k/2}}{s^{k/2}},1)\sqrt{\frac{d+n}{n^2}}\right)}^{1/2}\right),
$$
where $f(k) = k^{O(k)}$.}

\begin{proof}[Proof of Theorem~\ref{thm:unknownc}]
Applying Fact~\ref{fact:boosting} on top of Corollary~\ref{cor:Hgen}, we have that for a fixed $i,j$, with probability at least $1-\delta/(K^2+K)$, the median estimates $\hat{H}$ of Algorithm~\ref{alg:main} satisfies $$
|\hat{H}_{i,j}-H_{i,j}|\le \min(\frac{2}{k^2},2e^{-(k-1)\sqrt{\frac{{\sigma_{min}}}{{\sigma_{max}}}}})+f(k)\max(\frac{d^{k/2}}{s^{k/2}},1)\sqrt{\frac{d+n}{n^2}}.
$$
Applying Fact~\ref{fact:boosting} with Corollary~\ref{cor:var-b}, we get that for a fixed $i$, with probability at least $1-\delta/(K^2+K)$, the median estimates $\hat{\b}$ of Algorithm~\ref{alg:main} satisfies $|\hat{b}_{i}-b_{i}|= O(\sqrt{\frac{1}{n}})$. Hence by a union bound, we have that with probability at least $1-\delta$, $\hat{H}$ and $\hat{\b}$ satisfy
$$
|\hat{H}_{i,j}-H_{i,j}|\le \min(\frac{2}{k^2},2e^{-(k-1)\sqrt{\frac{{\sigma_{min}}}{{\sigma_{max}}}}})+f(k)\max(\frac{d^{k/2}}{s^{k/2}},1)\sqrt{\frac{d+n}{n^2}}.
$$
$$
\quad  \max_{i} |\hat{b}_{i}-b_{i}|\le O(\sqrt{\frac{1}{n}}).
$$
In order to bound the discrepancy between $\hat{H}^{(PSD)}$ and $H$, notice that by the optimality of $\hat{H}^{(PSD)}$, there is $\max |\hat{H}^{(PSD)}_{i,j}-\hat{H}_{i,j}|\le \max |H_{i,j}-\hat{H}_{i,j}|$. Applying triangle inequality, we have 
\begin{align*}
&\max |\hat{H}^{(PSD)}_{i,j}-H_{i,j}|\\
\le& \max |\hat{H}^{(PSD)}_{i,j}-\hat{H}_{i,j}|+\max |H_{i,j}-\hat{H}_{i,j}|\\
\le& 2|H_{i,j}-\hat{H}_{i,j}|
\end{align*} 
Thus, by Proposition~\ref{prop:bH2OPT}, with probability $1-\delta$ the final estimation error is bounded by
\begin{align*}
|OPT-\widehat{OPT}| &\le O\left(\sqrt{\log K}\left(\min(\frac{1}{k^2},e^{-(k-1)\sqrt{\frac{{\sigma_{min}}}{{\sigma_{max}}}}})+f(k)\max(\frac{d^{k/2}}{s^{k/2}},1)\sqrt{\frac{d+n}{n^2}}\right)^{1/2}\right).
\end{align*}
\end{proof}

\begin{proof}[Proof of Corollary~\ref{cor:unknown}]
Let $C$ be the constant in Theorem~\ref{thm:unknownc}. We can find constants $C_1, C_2$ such that setting $k = \min(C_1\sqrt{\log K}/\eps, \sqrt{\frac{\sigma_{\max}}{\sigma_{\min}}}(\log(\log K/\eps^2)+C_2))$ implies that 
\begin{align*}
C^2\log K\min(\frac{1}{k^2},e^{-(k-1)\sqrt{\frac{{\sigma_{min}}}{{\sigma_{max}}}}})\le \frac{\eps^2}{2}.
\end{align*}
Then we set 
$$
n=\Theta\Big(\max(\frac{(\log K)^{2/(k+2)}k^{O(1)}d^{1-1/(k+2)}}{\eps^{4/(k+2)}K^{1-2/(k+2)}}, \frac{k^{O(k)}\log K\sqrt{d}}{\eps^2})\Big), 
$$
and it can be verified that
\begin{align*}
C\log K f(k)\max(\frac{d^{k/2}}{(Kn/2)^{k/2}},1)\sqrt{\frac{d+n}{n^2}} \le \frac{\eps^2}{2},
\end{align*}
where we have applied the assumption that $\eps\ge \frac{\sqrt{\log K}}{d^{1/4}}$. Given our assumption on $k$, it is straightforward to verify that $(\log K)^{2/(k+2)}/\eps^{4/(k+2)}=O(1)$. Hence the condition on $n$ can be simplified to
$$
n=\Theta(\max(\frac{k^{O(1)}d^{1-1/(k+2)}}{K^{1-2/(k+2)}}, \frac{k^{O(k)}\log K\sqrt{d}}{\eps^2})). 
$$
Given these $n$ and $k$, it is not hard to verify that
\begin{align*}
C\sqrt{\log K} &(\min(\frac{1}{k^2},e^{-(k-1)\sqrt{\frac{{\sigma_{min}}}{{\sigma_{max}}}}})\\
&+f(k)\max(\frac{d^{k/2}}{(Kn/2)^{k/2}},1)\sqrt{\frac{d+n}{n^2}})^{1/2}\le \eps,
\end{align*}
and this concludes the proof.
\end{proof}
\subsection{Proof of Corollary~\ref{cor:gen-unlabeled}, Estimating OPT with a Large Set of Unlabeled Examples}
\begin{proof}[Proof of Corollary~\ref{cor:gen-unlabeled}]
Denote multiset $S = \{\x_1,\ldots, \x_{s}\}$ as the set of unlabeled examples where $s = \Theta((d+\log 1/\delta)\log^2 K/\eps^4)$, and $\hat{\Sigma} = \frac{1}{s}\sum_{i=1}^s\x_i\x_i^T$ as the covariance estimator. By standard matrix concentration results (e.g. Corollary 5.50 in~\cite{vershynin2010introduction}), we have that with probability $1-\delta/2$, $(1-\eps^2/\log K) I \preceq \hat{\Sigma}^{-1/2}\Sigma\hat{\Sigma}^{-1/2}\preceq (1+\eps^2/\log K)I$.

Then, we execute the known covariance version of Algorithm~\ref{alg:main} but scale each context $\x_{i,j}$ as $\hat{\Sigma}^{-1/2}\x_{i,j}$. Notice that the scaled contexts has variance $\tilde{\Sigma} := \hat{\Sigma}^{-1/2}\Sigma\hat{\Sigma}^{-1/2}$, and we define $\tilde{\beta_i} := \hat{\Sigma}^{1/2}\beta_i$ as the scaled coefficient vectors. As in the proof of Corollary~\ref{cor:isotropic}, we set $n = \Theta(\frac{\sqrt{d}\log K}{\eps^2})$ which implies with probability $1-\delta/2$, the error due to the variance is $\max_{i,j}|\hat{H}_{i,j} - \tilde{\beta_i}^T \tilde{\Sigma}^2 \tilde{\beta_j}| = O(\eps^2/\log K) $.
The bias term is bounded as
\begin{align*}
|H_{i,j}-\tilde{\beta_i}^T\tilde{\Sigma}^2\tilde{\beta_j}| &=~|\tilde{\beta_i}^T\tilde{\Sigma}\tilde{\beta_j}-\tilde{\beta_i}^T\tilde{\Sigma}^2\tilde{\beta_j}|\\
&=~O(\|\tilde{\beta_i}\|\|\tilde{\beta_j}\|\eps^2/\log K),
\end{align*}
where the last equality holds since $\max_{x\in [1-\eps^2/\log K, 1+\eps^2/\log K]}[|x^2-x|] = O(\eps^2/\log K)$. Since we assume that $\beta_i^T\Sigma\beta_i$ is bounded by a constant, $\|\tilde{\beta_i}\|^2 = \beta_i^T\hat{\Sigma}\beta_i$ is also bounded by a constant. Hence we have 
$$
\max_{i,j} |\hat{H}_{i,j}- H_{i,j}| = O(\eps^2/\log K).
$$
The remaining proof follows from the same argument in the proof of Theorem~\ref{thm:iso}.
\end{proof}
\section{Extension to the Mixture of Gaussians Setting}
\textbf{Problem setting:} In this section, we extend our result to the mixture of Gaussians setting, where we assume each context $\x$ is drawn from a known mixture of Gaussians distribution $\sum_{i=1}^M \alpha_i N(\mu_i,\Sigma_i)$, meaning that the means $\mu_i$', covariances $\Sigma_i$'s and mixing weights $\alpha_i$'s are all known to the algorithm. WLOG, we may assume that the mean of the mixture of Gaussians is $0$ and the covariance is identity, namely $\sum_{i=1}^M\alpha_i\mu_i = 0$ and $\sum_{i=1}^M\alpha_i(\mu_i\mu_i^\top +\Sigma_i) = I_d$, since we can always re-project the data to make the condition holds. As usual, we still assume that all $\|\beta_i\|$ and the variance of the noise $\sigma_i$ are bounded.

The following simple proposition shows that the optimal expected reward in the mixture of Gaussian model is simply the linear combination of the optimal expected reward for each component.
\begin{proposition}
In the setting where each context is drawn from a known mixture of Gaussians distribution $\sum_{m=1}^M \alpha_m N(\mu_m, \Sigma_m)$, the optimal reward has the following form:
$$
\sum_{m=1}^M \alpha_m\E_{\x\sim N(\b^{(m)}, H^{(m)})}[\max_{k\in [K]} x_k],
$$
where $\b^{(m)} = (\beta_1^\top \mu_m+b_1, \beta_2^\top \mu_m+b_2, \ldots, \beta_K^\top \mu_m+b_K)\in R^K$, and $H^{(m)}_{k,k'} = \beta_{k}\Sigma_m\beta_{k'}$.
\end{proposition}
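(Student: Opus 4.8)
The plan is to condition on the identity of the mixture component that generated the context, and then recognize each conditional expectation as an instance of the quantity $\E_{\r\sim N(\b,H)}[\max_k r_k]$ already studied in the known-covariance analysis. First I would introduce the latent component indicator $Z\in\{1,\dots,M\}$ with $\PB[Z=m]=\alpha_m$, so that conditioned on $Z=m$ the context $\x$ is distributed as $N(\mu_m,\Sigma_m)$. By the law of total expectation,
$$
OPT \;=\; \E_\x\big[\max_a(\beta_a^T\x+b_a)\big] \;=\; \sum_{m=1}^M \alpha_m\, \E_{\x\sim N(\mu_m,\Sigma_m)}\big[\max_a(\beta_a^T\x+b_a)\big].
$$

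Next I would fix a component $m$ and analyze the inner expectation. Define the $K$-dimensional random vector $\r = (\beta_1^T\x+b_1,\dots,\beta_K^T\x+b_K)$ when $\x\sim N(\mu_m,\Sigma_m)$; this is an affine image of a Gaussian, hence Gaussian. Its mean is the vector with $k$-th entry $\beta_k^T\mu_m+b_k$, which is exactly $\b^{(m)}$, and its covariance has $(k,k')$ entry $\cov(\beta_k^T\x,\beta_{k'}^T\x)=\beta_k^T\Sigma_m\beta_{k'}=H^{(m)}_{k,k'}$, i.e. $\r\sim N(\b^{(m)},H^{(m)})$. Therefore
$$
\E_{\x\sim N(\mu_m,\Sigma_m)}\big[\max_a(\beta_a^T\x+b_a)\big] \;=\; \E_{\r\sim N(\b^{(m)},H^{(m)})}\big[\max_{k\in[K]} r_k\big].
$$
Substituting this into the total-expectation identity above yields the claimed formula.

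There is essentially no obstacle here: the argument is the same "reduction to a $K$-dimensional Gaussian maximum" used in the single-Gaussian case, applied component-wise, plus linearity of expectation over the mixture. The only points requiring a word of care are that $H^{(m)}$ is automatically PSD (being a genuine covariance matrix, so the downstream Monte-Carlo / projection steps of Algorithm~\ref{alg:main} are well-posed) and that all the boundedness assumptions ($\|\mu_i\|$, $\|\Sigma_i\|$, $\|\beta_a\|$, $\sigma_a$ bounded) are inherited by each component, so that the per-component estimation guarantees of Corollary~\ref{cor:Hiso} apply; these are needed for Theorem~\ref{thm:main-mog} but not for the structural identity itself.
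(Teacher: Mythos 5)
Your proposal is correct and follows essentially the same route as the paper: condition on the mixture component, apply the single-Gaussian reduction (the reward vector is Gaussian with mean $\b^{(m)}$ and covariance $H^{(m)}$) to each component, and average with weights $\alpha_m$. Your version simply makes explicit the latent-indicator conditioning and the mean/covariance computation that the paper leaves implicit.
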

\begin{proof}
We know from the single Gaussian case that the expected optimal reward for the contexts from the $m$th Gaussian component is 
$$\E_{\x\sim N(\b^{(m)}, H^{(m)})}[\max_{k\in [K]} x_k],
$$
where $\b^{(m)} = (\beta_1^\top \mu_m+b_1, \beta_2^\top \mu_m+b_2, \ldots, \beta_K^\top \mu_m+b_K)\in R^K$, and $H^{(m)}_{k,k'} = \beta_{k}\Sigma_m\beta_{k'}$. The overall optimal expected reward is the weight average of all these rewards with weights $\alpha_k$'s.
\end{proof}
In the following two propositions, we give the estimators for the parameters corresponding to each Gaussian compoennt, $\b^{(m)}$, $H^{(m)}$ and prove the corresponding variance bounds. Our estimators can be applied to the mixture of Gaussian setting since it only requires the fourth moment of the distribution of $\x$ to be bounded. Before stating our two propositions, we state the following simple fourth moment property of mixture of Gaussian distribution without proofs.
\begin{fact}
Suppose $\x\sim \sum_{i=1}^M \alpha_i N(\mu_i,\Sigma_i)$ and $\E[\x]=0$, $\E[\x\x^\top] = I_d$, it holds for all unit $d$-dimensional vectors $\u,\v$ that
$$
(\u^\top \x)^2(\v^\top \x)^2 = O(1)
$$
\end{fact}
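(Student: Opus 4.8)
The plan is to prove the (more explicitly stated) bound $\E\big[(\u^\top\x)^2(\v^\top\x)^2\big] = O(1)$ for all unit vectors $\u,\v$, where the hidden constant depends only on the bounds $\|\mu_i\|, \|\Sigma_i\| = O(1)$ carried over from the hypothesis of Theorem~\ref{thm:main-mog}. First I would condition on which mixture component generated $\x$, so that
$$
\E\big[(\u^\top\x)^2(\v^\top\x)^2\big] = \sum_{i=1}^M \alpha_i\, \E_{\x\sim N(\mu_i,\Sigma_i)}\big[(\u^\top\x)^2(\v^\top\x)^2\big];
$$
this reduces the claim to a bound on the fourth moment under a single Gaussian component that is uniform in $i$.

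Next I would compute the single-Gaussian fourth moment by shifting to the mean. For $\x\sim N(\mu,\Sigma)$, set $a=\u^\top\mu$, $b=\v^\top\mu$, and let $g=\u^\top(\x-\mu)$, $h=\v^\top(\x-\mu)$, so $(g,h)$ is a centered bivariate Gaussian with $\E[g^2]=\u^\top\Sigma\u$, $\E[h^2]=\v^\top\Sigma\v$, and $\E[gh]=\u^\top\Sigma\v$. Expanding $\E[(a+g)^2(b+h)^2]$ and discarding every term with an odd total power of $(g,h)$ (these vanish by Gaussianity) leaves
$$
\E\big[(\u^\top\x)^2(\v^\top\x)^2\big] = a^2b^2 + a^2\,\E[h^2] + b^2\,\E[g^2] + 4ab\,\E[gh] + \E[g^2h^2],
$$
and the last term equals $\E[g^2]\,\E[h^2] + 2\,\E[gh]^2$ by the Gaussian fourth-moment identity (Fact~\ref{fact:gaussian4mom}, applied after the affine reduction $\x-\mu = \Sigma^{1/2}\z$ with $\z\sim N(0,I_d)$).

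Finally I would bound each of the five terms using boundedness: since $\u,\v$ are unit vectors, $|a|,|b|\le\|\mu\|$ and $|\u^\top\Sigma\u|,|\v^\top\Sigma\v|,|\u^\top\Sigma\v|\le\|\Sigma\|$, so each term is at most a fixed polynomial in the constants $\|\mu\|$ and $\|\Sigma\|$, hence $O(1)$ uniformly over the components. Summing over $i$ with $\sum_i\alpha_i=1$ then gives the claimed $O(1)$ bound, with no dependence on $M$ or $d$. The one point that merits care is that the boundedness of the $\mu_i$ and $\Sigma_i$ is genuinely needed and not merely a convenience: the normalization $\sum_i\alpha_i(\mu_i\mu_i^\top+\Sigma_i)=I_d$ by itself does not force a constant fourth moment, since a component of small weight $\alpha$ placed at distance $\Theta(1/\sqrt{\alpha})$ from the origin still respects the normalization but can inflate the fourth moment to $\Theta(1/\alpha)$. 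Thus the proof must explicitly invoke $\|\mu_i\|,\|\Sigma_i\| = O(1)$ from the statement of Theorem~\ref{thm:main-mog}; apart from noting this, the argument is a routine second-order moment expansion.
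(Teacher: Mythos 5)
Your proof is correct. Note, however, that the paper itself offers no argument to compare against: the authors explicitly state this fourth-moment property ``without proofs,'' so your write-up is essentially supplying the missing (routine) verification rather than taking a different route. The steps all check out: conditioning on the mixture component, the mean-shift expansion $\E[(a+g)^2(b+h)^2] = a^2b^2 + a^2\E[h^2] + b^2\E[g^2] + 4ab\,\E[gh] + \E[g^2h^2]$ (odd Gaussian moments vanishing), the Wick identity $\E[g^2h^2]=\E[g^2]\E[h^2]+2\E[gh]^2$ via the reduction $\x-\mu=\Sigma^{1/2}\z$ and Fact~\ref{fact:gaussian4mom}, and the term-by-term bounds $|a|,|b|\le\|\mu_i\|$, $|\u^\top\Sigma_i\v|\le\|\Sigma_i\|$, summed against $\sum_i\alpha_i=1$. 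Your closing caveat is the most valuable part and is worth emphasizing: as literally written, the Fact assumes only $\E[\x]=0$ and $\E[\x\x^\top]=I_d$ (and also omits the expectation operator, evidently a typo), and under those hypotheses alone the claim is false --- your example of a component with weight $\alpha$ and mean of norm $\Theta(1/\sqrt{\alpha})$ satisfies the normalization yet has fourth moment $\Theta(1/\alpha)$. So the bound genuinely relies on the hypotheses $\|\mu_i\|,\|\Sigma_i\|=O(1)$ imported from Theorem~\ref{thm:main-mog}, which is exactly how the Fact is used downstream (e.g.\ in the variance bounds $O(\|\Sigma_m\|^2/n+\tr[\Sigma_m^2]/n^2)$); your proof correctly makes that dependence explicit where the paper's statement leaves it implicit.
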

\begin{proposition}
For each arm $k\in [K]$, and Gaussian component $m\in [M]$, 
\begin{align*}
\hat{b}^{(m)}_{k} &= \frac{1}{n}\sum_{i=1}^n y_{k,i}\x_{k,i}^\top\mu_m .
\end{align*}
Then, for all $k \in [K], m\in [M]$
\begin{align}
\E[\hat{\b}^{(m)}_{k}] &= \beta_{k}^\top \mu_m\\
\Var[\hat{\b}^{(m)}_{k}] &= O(\|\mu_m\|^2/n) 
\end{align}
\end{proposition}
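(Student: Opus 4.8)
The plan is to compute the mean directly and then control the variance by a single second-moment estimate, using only two properties of the (normalized) mixture: that $\E[\x]=0$ and $\E[\x\x^\top]=I_d$, and that it satisfies the bounded fourth-moment property stated just above the proposition, namely $\E[(\u^\top\x)^2(\v^\top\x)^2]=O(1)$ for unit vectors $\u,\v$. I treat the reward as $y_{k,i}=\beta_k^\top\x_{k,i}+\eta_{k,i}$ with $\E[\eta_{k,i}\mid\x_{k,i}]=0$ and $\E[\eta_{k,i}^2\mid\x_{k,i}]\le\sigma^2$; if the additive constant $b_k$ has not been pre-centered, the same computation still applies because the extra term $b_k\,\E[\x_{k,i}^\top\mu_m]$ vanishes, again using $\E[\x]=0$.

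First I would establish unbiasedness. Since the $n$ summands are i.i.d., $\E[\hat b^{(m)}_k]=\E[y_{k,1}\,\x_{k,1}^\top\mu_m]$. Expanding $y_{k,1}$, the noise contribution is $\E[\eta_{k,1}\,\x_{k,1}^\top\mu_m]=\E[(\x_{k,1}^\top\mu_m)\,\E[\eta_{k,1}\mid\x_{k,1}]]=0$, so we are left with $\E[(\beta_k^\top\x_{k,1})(\x_{k,1}^\top\mu_m)]=\beta_k^\top\E[\x_{k,1}\x_{k,1}^\top]\mu_m=\beta_k^\top\mu_m$, which is the first display.

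Next I would bound the variance. By independence of the $n$ samples, $\Var[\hat b^{(m)}_k]=\frac{1}{n}\Var[y_{k,1}\,\x_{k,1}^\top\mu_m]\le\frac{1}{n}\E[y_{k,1}^2(\x_{k,1}^\top\mu_m)^2]$. Writing $y_{k,1}^2=(\beta_k^\top\x_{k,1})^2+2\eta_{k,1}(\beta_k^\top\x_{k,1})+\eta_{k,1}^2$, the cross term drops out by conditioning on $\x_{k,1}$ and $\E[\eta_{k,1}\mid\x_{k,1}]=0$. For the first term I would pull out the norms, so that $\E[(\beta_k^\top\x_{k,1})^2(\mu_m^\top\x_{k,1})^2]=\|\beta_k\|^2\|\mu_m\|^2\,\E[(\hat\beta_k^\top\x)^2(\hat\mu_m^\top\x)^2]=O(\|\beta_k\|^2\|\mu_m\|^2)=O(\|\mu_m\|^2)$ by the fourth-moment fact and the standing assumption $\|\beta_k\|=O(1)$. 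For the last term, conditioning on $\x_{k,1}$ gives $\E[\eta_{k,1}^2(\x_{k,1}^\top\mu_m)^2]\le\sigma^2\,\mu_m^\top\E[\x\x^\top]\mu_m=\sigma^2\|\mu_m\|^2=O(\|\mu_m\|^2)$. Summing, $\E[y_{k,1}^2(\x_{k,1}^\top\mu_m)^2]=O(\|\mu_m\|^2)$, hence $\Var[\hat b^{(m)}_k]=O(\|\mu_m\|^2/n)$, giving the second display.

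There is essentially no real obstacle here; the only points requiring a little care are invoking the fourth-moment property in its correct homogeneous form (the Fact is stated for unit vectors, so the norms $\|\beta_k\|$ and $\|\mu_m\|$ must be extracted explicitly rather than absorbed into the constant) and keeping the bookkeeping of $b_k$ consistent with the pre-centering done in Algorithm~\ref{alg:main}. This proposition is the mixture-setting analog of Corollary~\ref{cor:var-b}, and the computation parallels the $\beta$-estimation arguments used in Proposition~\ref{prop:Hkkp-iso}.
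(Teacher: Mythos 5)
Your proposal is correct and follows essentially the same route as the paper: unbiasedness via $\E[\x\x^\top]=I_d$ and zero-mean noise, then independence across samples to reduce the variance to a single-sample second moment, bounded by $O(\|\mu_m\|^2)$ using the fourth-moment property of the mixture together with the boundedness of $\|\beta_k\|$ and the noise variance. The paper's proof is just a terser version of this computation (it states the expectation part is trivial and compresses the variance bound into one display), so your added care about extracting $\|\beta_k\|,\|\mu_m\|$ from the fourth-moment fact and about the centering of $b_k$ simply fills in details the paper leaves implicit.
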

\begin{proof}
The proof of the expectation part is trivial. We show the variance bound as follows:
\begin{align*}
\Var[\hat{b}_k^{(m)}] =& \E[(\frac{1}{n}\sum_{i=1}^n y_{k,i}\x_{k,i}^\top \mu_m)^2] - \E[(\frac{1}{n}\sum_{i=1}^n y_{k,i}\x_{k,i}^\top \mu_m)]^2\\
= &\frac{1}{n^2} \sum_{i=1}^n \Big((y_{k,i}\x_{k,i}^\top \mu_m)^2-(\beta_k \mu_m)^2\Big)\\
\le& O(\|\mu_m\|^2/n)
\end{align*}
\end{proof}
\begin{proposition}
For each arm $k\in [K]$, and Gaussian component $m\in [M]$, define
\begin{align*}
\hat{H}^{(m)}_{k,k} &= \frac{1}{\binom{n}{2}}\sum_{i<j} y_{k,i}y_{k,j}\x_{k,i}^\top \Sigma_m \x_{k,j},
\end{align*}
and for each pair of arms $k\ne k'$, define
\begin{align*}
\hat{H}^{(m)}_{k,k'} &= \frac{1}{n^2}\sum_{i=1}^n\sum_{j=1}^n y_{k,i}y_{k',j}\x_{k,i}^\top \Sigma_m \x_{k',j}.
\end{align*}
Then, for all $k, k'\in [K], m\in [M]$
\begin{align*}
\E[\hat{H}^{(m)}_{k, k'}] &= \beta_{k}\Sigma_m\beta_{k'}\\
\Var[\hat{H}^{(m)}_{k, k'}] &= O(\frac{\|\Sigma_m\|^2}{n}+\frac{\tr[\Sigma_m^2]}{n^2})
\end{align*}
\end{proposition}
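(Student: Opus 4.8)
The plan is to prove unbiasedness by a one-line moment computation and then bound the variance by exactly the case analysis used in the proof of Proposition~\ref{prop:Hkkp-iso}, now carrying the matrix $\Sigma_m$ through the calculation and replacing the Gaussian fourth-moment identity by the mixture fourth-moment bound stated just above. Throughout I treat $\|\beta_k\|$ and the noise level $\sigma$ as $O(1)$, and I use that the re-projected mixture has zero mean and covariance $I_d$, together with $\E[y_{k,i}\x_{k,i}]=\E[\x_{k,i}\x_{k,i}^\top]\beta_k=\beta_k$ (the bias/noise terms drop because $\E[\x_{k,i}]=0$).

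First, unbiasedness. In the off-diagonal estimator the contexts $\x_{k,i}$ and $\x_{k',j}$ come from different arms, hence are independent, so $\E[y_{k,i}y_{k',j}\x_{k,i}^\top\Sigma_m\x_{k',j}]=\E[y_{k,i}\x_{k,i}]^\top\Sigma_m\,\E[y_{k',j}\x_{k',j}]=\beta_k^\top\Sigma_m\beta_{k'}$, and averaging over $i,j$ preserves this. In the diagonal estimator every summand has $i<j$, so $\x_{k,i}\indep\x_{k,j}$ and the same factorization gives $\E[\hat H^{(m)}_{k,k}]=\beta_k^\top\Sigma_m\beta_k$.

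Next, the variance of the off-diagonal estimator. I would write $\Var[\hat H^{(m)}_{k,k'}]=\frac{1}{n^4}\sum_{i,j,i',j'}\mathrm{Cov}\!\big(y_{k,i}y_{k',j}\x_{k,i}^\top\Sigma_m\x_{k',j},\,y_{k,i'}y_{k',j'}\x_{k,i'}^\top\Sigma_m\x_{k',j'}\big)$ and split the quadruples into three cases exactly as in Proposition~\ref{prop:Hkkp-iso}. If $i\ne i'$ and $j\ne j'$ the two factors are independent and the covariance vanishes. If $i=i'$, $j\ne j'$ (and the symmetric case $i\ne i'$, $j=j'$), integrating out $\x_{k',j},\x_{k',j'}$ and their rewards leaves $\E[y_{k,i}^2(\beta_{k'}^\top\Sigma_m\x_{k,i})^2]$; expanding $y_{k,i}^2$ and applying the mixture fourth-moment bound after normalizing $\beta_k$ and $\Sigma_m\beta_{k'}$ bounds this by $O(\|\Sigma_m\beta_{k'}\|^2)=O(\|\Sigma_m\|^2)$. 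If $i=i'$ and $j=j'$, integrating out $\x_{k',j}$ leaves a constant multiple of $\E[y_{k,i}^2\|\Sigma_m\x_{k,i}\|^2]$; writing $\|\Sigma_m\x_{k,i}\|^2=\sum_l(\e_l^\top\Sigma_m\x_{k,i})^2$ and applying the fourth-moment bound coordinate-by-coordinate gives $O\!\big(\E[\|\Sigma_m\x_{k,i}\|^2]\big)=O(\tr[\Sigma_m^2])$ since $\E[\x_{k,i}\x_{k,i}^\top]=I_d$. There are $O(n^3)$ quadruples of the second type and $n^2$ of the third, so dividing by $n^4$ yields $\Var[\hat H^{(m)}_{k,k'}]=O(\|\Sigma_m\|^2/n+\tr[\Sigma_m^2]/n^2)$. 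The diagonal estimator is handled identically, summing over unordered pairs with $i<j$ and normalizing by $\binom{n}{2}$: the coincidence cases become $|\{i,j\}\cap\{i',j'\}|\in\{0,1,2\}$, the size-$1$ case contributing $O(\|\Sigma_m\|^2)$ per term and the identical-pair case $O(\tr[\Sigma_m^2])$ per term; with $O(n^3)$ and $O(n^2)$ such terms respectively, dividing by $\binom{n}{2}^2=\Theta(n^4)$ gives the same bound, mirroring Proposition~\ref{prop:Hkk-iso}.

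I expect the only real work to be the bookkeeping in the variance step: checking that the mixture fourth-moment property---merely $\E[(\u^\top\x)^2(\v^\top\x)^2]=O(1)$ for unit $\u,\v$---suffices to control every cross term involving $\Sigma_m$ exactly as the isotropic-Gaussian identity does in Proposition~\ref{prop:Hkkp-iso}, and in particular that quantities such as $\E[(\beta_k^\top\x)^2(\e_l^\top\Sigma_m\x)^2]$ are $O(\|\beta_k\|^2\|\Sigma_m\e_l\|^2)$, so that the surviving terms are precisely those scaling with $\|\Sigma_m\|^2$ (one index coincidence) and with $\tr[\Sigma_m^2]=\sum_l\|\Sigma_m\e_l\|^2$ (two coincidences). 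There is no single hard idea here; the isotropic-covariance argument carries over essentially verbatim once $\Sigma=I$ is replaced by the identity context covariance of the re-projected mixture and the $\|\x\|^2$ terms are replaced by $\|\Sigma_m\x\|^2$.
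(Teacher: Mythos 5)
Your proposal is correct and follows essentially the same route as the paper: the same decomposition of the variance over quadruples into the three nontrivial coincidence patterns, the same use of the mixture fourth-moment bound to get $O(\|\Sigma_m\|^2)$ for single-index coincidences and $O(\tr[\Sigma_m^2])$ for identical pairs (the paper expands $\x^\top\Sigma_m^2\x$ in the eigenbasis of $\Sigma_m$ rather than the coordinate basis, which is immaterial), and the same $O(n^3)$/$O(n^2)$ counting divided by $n^4$ or $\binom{n}{2}^2$.
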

\begin{proof}
The expectation part of the statement is trivial. We prove the variance bound as follows. $\Var[\hat{H}^{(m)}_{k, k}]$ can be expressed as \begin{align*}
&\frac{1}{n^4}\sum_{i\ne j,i'\ne j'}(\E[y_{k,i}y_{k,j}y_{k,i'}y_{k,j'}\x_{k,i}^T\Sigma_m\x_{k,j}\x_{k,i'}^T\Sigma_m\x_{k,j'}]\\
&-\E[y_{k,i}y_{k,j}\x_{k,i}^T\Sigma_m\x_{k,j}]\E[y_{k,i'}y_{k,j'}\x_{k,i'}^T\Sigma_m\x_{k,j'}]).\end{align*}
For each term in the summation, we classify it into one of the $3$ different cases according to $i,j,i',j'$:
\begin{enumerate}
\item If $i\ne i'$ and $j\ne j'$, the term is $0$.
\item If $i=i'$ and $j\ne j'$, the term can then be expressed as:
$\E[y_{k,i}^2y_{k,j}y_{k,j'}\x_{k,i}^T\Sigma_m \x_{k,j} \x_{k,i}^T\Sigma_m \x_{k,j'}] - (\beta_k^T\Sigma_m\beta_{k})^2
= \E[y_{k,i}^2(\beta_{k}^T \Sigma_m\x_{k,i})^2]- (\beta_{k}^T\Sigma_m\beta_{k})^2 \le \beta_k^\top\Sigma_m^2\beta_k \le \|\Sigma_m\|^2$, where the last inequality follows from the 4-th moment condition of mixture of Gaussian distribution.
\item If $i\ne i'$ and $j=j'$, this case is symmetric to the last case.
\item If $i=i'$ and $j=j'$, the term can then be expressed as:
$\E[y_{k,i}^2y_{k,j}^2(\x_{k,i}^T \Sigma_m\x_{k,j})^2] - (\beta_k^T\Sigma_m\beta_{k})^2$. First taking the expectation over $\x_{a',j},y_{a',j}$, we get the following upper bound 
$$
O(\E[y_{k,i}^2(\x_{k,i}^T\Sigma_m^2\x_{k,i})].)
$$. Notice that $\x_{k,i}^T\Sigma_m^2\x_{k,i}=\sum_{l=1}^dd_j^2(\v_j^\top\x_{a,i})^2$, where $d_j, \v_j$ are the eigenvalues and eigenvectors of the matrix $\Sigma_m$. Taking the expectation over the $i$th sample and applying the fourth moment condition of $\x$, we get the following bound:
$
O(\tr[\Sigma_m^2]).
$
\end{enumerate}
The final step is to sum the contributions of these $3$ cases. Case $2$ and $3$ have $O(n^3)$ different quadruples $(i,j,i',j')$. Case $4$ has $O(n^2)$ different quadruples $(i,j,i',j')$. Combining the resulting bounds yields a $O(\|\Sigma_m\|_2^2/n+\tr[\Sigma_m^2]/n^2)$ upper bound.

The $k\ne k'$ case can be proved analogously.
\end{proof}

We restate the main theorem of the mixture of Gaussians setting as follows, 

\vspace{.2cm}\noindent \textbf{Theorem~\ref{thm:main-mog}.} \emph{
Suppose each context $\x$ is drawn independently from a mixture of Gaussian distribution $\sum_{i=1}^M \alpha_i N(\mu_i,\Sigma_i)$, and the parameters $\mu_i, \Sigma_i, \alpha_i$ are all known to the algorithm. In addition, let us assume that $\|\mu_i\|$, $\|\Sigma_i\|$ are all bounded by a constant. Then, for any $\eps\ge \frac{\sqrt{\log K}}{d^{1/4}}$, with probability $1-\delta$, there is an algorithm that estimate the optimal reward $OPT$ with additive error $\eps$ using a total number of samples 
$$
T = \Theta(\frac{\sqrt{d}K\log K}{\eps^2}\log (KM/\delta)).
$$
}
\begin{proof}
Since $\|\mu_m\|, \|\Sigma_m\|$ are all bounded for all $m\in [M]$, we have that for each $k, k'\in [K]$ and $m\in [M]$, it holds that with probability $2/3$
$$
 |\hat{b}_k^{(m)}-b_k^{(m)}| = O(1/\sqrt{n}), 
 $$
 $$
 |\hat{H}_{k,k'}^{(m)} - H_{k,k'}^{(m)}| = O(\sqrt{n+d}/n).
 $$
Using the median of means trick as in Algorithm~\ref{alg:main}, we have that given $\frac{\sqrt{d}\log K }{\eps^2}\log (MK/\delta)$ iid samples for each arm $k$, it holds for all $m\in [M]$ that with probability $1-\delta$
$$
 |\hat{b}_k^{(m)}-b_k^{(m)}| = O(\eps/\sqrt{log K}), 
 $$
 $$
 |\hat{H}_{k,k'}^{(m)} - H_{k,k'}^{(m)}| = O(\eps^2/\log K),
 $$
where we need $\eps\ge \sqrt{\log K}/d^{1/4}$ for this to holds. The optimal reward for each component $m$ is 
$$
OPT^{(m)} = \E_{\x\sim N(\b^{(m)}, H^{(m)})}[\max_{k\in [K]} x_k],
$$
and by Proposition~\ref{prop:bH2OPT}, we can derive estimator  $\widehat{OPT}^{(m)}$ such that $|\widehat{OPT}^{(m)} - {OPT}^{(m)}|\le \eps$. Our final estimator satisfies
$$
|\widehat{OPT}-OPT| \le \sum_{i=1}^M \alpha_i|\widehat{OPT}^{(m)}-OPT^{(m)}| \le \eps,
$$
and uses a total of 
$$
T = \Theta(\frac{\sqrt{d}K\log K}{\eps^2}\log (KM/\delta))
$$
samples.
\end{proof}
\section{Minimax Lowerbound for Passive Algorithms}
\label{sec:minimax}
In this section, we prove the following proposition about the information theoretical lower bound for estimating the optimal reward, which is equivalent to Theorem~\ref{thm:lb}.
\begin{proposition}[Restatement of Theorem~\ref{thm:lb}]
Given $\frac{\sqrt{d}}{\eps}$ samples of each arm, there is no algorithm that can estimate the optimal reward with additive error $O(\sqrt{\eps\log K})$ with probability better than $2/3$.
\end{proposition}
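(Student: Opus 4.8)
The plan is to use Le Cam's two-point method, but with two \emph{priors} over bandit instances rather than two fixed instances, so that the $\log K$ amplification coming from the maximum over arms can be exploited. I would fix isotropic contexts $\x_{a,i}\sim N(0,I_d)$ and Gaussian reward noise, and let the arm parameters $\beta_1,\dots,\beta_K$ be drawn i.i.d.\ from a spherically symmetric law whose squared norm $V_a=\|\beta_a\|^2$ has distribution $D_\iota$ for $\iota\in\{0,1\}$; to keep the instance legal I would also set the per-arm noise variance to $\sigma_a^2=c-V_a$ for a fixed constant $c$, so that $\Var(y_{a,i})\equiv c$ is uninformative and the only data that matters is the debiased moment $\widehat{\u}_a=\tfrac1N\sum_i y_{a,i}\x_{a,i}=\beta_a+\xi_a$, where $\xi_a$ is to leading order isotropic Gaussian with per-coordinate variance $\Theta(1/N)$, independent across arms. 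Since the $\beta_a$ are nearly orthogonal when $d\gg1$, the reward vector $\r=(\beta_1^\top\x,\dots,\beta_K^\top\x)$ is close to one with independent $N(0,V_a)$ coordinates, so $OPT\approx \E[\max_a \sqrt{V_a}\,g_a]$ with i.i.d.\ $g_a$; this quantity is dominated by the top of the support of $D_\iota$ and scales like $\sqrt{v^\ast\log K}$ with $v^\ast\approx\operatorname{ess\,sup}D_\iota$.

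Next I would pin down the parameters. With $N=\sqrt d/\eps$, the vector $\widehat{\u}_a$ resolves $V_a$ only up to additive error $\Theta(\sqrt d/N)=\Theta(\eps)$, so (after marginalizing the nuisance directions) distinguishing the two priors from all $K$ arms' data reduces to distinguishing $D_0$ from $D_1$ given $K$ independent observations $\widehat V_a=V_a+\Theta(\eps)\,\zeta_a$, $\zeta_a$ i.i.d.\ Gaussian. I would take $D_0$ supported on $[0,\eps]$ and $D_1$ supported on $[0,c'\eps]$ for a constant $c'>1$, so that the optimal rewards under the two priors differ by $\Omega(\sqrt{\eps\log K})$, and I would force $D_0$ and $D_1$ to agree on their first $L=\Theta\!\big(\tfrac{\log K}{\log\log K}\big)$ moments via a standard Chebyshev/Gauss-quadrature moment-matching construction. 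Matching $L$ moments makes the per-arm $\chi^2$ divergence between the two mixtures $O(1/L!)=O(1/K)$; tensorizing over the $K$ arms then gives total variation $O(1)$ between the two data distributions, and the usual two-point argument yields the claimed impossibility.

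The step I expect to be the main obstacle is exactly controlling the $K$-dependence of the divergence bound: a naive two-point construction --- say, all $\|\beta_a\|^2$ equal to $v_0$ versus all equal to $v_0+\eps$ --- is trivially detectable by \emph{averaging} the $K$ per-arm estimates, and more generally bounding the data divergence by the sum of per-arm KL (or TV) contributions, or applying Pinsker naively, loses a $\operatorname{poly}(K)$ factor. Killing every aggregate statistic (mean, second moment, $\dots$, up to order $L$) is what forces the moment-matching, and making this quantitative requires (i) a moment-matching lemma controlling how the achievable gap in $\operatorname{ess\,sup}D_\iota$ degrades with $L$, so that $L=\Theta(\log K/\log\log K)$ still leaves a constant relative gap and hence an $\Omega(\sqrt{\eps\log K})$ gap in $OPT$, and (ii) a reduction showing the bandit data contains no information about $V_a$ beyond the $\Theta(\eps)$-noisy read $\widehat V_a$ --- both adaptations of the learnability-estimation lower bound of~\cite{kong2018estimating} to the $\max$-over-arms setting. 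A direct computation of the $\chi^2$ (or TV) divergence of the two mixtures, rather than going through KL and Pinsker, is what makes the bound tight up to the stated constants and logs.
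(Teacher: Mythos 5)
Your high-level plan (Le Cam with two priors over instances, spherically symmetric $\beta_a$ with the noise variance adjusted so that $\Var(y_{a,i})$ is uninformative, constant total variation after tensorizing a per-arm divergence of order $1/K$) is structurally sound, and you correctly diagnose why a naive two-point construction dies by averaging across arms. But the proposal as written has a genuine gap precisely at the step you call (ii): the claim that the per-arm data "contains no information about $V_a=\|\beta_a\|^2$ beyond the $\Theta(\eps)$-noisy read $\widehat V_a$" is not something you can adapt off the shelf from \cite{kong2018estimating}. Their lower bound is a two-point (null vs.\ planted signal of a fixed strength) indistinguishability statement obtained from a specific $\chi^2$ computation over the random direction $v$; it does not say that the full sample $(X_a,\y_a)$ is statistically equivalent to a one-dimensional Gaussian channel $V_a+\Theta(\eps)\zeta_a$, and literally that equivalence is false ($X_a^\top\y_a$ is not a sufficient statistic, and the likelihood ratio for a general prior on the norm involves an integral over the sphere whose higher-order terms you would have to control). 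To make your moment-matching argument rigorous you would need to redo the $\chi^2$ (or likelihood-ratio) expansion for arbitrary priors $D_0,D_1$ on the norm and show that matching $L$ moments kills all terms up to order $L$ with the right $\big(\mathrm{diam}/\eps\big)^{2j}/j!$ decay — a substantial new computation, not an adaptation. There is also an unresolved tension inside step (i): the OPT gap $\Omega(\sqrt{\eps\log K})$ requires $D_1$ to place mass $\omega(1/K)$ near the top of its support (otherwise no arm realizes the large $V_a$ and the max never sees it), while Chebyshev-type duality forces that mass to be $e^{-\Theta(L)}$ once $L$ moments are matched; with $L=\Theta(\log K/\log\log K)$ this is $K^{-o(1)}$ and so compatible in principle, but it is exactly the kind of constant-chasing you would have to carry out, and you have not.

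It is also worth knowing that the paper shows moment matching is unnecessary here. Its construction is a sparse dilution: case 1 is all arms null ($y\sim N(0,1)$); in case 2 each arm independently, with probability $1/\sqrt{K}$, gets $\beta_a=\sqrt{\eps}\,v_a$ for a uniformly random unit $v_a$ and noise $N(0,1-\eps)$, and otherwise stays null. About $\sqrt{K}$ nearly orthogonal planted arms give $OPT=\Omega(\sqrt{\eps\log K})$ in case 2 versus $0$ in case 1, and the per-arm $\chi^2$ between the diluted mixture and the null is at most $\frac{1}{K}$ times the cross-term integral already bounded by $2$ in Proposition 2 of \cite{kong2018estimating}, so the $K$-fold product satisfies $1+\chi^2\le(1+\frac1K)^K\le e$ and $D_{TV}\le 0.65$. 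The $1/\sqrt{K}$ prior probability plays the role your moment matching was meant to play (it also neutralizes the averaging attack, since the induced mean shift $\eps/\sqrt K$ is comparable to the averaged noise), and the only external ingredient is the same two-point cross-term bound you already trust. If you want to salvage your route, the moment-matching machinery would only start paying off for finer questions (e.g., lower bounds against richer side information); for this statement the diluted two-case construction is both simpler and sufficient.
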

\begin{proof}
We show our lower bound by upper bounding the total variational distance between the following two cases:
\begin{enumerate}
\item Draw $n$ independent samples $(\x_1,y_1),\ldots,(\x_n,y_n)$ where $\x_i\sim N(0,I), y_i\sim N(0,1)$. Repeat this procedure $K$ times.
\item First pick a uniformly random unit vector $v$ and set $b = \sqrt{\eps}$ with probability $1/\sqrt{K}$ and $b=0$ with probability $1-1/\sqrt{K}$, then draw $n$ independent samples $(\x_1,y_1),\ldots,(\x_n,y_n)$ where $\x_i\sim N(0,I), y_i = b v^T\x_i+\eta_i,$ where  $\eta_i\sim N(0, 1-b^2)$. Repeat this procedure $K$ times.
\end{enumerate}
The optimal reward of case $1$ is always $0$, while with the help of Fact~\ref{fact:glb}, it is easy to verify that the expected optimal reward of case $2$ is $\Omega(\sqrt{\eps\log K})$. We are going to prove that no algorithm can distinguish the two cases with probability more than $2/3$. Let $Q_n$ denote the joint distribution of $(\x_1,y_1),\ldots,(\x_n,y_n)$ in case $2$. Our goal is to bound the total variantion distance $D_{TV}(Q_n^{\otimes K},N(0,I)^{\otimes nK})$ which is smaller than $\frac{\sqrt{\chi^2(Q_n^{\otimes K},N(0,I)^{\otimes nK})}}{2}$ by the properties of chi-square divergence.

In case 2, for a fixed $v$ and $b$, the conditional distribution $\x|y\sim N(ybv,I-b^2vv^T)$. Let $P_{y,v}$ denote such a conditional distribution. The chi-square divergence can be expressed as:
\begin{align*}
&1+ \chi^2(Q_n^{\otimes K},N(0,I)^{\otimes nK})\\
&= 
(\int_{\x_1,y_1}\ldots \int_{\x_n,y_n}\frac{\Big(\frac{1}{\sqrt{K}}\int_{v\in \mathcal{S}^d} \prod_{i=1}^n  P_{y_i,v}(\x_i)G(y_i)dv+(1-\frac{1}{\sqrt{K}})\prod_{i=1}^n G(\x_i)G(y_i)\Big)^2}{\prod_{i=1}^n G(\x_i)G(y_i)}\\
&d\x_1dy_1\ldots d\x_ndy_n)^K\\
& = (\frac{1}{K}\int_{\x_1,y_1}\ldots \int_{\x_n,y_n}\int_{v\in \mathcal{S}^d} \int_{v'\in \mathcal{S}^d} \prod_{i=1}^n \frac{   P_{y_i,v}(\x_i)  P_{y_i,v'}(\x_i)G(y_i)}{G(\x_i)} dv dv' d\x_1dy_1\ldots d\x_ndy_n\\
& +(1-\frac{1}{K}))^K\\
& = \Big(\frac{1}{K}\int_{v\in \mathcal{S}^d} \int_{v'\in \mathcal{S}^d} \Big(\int_y \int_{\x}\frac{   P_{y,v}(\x)  P_{y,v'}(\x)G(y)}{G(\x)} d\x dy\Big)^n dv dv'+(1-\frac{1}{K})\Big)^K
\end{align*}
By the proof of Proposition 2 in~\cite{kong2018estimating}, we have $\int_{v\in \mathcal{S}^d} \int_{v'\in \mathcal{S}^d} \Big(\int_y \int_{\x}\frac{   P_{y,v}(\x)  P_{y,v'}(\x)G(y)}{G(\x)} d\x dy\Big)^n dv dv'\le 2$. Hence the above equation is bounded by $(1+\frac{1}{K})^K\le e$, and the total variation distance satisfies $D_{TV}(Q_n^{\otimes K},N(0,I)^{\otimes nK})\le 0.65$.
\end{proof}
\section{Minimax Lowerbound for Adaptive Algorithms}

This section is dedicated for the proof of Theorem~\ref{thm:lb-adaptive}. We restate Theorem~\ref{thm:lb-adaptive} as follows:

\smallskip
\vspace{.2cm}\noindent \textbf{Theorem~\ref{thm:lb-adaptive}.} \emph{In the known covariance setting, there exists a constant $C$ such that no algorithm can estimate the optimal reward with additive error $\eps$ with probability $2/3$ within 
$$T = C\frac{\sqrt{d}K}{\eps^{2}\log(dK)^{3/2}}$$
rounds.
}

We begin with some definitions of the notations to facilitate the proof.
\subsection{Notation}
Assuming we are in the contextual mult-armed bandit setting where each context $\x_i$ is drawn from $N(0,I_d)$, and a bandit is defined by the set of $K$ coefficient vectors $(\beta_1,\ldots,\beta_K)$. Given a policy $\pi$, and a bandit problem $\nu$, let $(\x_1,a_1,r_1, \ldots, \x_T,a_T,r_T)$ denote the context, action, reward trajectory induced by the policy $\pi$ and bandit $\nu$ with arms' coefficients $(\beta_1,\ldots,\beta_K)$, whose distribution is $\PB_\nu$, and let $\PB_{\nu'}$ be the distribution of the trajectory of problem $\nu'$ with arms' coefficients $(\beta'_1,\ldots,\beta'_K)$. 

For a fixed trajectory $(\x_1,a_1,r_1,\ldots, \x_T, a_T, r_T)$, let $T_a = \sum_{t=1}^T \bm{1}\{a_t=a\}$, $X_a\in R^{T_a\times d}$ consists of the $\x_t$'s where $a_t=a$, and $\r_a\in R^{T_a}$ consists of the $r_i$'s where $a_t=a$. Further, let $\x_{a,i}, i\in [T_a]$ be the columns of $X_a^\top$ and $r_{a,i}, i \in [T_a]$ be the elements of $\r_a$. Given $\x_1,\ldots, \x_T$, Let $S_i$, $i\in [\sum_{j=1}^{s} \binom{T}{s}]$ be all the subset of size at most $s$ of $\x_1,\ldots, \x_n$, and $W_i$ be the matrix whose rows are the elements of $S_i$.

Finally, we define $a^* = \argmin_{a}\E[T_a]$.
\subsection{Proof}
\textbf{Intuition:} One classical approach to prove regret lower bound in the stochastic bandit (non-contextural setting) is, for a given algorithm, to construct two bandit problem there is different in a single arm and bound the KL-divergence between the trajectories generated by the algorithm (see, e.g. Chapter 15 of~\cite{lattimore2018bandit}). Let $P_a, P_a'$ be the distribution of the reward of arm $a$ in the two problems. There is beautiful divergence decomposition result (Lemma 15.1 in~\cite{lattimore2018bandit}) which decompose the KL divergence between the trajectories as $\sum_{a=1}^{K} \E[T_a]D_{\text{KL}}(P_a, P_a')$. In our contextual bandit setting, roughly speaking, there is a similar decomposition, but instead of $\E[T_a]$, the KL divergence is roughly $\sum_{a=1}^K\E[T_a^2/d]$. Since it is possible to make $\E[T_a^2] = T^2/K$ for all $a$, which means that $T=\sqrt{Kd}$ suffices to make the KL-divergence greater than constant. Basically, the algorithm that randomly picks an arm and keeps pulling it for $T$ rounds is going to break the KL-divergence with bandit instances constructed this way.

However, it is clear that this algorithm is not going to succeed with probability more than $1/K$, and the total variation distance between the trajectories must be small. In order to get around with this issue with bounding KL-divergence, instead of focusing $\E[T_a]$ or $\E[T_a^2]$, we look at the probability that $T_a$ is greater than $\sqrt{d}$. Roughly speaking, there must be an arm $a$ such that $\Pr(T_a\ge \sqrt{d})$ is small (Fact~\ref{fact:Ta-markov}), and for these cases, we will bound the total variation just by its probability. While for the part where $T_a\le \sqrt{d}$, we will bound the KL divergence (Lemma~\ref{lem:KL-bound}) on the part and Pinsker inequality to obtain a total variation bound.

Our proof proceeds as follows. Given any adaptive algorithm that play the bandit game for $T$ rounds and output an estimate of $OPT$, we are going to find two bandit problem where the trajectory generated by the algorithm is indistinguishable in the two cases, while the $OPT$ in the two cases are very different. The following classical fact shows that as long as the trajectories is similar in the two cases, the output of the algorithm is going to be similar as well.
\begin{fact}\label{fact:trajectory-output}
Given any algorithm $A$ that interact with bandit and output a quantity $\widehat{OPT}$, let $\PB_{\nu}$, $\PB_{\nu'}$ be the distribution of the trajectory of $A$ interacting with $\nu, \nu'$, and $\QB_{\nu}, \QB_{\nu'}$ be the distribution of the output $\widehat{OPT}$ under $\nu$ and $\nu'$. If $D_{\text{TV}}(\PB_{\nu}, \PB_{\nu'})\le \delta$, then $D_{\text{TV}}(\QB_{\nu}, \QB_{\nu'})\le \delta$.
\end{fact}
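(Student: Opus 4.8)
The plan is to observe that Fact~\ref{fact:trajectory-output} is nothing more than the data-processing inequality for total variation distance: the reported quantity $\widehat{OPT}$ is produced by post-processing the trajectory through a fixed channel (the output rule of $A$), and passing a pair of distributions through a common channel cannot increase their TV distance. So the proof reduces to making the channel precise and invoking this standard monotonicity.

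First I would fix what ``output $\widehat{OPT}$'' means as a function of the trajectory. After $T$ rounds the algorithm holds $(\x_1,a_1,r_1,\ldots,\x_T,a_T,r_T)$ and emits $\widehat{OPT}$; the emission may use fresh internal randomness $\xi$ whose law $\mu_\xi$ does not depend on the environment. Hence there is a measurable map $g$ with $\widehat{OPT}=g(\text{trajectory},\xi)$, and the joint law of $(\text{trajectory},\xi)$ is $\PB_\nu\otimes\mu_\xi$ under $\nu$ and $\PB_{\nu'}\otimes\mu_\xi$ under $\nu'$. Since $D_{\text{TV}}(P\otimes\mu_\xi,\,P'\otimes\mu_\xi)=D_{\text{TV}}(P,P')$ for any common factor $\mu_\xi$, it suffices to treat the deterministic case $\widehat{OPT}=g(\text{trajectory})$.

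In the deterministic case, for every measurable set $E$ in the range of $\widehat{OPT}$ we have $\QB_\nu(E)=\PB_\nu(g^{-1}(E))$ and $\QB_{\nu'}(E)=\PB_{\nu'}(g^{-1}(E))$, so $|\QB_\nu(E)-\QB_{\nu'}(E)|=|\PB_\nu(g^{-1}(E))-\PB_{\nu'}(g^{-1}(E))|\le\sup_F|\PB_\nu(F)-\PB_{\nu'}(F)|=D_{\text{TV}}(\PB_\nu,\PB_{\nu'})\le\delta$; taking the supremum over $E$ gives $D_{\text{TV}}(\QB_\nu,\QB_{\nu'})\le\delta$. There is essentially no obstacle here --- the only step needing a line of care is the reduction from a randomized to a deterministic output rule, handled by the product-measure identity above, which is valid precisely because the algorithm's internal coins are independent of which environment generated the data. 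Downstream, this fact is combined with the trajectory-level TV bounds coming from Fact~\ref{fact:Ta-markov} (controlling $\Pr(T_a\ge\sqrt d)$ for the least-played arm) and Lemma~\ref{lem:KL-bound} (a KL bound on the truncated trajectories, converted via Pinsker) to conclude that no estimator accurate on $\nu$ can also be accurate on a companion instance $\nu'$ whose $OPT$ differs by $\Omega(\eps)$, yielding Theorem~\ref{thm:lb-adaptive}.
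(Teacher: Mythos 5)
Your proof is correct: it is exactly the standard data-processing inequality for total variation, with the only subtlety (the algorithm's fresh output randomness) handled properly via the product-measure identity $D_{\text{TV}}(\PB\otimes\mu_\xi,\PB'\otimes\mu_\xi)=D_{\text{TV}}(\PB,\PB')$ followed by the pullback bound $|\QB_\nu(E)-\QB_{\nu'}(E)|=|\PB_\nu(g^{-1}(E))-\PB_{\nu'}(g^{-1}(E))|\le D_{\text{TV}}(\PB_\nu,\PB_{\nu'})$. The paper states this as a classical fact and gives no proof, so your argument is precisely the canonical justification it implicitly relies on.
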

Given this fact, what we need is to find the two bandit problems, such that $|{OPT}_\nu-{OPT}_{\nu'}|=\Theta(\eps)$, and the $D_{\text{TV}}(\PB_{\nu}, \PB_{\nu'})\le 1/3$. With a coupling argument, it is easy to see that the algorithm much incur $\Theta(\eps)$ error with probability $2/3$ in one of the two cases. The following lemma asserts the existence of such two bandit problems.
\begin{lemma}[Main lemma for the lower bound in the adaptive setting]\label{lem:tv-trajectory}
For any policy $\pi$, there exists two $K$-arm bandit $\nu$ and $\nu'$ such that $|\E_{\x\sim N(0,I_d)}[\max_i \beta_i\x]-\E_{\x\sim N(0,I_d)}[\max_i \beta'_i\x]|\ge \eps$, and with $T=\frac{C\sqrt{d}K}{\eps^2(\log dK)^{3/2}}$ rounds for a constant $C$, the total variance distance between the trajectories satisfies $D_{\text{TV}}(\PB_\nu, \PB_{\nu'})\le 1/3$.
\end{lemma}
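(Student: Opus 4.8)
The plan is to take $\nu$ to be the ``null'' bandit in which every $\beta_a=0$, so all rewards are pure $N(0,1)$ noise and $\E_{\x\sim N(0,I_d)}[\max_i\beta_i\x]=0$, and to take $\nu'$ to be a mixture of bandits in which a single, carefully chosen arm $a^*$ carries a faint signal in a uniformly random direction: $\beta_{a^*}=c\,v$ with $v$ uniform on $\mathcal{S}^{d-1}$, $c=\Theta(\eps)$, reward-noise variance $1-c^2$ on arm $a^*$, and $\beta_a=0$ for $a\ne a^*$. Since the reward of $a^*$ is $c\,v^\top\x+\eta$ with $v^\top\x\sim N(0,1)$, for \emph{every} direction $v$ we get $\E_{\x}[\max(c\,v^\top\x,0)]=c/\sqrt{2\pi}\ge\eps$ (an elementary scalar Gaussian computation; cf.\ Fact~\ref{fact:glb}), so $|\E_\x[\max_i\beta_i\x]-\E_\x[\max_i\beta_i'\x]|\ge\eps$, the separation the lemma requires; the hard part is the total-variation bound.

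The arm $a^*$ is chosen as $a^*=\argmin_a\E_\nu[T_a]$, so that $\E_\nu[T_{a^*}]\le T/K=C\sqrt{d}/(\eps^2\log^{3/2}(dK))$. Because $\nu$ and $\nu'$ differ only in the conditional law of $r_t$ given $(\x_t,a_t=a^*)$, the trajectory likelihood ratio is
$$\frac{d\PB_{\nu'}}{d\PB_\nu}=\E_v\Big[\prod_{i=1}^{T_{a^*}}\frac{\phi\!\left(r_{a^*,i};\,c\,v^\top\x_{a^*,i},\,1-c^2\right)}{\phi\!\left(r_{a^*,i};\,0,\,1\right)}\Big],$$
i.e.\ everything is governed by what is observed through the pulls of $a^*$ — exactly the likelihood ratio of Gaussian regression signal detection, but with a random, adaptively determined number $T_{a^*}$ of ``samples.''

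To bound the total variation I would split on the event $E=\{T_{a^*}<\tau\}$ with a threshold $\tau\asymp\sqrt{d}/(\eps^2\log(dK))$ and use
$$D_{\mathrm{TV}}(\PB_\nu,\PB_{\nu'})\le \PB_\nu(E^c)+\PB_{\nu'}(E^c)+D_{\mathrm{TV}}\!\big(\PB_\nu(\cdot\cap E),\,\PB_{\nu'}(\cdot\cap E)\big).$$
On $E^c$: Markov's inequality (Fact~\ref{fact:Ta-markov}) gives $\PB_\nu(T_{a^*}\ge\tau)\le\E_\nu[T_{a^*}]/\tau=o(1)$ for the stated $T$ and this $\tau$; the bound for $\PB_{\nu'}$ follows by the parallel argument (or by absorbing it back into the left-hand side). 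On $E$: here I would invoke the ``special conditioning'' advertised in the introduction — condition on the contexts presented on the first $\tau$ pulls of $a^*$, use an optional-stopping/martingale argument to reduce the adaptively chosen $T_{a^*}\le\tau$ to the worst case of a \emph{fixed} set of $\tau$ regression samples, and then apply the KL bound for that static problem via Lemma~\ref{lem:KL-bound}, which yields $\mathrm{KL}=O(\tau^2c^4/d)=O(1/\log^2(dK))$ by the same spherical-integral estimate already used in Proposition~2 of~\cite{kong2018estimating} and in the passive lower bound. Finishing with Pinsker's inequality converts this into $D_{\mathrm{TV}}\!\big(\PB_\nu(\cdot\cap E),\PB_{\nu'}(\cdot\cap E)\big)=o(1)$, and summing the three pieces gives $D_{\mathrm{TV}}(\PB_\nu,\PB_{\nu'})\le 1/3$ once $dK$ exceeds an absolute constant.

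I expect the middle step to be the main obstacle. The naive route — ignoring adaptivity and paying $\sum_a\E_\nu[T_a^2]/d$ — is doomed, since $\E[T_a^2]$ can be as large as $T^2/K$ (hammer one random arm), which only gives a $\sqrt{dK}$ lower bound; the whole point of isolating the least-pulled arm and truncating at $\tau$ is that the relevant KL becomes $\approx\tau^2c^4/d$ with $\tau$ of order $T/K$ up to polylog, rather than $\E_\nu[T_{a^*}^2]c^4/d$. The delicate part is verifying that conditioning on $E$ together with the adaptive choice of \emph{when} to pull $a^*$ does not inflate this conditional KL past the static bound; once Lemma~\ref{lem:KL-bound} supplies that, the rest is bookkeeping with the constant $C$ and the polylog factors (which is where the exact exponent $3/2$ on $\log(dK)$ is pinned down).
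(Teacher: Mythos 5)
Your high-level plan coincides with the paper's: the null instance $\nu$ versus a single spiked arm planted at $a^*=\argmin_a\E_\nu[T_a]$ (Gaussian prior $\beta_{a^*}\sim N(0,\eps^2I_d/d)$ there, uniform sphere here --- an immaterial difference), truncation of $T_{a^*}$ at roughly $T/K$ via Markov (Fact~\ref{fact:Ta-markov}), and a restricted KL bound converted to total variation by a Pinsker-type inequality for sub-probability measures (Fact~\ref{fact:pinsker}). The separation argument and the handling of $\PB_{\nu'}(E^c)$ are fine.

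The gap is in the middle step, which you defer to an ``optional-stopping/martingale argument to reduce \dots to the worst case of a fixed set of $\tau$ regression samples'' followed by a static KL estimate. That reduction is not valid as stated, and resolving it is the actual content of the paper's proof. The contexts routed to arm $a^*$ are chosen adaptively by the algorithm from the observed stream, so conditionally they are \emph{not} an i.i.d.\ sample of size $\tau$ from $N(0,I_d)$, and the likelihood ratio does not factor into per-pull terms because the prior on $\beta_{a^*}$ couples all pulls of $a^*$. The paper instead (i) builds the regularity into the good set of Lemma~\ref{lem:lb-goodset}, controlling $\|I-W_iW_i^\top/d\|$ uniformly over \emph{every} subset of at most $s=cT/K$ contexts (a union bound over $\binom{T}{\le s}$ subsets, which is one source of the extra log factors), together with bounds on $|r_i|$ and $\|\x_i\|$; (ii) computes the Gaussian integral over $\beta_{a^*}$ exactly, producing a $\log\det$ term and a quadratic form $B^\top A^{-1}B$ that are bounded using those uniform Gram-matrix bounds; and (iii) handles the cross term $\sum_{i<j}r_{a^*,i}r_{a^*,j}\x_{a^*,i}^\top\x_{a^*,j}/d$ restricted to $E$ by a separate optional-stopping plus second-moment argument (Lemma~\ref{lem:lb-cross-term}) that again exploits the fact that each selected context is chosen among at most $T$ independent Gaussians. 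Note also that this is a direct log-likelihood-ratio computation against the Gaussian prior, not the two-copy spherical/$\chi^2$ integral of Proposition~2 of~\cite{kong2018estimating} used in the passive bound, so ``the same spherical-integral estimate'' does not supply Lemma~\ref{lem:KL-bound}. Finally, your estimate $\mathrm{KL}=O(\tau^2\eps^4/d)$ keeps only the analogue of the paper's Equation~\ref{eqn:KL-3}; the binding term is the diagonal one of order $\eps^2\frac{T(\log T)^{3/2}}{K\sqrt d}$ (Equations~\ref{eqn:KL-1},~\ref{eqn:KL-2}), which is precisely what forces the $\log^{3/2}(dK)$ in $T$ --- so the ``bookkeeping'' you postpone is where the exponent comes from, and it cannot be recovered from the $\tau^2\eps^4/d$ term alone.
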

Our main theorem of this section, Theorem~\ref{thm:lb-adaptive}, is immediately implied by Lemma~\ref{lem:tv-trajectory} and Fact~\ref{fact:trajectory-output}. We prove Lemma~\ref{lem:tv-trajectory} in the remainder of this section. 
\begin{fact}[Matrix concentration]\label{fact:gram-con}
Given $\x_1,\ldots, \x_T$ independently drawn from $N(0,I_d)$, let $S_i$, $i\in [\sum_{j=1}^{s} \binom{T}{s}]$ be all the subset of size at most $s$ of $\x_1,\ldots, \x_n$, and $W_i$ be the matrix whose rows are the elements of $S_i$. 
\begin{align*}
\Pr(\max_i\|I - W_i W_i^\top/d\|\ge  C \frac{\sqrt{s}}{\sqrt{d}}+\max(\frac{t}{\sqrt{d}}, \frac{t^2}{d}))\le \exp(s(1+\log (T/s))-ct^2)
\end{align*}
\end{fact}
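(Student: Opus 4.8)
The plan is to prove Fact~\ref{fact:gram-con} by a union bound over the (explicitly finite) family of subsets, with each individual term controlled by an off-the-shelf non-asymptotic bound on the extreme singular values of a Gaussian matrix.

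First I would fix one subset $S_i$ of size $m := |S_i| \le s$. Then $W_i \in \RB^{m\times d}$ has rows that are $m$ independent $N(0,I_d)$ vectors, so $W_iW_i^\top/d$ is an $m\times m$ matrix with mean $I_m$, and $\|I - W_iW_i^\top/d\| = \max\!\big(\sigma_{\max}(W_i)^2/d - 1,\; 1 - \sigma_{\min}(W_i)^2/d\big)$, where $\sigma_{\max},\sigma_{\min}$ denote the largest and $m$-th singular values. Applying the standard Gaussian matrix bound (Corollary 5.35 of~\cite{vershynin2010introduction}, already cited in this paper) to the tall matrix $W_i^\top \in \RB^{d\times m}$ gives, with probability at least $1 - 2e^{-t^2/2}$, the two-sided estimate $\sqrt{d}-\sqrt{m}-t \le \sigma_{\min}(W_i) \le \sigma_{\max}(W_i) \le \sqrt{d}+\sqrt{m}+t$. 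Squaring, dividing by $d$, and using $m \le s \le d$ together with the AM-GM inequality to absorb the cross term $2\sqrt{m}\,t/d \le \tfrac12(m/d + t^2/d) \le \tfrac12(\sqrt{s/d} + t^2/d)$, this yields, on the same high-probability event,
$$
\|I - W_iW_i^\top/d\| \;\le\; C_0\frac{\sqrt{s}}{\sqrt{d}} + \max\!\Big(\frac{t}{\sqrt{d}},\frac{t^2}{d}\Big)
$$
for an absolute constant $C_0$; the $\max$ is exactly what reconciles the regime $t\le\sqrt{d}$ (linear term dominates) with $t>\sqrt{d}$ (quadratic term dominates).

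Next I would take a union bound over all subsets of size at most $s$. Their number is $\sum_{j=1}^{s}\binom{T}{j} \le (eT/s)^{s} = \exp\!\big(s(1+\log(T/s))\big)$ by the usual binomial-tail estimate, so
$$
\Pr\!\Big(\max_i\|I - W_iW_i^\top/d\| \ge C_0\tfrac{\sqrt{s}}{\sqrt{d}} + \max(\tfrac{t}{\sqrt{d}},\tfrac{t^2}{d})\Big) \;\le\; 2\,\exp\!\big(s(1+\log(T/s))\big)\,e^{-t^2/2}.
$$
Folding the factor $2$ into the exponent — trivial when $t$ is a small constant, since then the stated right-hand side already exceeds $1$, and otherwise achieved by replacing $1/2$ with a slightly smaller $c$ — and renaming $C_0$ to $C$ gives the claimed bound $\exp\!\big(s(1+\log(T/s)) - ct^2\big)$.

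I do not expect a genuine obstacle: the two places needing care are (i) extracting the clean $\max(t/\sqrt d, t^2/d)$ form after squaring $(\sqrt m + t)/\sqrt d$, which is handled by the AM-GM absorption above and implicitly uses $s\le d$, and (ii) checking that the $s(1+\log(T/s))$ inflation from the union bound is genuinely dominated by $ct^2$ in the useful range — but this is automatic, since the estimate is only informative when $t$ is of order $\sqrt{s\log(T/s)}$ or larger, which is precisely the regime where $e^{-ct^2}$ beats $\exp(s(1+\log(T/s)))$. Everything else is a routine union bound over an explicit finite family together with a cited per-subset concentration inequality.
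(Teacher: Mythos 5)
Your proof is correct and takes essentially the same route as the paper, which likewise proves this fact by invoking Vershynin's non-asymptotic singular-value/covariance concentration for each fixed subset (the paper cites Remark 5.59 of~\cite{vershynin2010introduction}, you enter through the Gaussian Corollary 5.35 and convert) followed by a union bound over the at most $\exp\big(s(1+\log(T/s))\big)$ subsets, with constants absorbed into $C$ and $c$ by rescaling $t$. The only caveat, shared with the paper's statement, is that passing from the squared bound to the form $C\sqrt{s/d}+\max(t/\sqrt{d},t^2/d)$ implicitly uses $s\le d$ (as you note), which holds in the regime where the fact is applied.
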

\begin{proof}
The proof follows from Remark 5.59 of~\cite{vershynin2010introduction} and a union bound.
\end{proof}
The following fact shows that there exists an arm, such that with good probability, it does not get pulled by more than $O(T/K)$ times .
\begin{fact}\label{fact:Ta-markov}
Recall that $a^* = \argmin_{a}\E[T_a]$. Then
$$
\Pr(T_{a^*}\le \frac{1}{\delta}\frac{T}{K}) \ge 1-\delta.
$$
\end{fact}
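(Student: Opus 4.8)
The plan is to obtain this as essentially a one-line consequence of Markov's inequality applied to the pull count of the arm $a^\ast$. First I would record the deterministic identity $\sum_{a=1}^K T_a = T$, which holds on every realized trajectory because the algorithm plays exactly $T$ rounds and each round pulls exactly one arm. Taking expectations (under the relevant $\PB_\nu$) gives $\sum_{a=1}^K \E[T_a] = T$, so the minimum of the $K$ nonnegative numbers $\E[T_1],\dots,\E[T_K]$ is at most their average $T/K$; by the definition $a^\ast = \argmin_a \E[T_a]$ this gives $\E[T_{a^\ast}] \le T/K$.

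Next, since $a^\ast$ is a fixed index — it depends only on the policy and the bandit instance, not on the realized trajectory — the quantity $T_{a^\ast}$ is a single nonnegative random variable, and Markov's inequality at level $\frac{1}{\delta}\cdot\frac{T}{K}$ gives $\Pr\!\big(T_{a^\ast} \ge \frac{1}{\delta}\frac{T}{K}\big) \le \delta\,\frac{K}{T}\,\E[T_{a^\ast}] \le \delta$. Taking complements yields $\Pr\!\big(T_{a^\ast} \le \frac{1}{\delta}\frac{T}{K}\big) \ge 1-\delta$, which is exactly the claim.

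There is no genuine obstacle here. The only subtlety worth flagging is that $a^\ast$ must be the minimizer of the \emph{expected} pull count — a deterministic choice of index — so that $T_{a^\ast}$ is a bona fide scalar random variable to which Markov's inequality applies directly; choosing instead the arm that happens to be least pulled on each realized trajectory would not support this clean argument, which is why the definition $a^\ast=\argmin_a \E[T_a]$ is introduced in the Notation subsection.
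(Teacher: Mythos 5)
Your proof is correct and is essentially the same as the paper's: the identity $\sum_a T_a = T$ gives $\E[T_{a^*}]\le T/K$, and Markov's inequality applied to the nonnegative random variable $T_{a^*}$ yields the claim. The remark about $a^*$ being a fixed (deterministic) index is a fine clarification but does not change the argument.
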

\begin{proof}
Since $\E[\sum_{a=1}^{K} T_a]=T$, we have $\E[T_{a^*}]\le T/K$, and the claim then follows from Markov's inequality.
\end{proof}

We define the two instance $\nu, \nu'$ as follows,
\begin{definition}
We define $\nu$ to be the bandit problem with coefficient vectors $\beta_i=0$ for all $i\in [m]$ and the noise for each arm follows from $N(0,1)$, and $\nu'$ to be the same as $\nu$ except that with $\beta_{a^*}\sim N(0,\eps^2 I_d/d)$ and the noise of the arm $a^*$ is drawn from $N(0,1-\eps^2)$. 
\end{definition}

The following lemma shows that $T_{a^*}$ is small, and the context $\x_i$'s are ``typical'' with good probability.
\begin{lemma}[Good set]\label{lem:lb-goodset}
Define the set $E$ to be a set of the trajectories such that, for a constant $c$, for all $i \in[T]$, 
$r_i\le c\sqrt{\log T}$ and $|\x_i^\top\x_i/d-1|\le c\frac{\sqrt{\log T}}{\sqrt{d}}$, and for all $i \in [\sum_{j=1}^s\binom{T}{s}]$, $\|I-W_iW_i^\top/d\|\le c\frac{\sqrt{s\log T}}{\sqrt{d}}$, $T_{a^*}\le s$, where $s = c\frac{T}{K}$. Then there exists a constant $c$, such that $\PB_\nu(E)\ge 99/100$.
\end{lemma}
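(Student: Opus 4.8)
The plan is to exhibit $E^{c}$ as the union of four ``bad'' events and to bound the $\PB_\nu$-probability of each by $1/400$; a union bound then gives $\PB_\nu(E)\ge 99/100$. The crucial simplification is that under $\nu$ all the coefficient vectors vanish, so the rewards $r_1,\dots,r_T$ are i.i.d.\ $N(0,1)$ and the contexts $\x_1,\dots,\x_T$ are i.i.d.\ $N(0,I_d)$, both independent of all the policy's internal randomness; this makes every one of the four bounds a routine tail estimate, and the constant $c$ only has to be chosen large enough --- note that each of the four events becomes \emph{more} likely as $c$ increases, so there is no tension in the choice.

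Three of the four events are immediate. Since $r_i\sim N(0,1)$, the Gaussian tail bound and a union bound over $i\in[T]$ give $\Pr(\exists\, i:\ r_i>c\sqrt{\log T})\le T^{1-c^{2}/2}$, which is below $1/400$ once $c$ is a large absolute constant. Since $\x_i^{\top}\x_i\sim\chi^{2}_{d}$, the Laurent--Massart bound gives $\Pr(|\x_i^{\top}\x_i/d-1|\ge c\sqrt{\log T}/\sqrt d)\le 2e^{-c^{2}\log T/8}$, and a union bound over $i\in[T]$ again makes this $<1/400$. And, setting $s=400\,T/K=c\,T/K$, Fact~\ref{fact:Ta-markov} applied with $\delta=1/400$ yields $\Pr(T_{a^{*}}>s)\le 1/400$.

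The remaining event --- uniform control of $\|I-W_iW_i^{\top}/d\|$ over all subsets of size at most $s$ --- is the heart of the argument and the one step that needs care. There are at most $(eT/s)^{s}$ such subsets, so the log-cardinality is at most $s(1+\log(T/s))$, which is exactly the entropy term in Fact~\ref{fact:gram-con}; choosing the deviation parameter there of order $\sqrt{s\log T}$ makes the quadratic term $c't^{2}$ exceed $s(1+\log(T/s))$ by a large constant factor (using $\log(T/s)=\log(K/c)\le\log T$), so the failure probability is $<1/400$. One must then check that with this choice the resulting spectral deviation $C\sqrt s/\sqrt d+\max(t/\sqrt d,\,t^{2}/d)$ is of the order $\sqrt{s\log T}/\sqrt d$ demanded in the definition of $E$; this holds as long as $t\le\sqrt d$, i.e.\ $s\log T=O(d)$. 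Since $s=\Theta(T/K)$ and $T=C\sqrt d\,K/(\eps^{2}\log^{3/2}(dK))$, one computes $s\log T=O\!\big(\sqrt d/(\eps^{2}\sqrt{\log(dK)})\big)$, which is $o(d)$ under the standing assumption $\eps\ge\sqrt{\log K}/d^{1/4}$ and can be made an arbitrarily small fraction of $d$ by shrinking the constant $C$ in Theorem~\ref{thm:lb-adaptive}. This is precisely where the exponent $3/2$ on $\log(dK)$ in the definition of $T$ is used: a single value of $t$ must both cancel the entropy of exponentially many submatrices and leave a spectral error of exactly the order $\sqrt{s\log T/d}$ needed downstream, which forces $s\log T=O(d)$. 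The main obstacle is this last balancing act; once it is in place, the union bound over the four events completes the proof.
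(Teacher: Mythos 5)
Your proposal is correct and takes essentially the same route as the paper: decompose $E^c$ into the four bad events, bound each failure probability by a small constant using Gaussian tails for the rewards, concentration of $\x_i^\top\x_i$, the subset matrix concentration of Fact~\ref{fact:gram-con} (whose entropy term $s(1+\log(T/s))$ is beaten by taking $t\asymp\sqrt{s\log T}$), and the Markov bound of Fact~\ref{fact:Ta-markov}, then union bound. Your additional check that $t\le\sqrt d$ (equivalently $s\log T=O(d)$) so that the deviation takes the form $\sqrt{s\log T}/\sqrt d$ is a legitimate refinement that the paper leaves implicit, though note the $\log^{3/2}(dK)$ exponent in $T$ is really forced by the KL computation in Lemma~\ref{lem:KL-bound} rather than by this lemma, and the restriction on $\eps$ you invoke is not formally part of the statement of Theorem~\ref{thm:lb-adaptive}.
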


\begin{proof}
By Fact~\ref{fact:Ta-markov}, we can find a constant $c_1$ such that $\PB_\nu(T_{a^*}\le s = c_1 T/K)\le 1-1/1000$. Notice that under $\nu$, each reward $r_i\sim N(0,1)$, and by Fact~\ref{fact:gub} we can find a constant $c_2$ such that $r_i \le c_2\sqrt{T}$ for all $i\in T$ with probability $1-1/1000$. By Fact~\ref{fact:gram-con}, we can find a constant $c_3$ such that for all $i \in [\sum_{j=1}^s\binom{T}{s}]$, $\|I-W_iW_i^\top/d\|\le c\frac{\sqrt{s\log T}}{\sqrt{d}}$ with probability $1-1/1000$. Finally, by Fact~\ref{fact:gram-con} again, we can find a constant $c_4$ such that $|\x_i^\top\x_i/d-1|\le c_4\frac{\sqrt{\log T}}{\sqrt{d}}$.
Taking a union of the three events and $c = \max(c_1,c_2,c_3, c_4)$ completes the proof.
\end{proof}


Finally, the following lemma bound the KL-divergence on the good set, which will be used to bound the total variation with Pinsker inequality.
\begin{lemma}\label{lem:KL-bound}
$$
-\int_{E}d\PB_{\nu}\log \frac{d\PB_{\nu'}}{d\PB_{\nu}} \le 1/50.
$$
\end{lemma}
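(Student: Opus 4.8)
The plan is to make the likelihood ratio completely explicit. Since $\nu$ and $\nu'$ share the same context law $N(0,I_d)$ and the same (arbitrary, adaptive) policy $\pi$, and differ only in the reward model of arm $a^*$, every factor of $d\PB_{\nu'}/d\PB_\nu$ except the ones produced by the rewards collected on $a^*$ cancels; integrating out the prior $\beta_{a^*}\sim N(0,\eps^2 I_d/d)$, the vector $\r_{a^*}$ of rewards collected on $a^*$ is, conditionally on the matrix $X_{a^*}$ of contexts played there, distributed $N(0,I_{T_{a^*}})$ under $\nu$ and $N(0,\Sigma_1)$ under $\nu'$, where $\Sigma_1=(1-\eps^2)I+\tfrac{\eps^2}{d}X_{a^*}X_{a^*}^\top=I+\eps^2\Delta$ with $\Delta:=\tfrac1d X_{a^*}X_{a^*}^\top-I$. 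Writing out the Gaussian density ratio,
\[
-\log\frac{d\PB_{\nu'}}{d\PB_{\nu}}=\tfrac12\log\det\Sigma_1+\tfrac12\,\r_{a^*}^\top(\Sigma_1^{-1}-I)\r_{a^*}.
\]

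Next I would use the good set. On $E$ we have $T_{a^*}\le s$, $|r_i|\le c\sqrt{\log T}$ for every round, $|\,\|\x_i\|^2/d-1\,|\le c\sqrt{\log T/d}$, and, applying the $W$-matrix bound of $E$ to the $\le s$ contexts of arm $a^*$, $\|\Delta\|\le\delta_0:=c\sqrt{s\log T/d}$. Taylor expanding, $\log\det\Sigma_1=\eps^2\tr\Delta+O(\eps^4\|\Delta\|_F^2)$ and $\Sigma_1^{-1}-I=-\eps^2\Delta+O(\eps^4\|\Delta\|^2)$, and since $\Delta_{ii}=\|\x_{a^*,i}\|^2/d-1$, $\Delta_{ij}=\langle\x_{a^*,i},\x_{a^*,j}\rangle/d$, this gives on $E$
\[
-\log\frac{d\PB_{\nu'}}{d\PB_{\nu}}=\frac{\eps^2}{2}\sum_{i}(1-r_{a^*,i}^2)\Big(\frac{\|\x_{a^*,i}\|^2}{d}-1\Big)-\frac{\eps^2}{2d}\sum_{i\ne j}r_{a^*,i}r_{a^*,j}\langle\x_{a^*,i},\x_{a^*,j}\rangle+\mathrm{err},
\]
where, pointwise on $E$, $|\mathrm{err}|=O(\eps^4 s^2(\log T)^2/d)$, which is negligible once $T=C\sqrt d K/(\eps^2(\log dK)^{3/2})$, $s=cT/K$ are substituted (and $\eps$ is not so small as to make $\log T=\omega(\log dK)$, which is consistent with the regime of the matching upper bound).

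It then remains to integrate the two main terms over $E$ against $\PB_\nu$. The diagonal term is bounded on $E$ by $\tfrac{\eps^2}{2}\big(\sum_i(1+r_{a^*,i}^2)\big)\max_i|\,\|\x_{a^*,i}\|^2/d-1\,|=O(\eps^2 s(\log T)^{3/2}/\sqrt d)$; with the stated $T,s$ this is $O(C/(\log dK)^{3/2})$, and it is exactly this step that forces the $(\log dK)^{3/2}$ in the theorem. The cross term, however, cannot be bounded pointwise: a typical $\langle\x_{a^*,i},\x_{a^*,j}\rangle^2$ is $\Theta(d)$ while the good-set bound only gives $(d\delta_0)^2$, which, summed over $\Theta(s^2)$ pairs, is far too lossy. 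Instead I would exploit that under $\nu$ the rewards are $N(0,1)$ and the contexts are $N(0,I_d)$ conditionally on the past, so that $\mathbf{1}_E$ times the cross term equals $\mathbf{1}_E$ times $Z:=\tfrac{2}{d}\sum_{j=2}^{s}\mathbf{1}\{T_{a^*}\ge j\}\,r_{a^*,j}\sum_{i<j}r_{a^*,i}\langle\x_{a^*,i},\x_{a^*,j}\rangle$, a martingale‑difference sum (the truncation at the $s$-th pull of $a^*$ being a predictable event). Iterated conditioning along the round filtration — using $\E[\langle\x_{a^*,i},\x_{a^*,j}\rangle\langle\x_{a^*,i'},\x_{a^*,j}\rangle\mid\text{history before }\x_{a^*,j}]=\langle\x_{a^*,i},\x_{a^*,i'}\rangle$, then $\E[r_{a^*,i}r_{a^*,i'}\langle\x_{a^*,i},\x_{a^*,i'}\rangle]=d\,\mathbf{1}\{i=i'\}$ — yields $\E_\nu[Z^2]\le 4d^{-2}\sum_{j=2}^{s}(j-1)d\le 2s^2/d$, so $\E_\nu[\mathbf{1}_E|\text{cross}|]\le\sqrt{\E_\nu[Z^2]}\le\sqrt2\,s/\sqrt d$ and the cross term contributes $O(\eps^2 s/\sqrt d)=O(C/(\log dK)^{3/2})$. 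Adding the diagonal contribution, the cross contribution and the negligible error term and taking $C$ a small enough absolute constant gives $-\int_E d\PB_\nu\log\frac{d\PB_{\nu'}}{d\PB_\nu}\le 1/50$. The main obstacle is precisely this cross-term estimate: one must avoid conditioning on $X_{a^*}$ (which, because in an adaptive run the allocation of contexts to $a^*$ depends on earlier rewards, does not leave $\r_{a^*}$ Gaussian) and instead run the martingale argument round by round, carefully handling the predictable stopping at the $s$-th pull.
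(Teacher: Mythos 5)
Your strategy is essentially the paper's: integrate out the prior on $\beta_{a^*}$ to get the explicit Gaussian likelihood ratio with covariance $\Sigma_1=(1-\eps^2)I+\tfrac{\eps^2}{d}X_{a^*}X_{a^*}^\top$ (the paper's expression with $A$ and $B$ is exactly $\tfrac12\log\det\Sigma_1+\tfrac12\r_{a^*}^\top(\Sigma_1^{-1}-I)\r_{a^*}$ after the Gaussian integral), expand in powers of $\eps^2(I-X_{a^*}X_{a^*}^\top/d)$, kill the log-det/diagonal and higher-order terms pointwise on the good set $E$, and control the cross term by a second-moment/martingale argument combined with Cauchy--Schwarz (the paper instead zeroes the mean by optional stopping and bounds the $E^c$ contribution, a cosmetic difference).

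There is, however, a genuine gap at exactly the step you flag as the main obstacle. In computing $\E_\nu[Z^2]$ you use $\E[\langle\x_{a^*,i},\x_{a^*,j}\rangle\langle\x_{a^*,i'},\x_{a^*,j}\rangle\mid\text{history before }\x_{a^*,j}]=\langle\x_{a^*,i},\x_{a^*,i'}\rangle$, i.e.\ you treat the context of the $j$th pull of $a^*$ as a fresh $N(0,I_d)$ vector, conditionally isotropic and independent of the past. This is false for an adaptive policy: the event that arm $a^*$ is played at round $t$ may depend on the realized context $\x_t$, so conditionally on being the $j$th pull of $a^*$ the context is a data-dependent selection from up to $T$ i.i.d.\ candidates. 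A policy that pulls $a^*$ only when $\x_t$ is well aligned with the predictable vector $v_j=\sum_{i<j}r_{a^*,i}\x_{a^*,i}$ makes $\E[(v_j^\top\x_{a^*,j})^2\mid\text{history}]$ as large as order $\|v_j\|^2\log T$, not $\|v_j\|^2$; likewise $\E[r_{a^*,i}^2\|\x_{a^*,i}\|^2]$ need not equal $d$ exactly. The paper's proof of its cross-term lemma handles precisely this by bounding the conditional second moment via the expected maximum over the candidate contexts (hence the $\log T$ factor, and Corollary~\ref{cor:con-x-norm} for the norm term), giving $\E[Z^2]=O(s^2\log T/d)$ rather than your $O(s^2/d)$. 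The repair costs only a $\log T$ factor, which is absorbed by the choice $T=C\sqrt{d}K/(\eps^2\log^{3/2}(dK))$, so the conclusion stands, but as written your second-moment identity is unjustified and the adaptivity issue you correctly identified is not actually resolved by your argument.
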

We leave the proof of this lemma to the end of this section, and prove the main lemma of this section.
\begin{proof}[Proof of Lemma~\ref{lem:tv-trajectory}]
The total variation distance between $\PB_\nu$ and $\PB_{\nu'}$,
\begin{align*}
D_{\text{TV}}(\PB_{\nu}, \PB_{\nu'}) \le& \frac{1}{2}(\int_{E^{c}} d\PB_{\nu'} + \int_{E^c} d\PB_{\nu} + \int_{E} |d\PB_{\nu'}-d\PB_{\nu}|)\\
\le& \int_{E^{c}} d\PB_{\nu} + \int_{E} |d\PB_{\nu'}-d\PB_{\nu}|\\
\le& \int_{E^{c}} d\PB_{\nu} + \sqrt{2} \sqrt{-\int_{E}d\PB_{\nu}\log \frac{d\PB_{\nu'}}{d\PB_{\nu}} +\int_Ed\PB_{\nu'}-\int_E d\PB_{\nu}}\\
\le& 1/100+\sqrt{2}\sqrt{\int_{E}d\PB_{\nu}\log \frac{d\PB_{\nu'}}{d\PB_{\nu}}+1/100}\\
\le& 1/3,
\end{align*}
where we applied Pinsker's inequality (Fact~\ref{fact:pinsker}) in the third last inequality, applied Lemma~\ref{lem:lb-goodset} in the second inequality, and applied Lemma~\ref{lem:KL-bound} in the last inequality.

\end{proof}

\begin{proof}
The density of $\PB_\nu$ can be expressed as
$$
p_{\nu}(\x_1,a_1,r_1, \ldots, \x_T,a_T,r_T) = \prod_{t=1}^T \pi_t(a_t|\x_1,a_1,y_1,\ldots,x_{t-1}, a_{t-1}, y_{t-1}, \x_t)p(y_t|\x_t, a_t),
$$
where $p(r_t|\x_t, a_t)$ is the density of reward $r_i$ on context $\x_i$ and arm $a_t$ in model $\nu$. The density of $\PB_{\nu'}$ is identical except that $p(r_t|\x_t, a_t)$ is replaced by $p'(r_t|\x_t, a_t)$. Then
$$
\log(\frac{d\PB_{\nu'}}{d\PB_{\nu}}(\x_1,a_1,r_1, \ldots, \x_T,a_T,r_T) = \sum_{t=1}^T \log \frac{p'(r_t|\x_t,a_t)}{p(r_t|\x_t,a_t)},
$$
and 
$$
-\int_Ed\PB_{\nu}\log(\frac{d\PB_{\nu'}}{d\PB_{\nu}}) = \sum_{t=1}^{T}\int_E \log \frac{p'(r_t|\x_t,a_t)}{p(r_t|\x_t,a_t)}d\PB_{\nu} .
$$
Under this setting, we have
\begin{align}
-\int_E d\PB_{\nu}\log(\frac{d\PB_{\nu'}}{d\PB_{\nu}}) = -\int_E d\PB_{\nu}\log \E_{\beta_{a^*}} \prod_{t=1}^T \frac{p'(r_t|\x_t,a_t)}{p(r_t|\x_t,a_t)} \nonumber \\
= -\int_E d\PB_{\nu}\log \E_{\beta_{a^*}}[\prod_{t=1}^T\bm{1}\{a_t=a^*\}\frac{1}{\sqrt{1-\eps^2}}\exp(-\frac{(r_t-\x_t^T\beta_{a^*})^2}{2(1-\eps^2)}+\frac{r_t^2}{2})]\label{eqn:KL-one-1}.
\end{align}
We compute the closed form expression of the expectation term as follows,
\begin{align*}
&\E_{\beta_{a^*}}[\prod_{t=1}^T\bm{1}\{a_t=a^*\}\frac{1}{\sqrt{1-\eps^2}}\exp(-\frac{(r_t-\x_t^T\beta_{a^*})^2}{2(1-\eps^2)}+\frac{r_t^2}{2})]\\
=& (2\pi)^{-d/2}(d/\eps^2)^{d/2}\int_{R^d} {(1-\eps^2)^{-T_{a^*}/2}}\exp(-\left(\beta_{a^*}^\top \frac{X_{a^*}^\top X_{a^*}}{2(1-\eps^2)}\beta_{a^*}-\frac{(X_{a^*}^\top \r_{a^*})^\top}{1-\eps^2}\beta_{a^*}+\frac{\eps^2}{2(1-\eps^2)}\r_{a^*}^\top \r_{a^*}\right)\\
&-\beta_{a^*}^\top\frac{dI_d}{2\eps^2}\beta_{a^*}) d\beta_{a^*}\\
 =& (2\pi)^{-d/2}(d/\eps^2)^{d/2}{(1-\eps^2)^{-T_{a^*}/2}}\exp(-\frac{\eps^2}{2(1-\eps^2)}\r_{a^*}^\top \r_{a^*}) \int_{R^d}\exp(-\frac{1}{2}\beta_{a^*}^\top A\beta_{a^*}+B\beta_{a^*}) d\beta_{a^*}
\end{align*}
where $A = (\frac{dI_d}{\eps^2}+\frac{X_{a^*}^\top X_{a^*}}{(1-\eps^2)})$, $B = \frac{X_{a^*}^\top \r_{a^*}}{(1-\eps^2)}$. We can now apply the Gauss integral property and get that the last line equals
\begin{align*}
 =& (2\pi)^{-d/2}(d/\eps^2)^{d/2}{(1-\eps^2)^{-T_{a^*}/2}}\exp(-\frac{\eps^2}{2(1-\eps^2)}\r_{a^*}^\top \r_{a^*}) \sqrt{\frac{(2\pi)^d}{\det(A)}}\exp(\frac{1}{2}B^\top A^{-1}B)\\
=& (d/\eps^2)^{d/2}{(1-\eps^2)^{-T_{a^*}/2}}\det(A)^{-1/2} \exp(-\frac{\eps^2}{2(1-\eps^2)}\r_{a^*}^\top \r_{a^*}) \exp(\frac{1}{2}B^\top A^{-1}B).
\end{align*}
Plugging in the above formula to Equation~\ref{eqn:KL-one-1}, we have that Equation~\ref{eqn:KL-one-1} equals
\begin{align}
=& \frac{1}{2}\left(\int_Ed\PB_\nu\log\det(A) -d\log (d/\eps^2) +\int_E d\PB_{\nu}T_{a^*}\log (1-\eps^2)\right)\nonumber\\
&+\frac{1}{2}\left(\frac{\eps^2}{(1-\eps^2)}\int_Ed\PB_\nu\r_{a^*}^\top \r_{a^*} - \int_Ed\PB_\nu B^\top A^{-1}B\right).\label{eqn:KL-one-2}
\end{align}
Let $\lambda_1\ge \lambda_2\ge\ldots\lambda_{T_{a^*}}$ be the eigenvalues of matrix $X_{a^*}^\top X_{a^*}$. Then first term can be written as 
\begin{align}
&\frac{1}{2}\left(\int_Ed\PB_\nu\log\det(A) -d\log (d/\eps^2) +\int_E d\PB_{\nu}T_{a^*}\log (1-\eps^2)\right)\nonumber \\
=& \frac{1}{2}\int_Ed\PB_\nu\sum_{i=1}^{T_{a^*}} \log(1+\eps^2(\lambda_i/d-1)) \le \frac{\eps^2}{2}\int_Ed\PB_\nu\sum_{i=1}^{T_{a^*}}(\lambda_i/d-1)\nonumber \\
=& \frac{\eps^2}{2}\int_Ed\PB_{\nu}\sum_{i=1}^{T_{a^*}}(\x_{a,i}^\top\x_{a,i}/d-1) \le c\frac{\eps^2 T\sqrt{\log T}}{K\sqrt{d}}\label{eqn:KL-1}
\end{align}
for a constant $c$, where in the third last inequality we used the fact that $\log(1+x)<x$, in the second last inequality we used the fact that $\sum_{i=1}^{T_{a^*}}\lambda_i = \sum_{i=1}^{T_{a^*}}\x_{a,i}^\top \x_{a,i}$ and in the last inequality used Lemma~\ref{lem:lb-goodset} that under set $E$, $T_a\le O(\frac{T}{K})$, $\x_{a,i}^\top \x_{a,i}/d-1\le O(\frac{\sqrt{T}}{\sqrt{d}})$. 

For the second termin Equation~\ref{eqn:KL-one-2}, notice that the eigenvalues of $\frac{\eps^2}{1-\eps^2}I - \frac{X_{a^*}}{(1-\eps^2)} A^{-1} \frac{X_{a^*}^\top}{(1-\eps^2)}$ are
\begin{align*}
\frac{\eps^2}{1-\eps^2}-\frac{\lambda_i/(1-\eps^2)^2}{d/\eps^2+\lambda_i/(1-\eps^2)} = 
\frac{\eps^2(1-\lambda_i/d)}{1-\eps^2(1-\lambda_i/d)}=\sum_{k=1}^\infty (\eps^2(1-\lambda_i/d))^k
\end{align*} and hence
$$
\frac{\eps^2}{1-\eps^2}I - \frac{X_{a^*}}{(1-\eps^2)} A^{-1} \frac{X_{a^*}^\top}{(1-\eps^2)} = \sum_{k=1}^\infty \left(\eps^2(I_d-X_{a^*}X_{a^*}^\top/d)\right)^k.
$$
Plugging in the expression into
the second term of Equation~\ref{eqn:KL-one-2}, the term becomes
\begin{align*}
\frac{1}{2}\sum_{k=1}^\infty \eps^{2k}\int_E d\PB_{\nu}\r_{a^*}^\top \left(I_d-X_{a^*}X_{a^*}^\top/d\right)^k\r_{a^*}
\end{align*}
For $k=1$, we have 
\begin{align}
&2^{-1}\eps^2\int_E d\PB_{\nu}\r_{a^*}^\top (I_d-X_{a^*}X_{a^*}^\top/d)\r_{a^*}\nonumber \\
=& 2^{-1}\eps^2\Big(\int_Ed\PB_\nu\sum_{i=1}^{T_{a^*}} r_{a^*,i}^2(1-\x_{a^*,i}^\top\x_{a^*,i}/d)+ 2\int_E d\PB_{\nu} \sum_{i< j}r_{a^*,i}r_{a^*,j}\x_{a^*,i}^\top\x_{a^*,j}/d\Big)\nonumber\\
\le& c\eps^2(\frac{T(\log T)^{3/2}}{K\sqrt{d}}+\frac{T^2\log T}{K^2d})+\eps^2/100,\label{eqn:KL-2}
\end{align}
for a constant $c$, where the last equality holds by simply expanding the maxtrix multiplication, and the last inequality holds due to Lemma~\ref{lem:lb-goodset} and Lemma~\ref{lem:lb-cross-term}.

For the remaining terms with $k\ge 2$, 
\begin{align}
&\frac{1}{2}\int_E\PB_\nu \r_a^\top \left(\sum_{k=2}^\infty \eps^{2k}(I_d-X_aX_a^\top/d)^k\right) \r_a \le \int_E \PB_\nu \r_a^\top\r_a \|\sum_{k=2}^\infty \eps^{2k}(I_d-X_aX_a^\top/d)^k\| \nonumber\\
\le& c\eps^4\frac{T^2(\log T)^2}{K^2d},\label{eqn:KL-3}
\end{align}
for some constant $c$, where the last inequality holds due to the fact that $\r_a^\top \r_a \le O(\frac{T\log T}{K})$, $\|\eps^{4}(I_d-X_aX_a^\top/d)^2\|\le O(\frac{T\log T}{Kd})$ by Lemma~\ref{lem:lb-goodset}.
Combing Equation~\ref{eqn:KL-1}, ~\ref{eqn:KL-2}, ~\ref{eqn:KL-3}, we have
\begin{align*}
&-\int_E\PB_\nu \log \frac{d\PB_{\nu'}}{d\PB_{\nu}} \le O(\eps^4\frac{T^2(\log T)^2}{K^2d}+\eps^2\frac{T(\log T)^{3/2}}{K\sqrt{d}})+\eps^2/100.
\end{align*}
Since $\eps\le 1$, we can find a constant $C$ such that setting $T = \frac{C K\sqrt{d}}{\eps^2\log(Kd)^{3/2}}$ gives
$$
-\int_E\PB_\nu \log \frac{d\PB_{\nu'}}{d\PB_{\nu}} \le 1/50.
$$
This conclude the proof.

\begin{lemma}\label{lem:lb-cross-term}
$$
\int_E d\PB_{\nu} \sum_{i< j}r_{a^*,i}r_{a^*,j}\x_{a^*,i}^\top\x_{a^*,j}/d \ge -C\frac{T^2 \log T }{K^2d} - 1/100.
$$
for a positive constant $C$.
\end{lemma}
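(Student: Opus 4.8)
The plan is to exploit a martingale structure that is available under $\nu$. Under $\nu$ every $\beta_a=\0$ and each arm's noise is $N(0,1)$, so the rewards $r_1,\dots,r_T$ are i.i.d.\ $N(0,1)$ and independent of the contexts and of the (randomized) policy. Writing $w_{<t}:=\tfrac1d\sum_{p<t:\,a_p=a^*}r_p\x_p$, one first observes that
$$
S:=\frac1d\sum_{i<j}r_{a^*,i}r_{a^*,j}\,\x_{a^*,i}^\top\x_{a^*,j}=\sum_{t:\,a_t=a^*}r_t\,w_{<t}^\top\x_t=M_T,
$$
where $M_q:=\sum_{t\le q:\,a_t=a^*}r_t\,w_{<t}^\top\x_t$ is a martingale for the filtration that at round $t$ reveals $\x_t,a_t$ and then $r_t$ — indeed the increment at $t$ equals $\bm1\{a_t=a^*\}w_{<t}^\top\x_t$ times $r_t$, and $\E_\nu[r_t\mid\text{everything before }r_t]=0$. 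The key step is to truncate at the $(s+1)$-st pull of $a^*$: let $\rho$ be the round of that pull ($\rho:=T+1$ when $T_{a^*}\le s$), a stopping time, and put $\widetilde S:=M_{\rho\wedge T}$, which is $S$ recomputed from the first $\le s$ pulls of $a^*$ only. Optional stopping gives $\E_\nu[\widetilde S]=0$, and because the good set $E$ forces $T_{a^*}\le s$, we have $S=\widetilde S$ on $E$; hence
$$
\int_E d\PB_\nu\,S=\int_E d\PB_\nu\,\widetilde S=\E_\nu[\widetilde S]-\int_{E^c}d\PB_\nu\,\widetilde S=-\int_{E^c}d\PB_\nu\,\widetilde S,
$$
so by Cauchy--Schwarz $\bigl|\int_E d\PB_\nu S\bigr|\le\sqrt{\PB_\nu(E^c)}\,\sqrt{\E_\nu[\widetilde S^2]}\le\sqrt{\E_\nu[\widetilde S^2]}$. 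Everything then reduces to the second-moment bound $\E_\nu[\widetilde S^2]=O\!\bigl(\tfrac{T^2\log T}{dK^2}\bigr)$: feeding it in and using the elementary inequality $-\sqrt{y}\ge -C\,y-\tfrac1{100}$ (valid for all $y\ge0$ once $C$ is a large enough absolute constant, applied with $y$ a constant multiple of $\tfrac{T^2\log T}{K^2d}$) yields exactly $\int_E d\PB_\nu S\ge -C\tfrac{T^2\log T}{K^2d}-\tfrac1{100}$.

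To prove the variance bound I would use that $\widetilde S=M_{\rho\wedge T}$ is a stopped martingale, so $\E_\nu[\widetilde S^2]=\E_\nu\!\bigl[\sum_{t\le\rho\wedge T}\Delta_t^2\bigr]$ with $\Delta_t=\bm1\{a_t=a^*\}r_t\,w_{<t}^\top\x_t$ and $\E_\nu[\Delta_t^2\mid\text{before }r_t]=\bm1\{a_t=a^*\}(w_{<t}^\top\x_t)^2$. Conditioning next on the information $\GM_t$ present just before $\x_t$ is drawn (so that $w_{<t}$ and the policy rule at round $t$ are $\GM_t$-measurable and $\x_t\sim N(0,I_d)$ is independent of $\GM_t$), write $w_{<t}^\top\x_t=\|w_{<t}\|\,(u_t^\top\x_t)$ with $u_t$ a fixed unit vector. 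Since $(u_t^\top\x_t)^2\sim\chi^2_1$ and the policy may correlate $\{a_t=a^*\}$ with this $\chi^2_1$, the extreme case is $\{a_t=a^*\}=\{(u_t^\top\x_t)^2\ge q\}$ with $\Pr(\chi^2_1\ge q)=\rho_t:=\Pr(a_t=a^*\mid\GM_t)$; an explicit estimate of $\E[\chi^2_1\,\bm1\{\chi^2_1\ge q\}]$ shows such a tilt inflates the mean of $\chi^2_1$ by only a factor $O(\log(1/\rho_t))$, i.e.\ $\E_\nu[\bm1\{a_t=a^*\}(w_{<t}^\top\x_t)^2\mid\GM_t]\le O\!\bigl(\rho_t\log(1/\rho_t)\bigr)\|w_{<t}\|^2$. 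Rounds with $\rho_t\le 1/T^2$ contribute a negligible total, so for the rest $\log(1/\rho_t)\le 2\log T$, and invoking the tower property ($\|w_{<t}\|^2$ is $\GM_t$-measurable) to collapse $\sum_t\rho_t\|w_{<t}\|^2$ back into $\sum_t\bm1\{a_t=a^*\}\|w_{<t}\|^2$ gives
$$
\E_\nu[\widetilde S^2]\le O(\log T)\cdot\E_\nu\!\Bigl[\sum_{t\le\rho\wedge T:\,a_t=a^*}\|w_{<t}\|^2\Bigr]+(\text{negligible}).
$$
The remaining sum has at most $s+1$ terms; indexing by the pull number $j$ (with $q_j$ the round of the $j$-th pull of $a^*$), the cross terms in $\|w_{<q_j}\|^2=\tfrac1{d^2}\|\sum_{i<j}r_{a^*,i}\x_{a^*,i}\|^2$ vanish in expectation by the same martingale argument and $\E[r^2\mid\text{past}]=1$, so its expectation is at most $\tfrac{s+1}{d^2}\,\E_\nu\!\bigl[\sum_{t:\,a_t=a^*}\|\x_t\|^2\bigr]$, and splitting $\|\x_t\|^2$ at the level $2d$ (to absorb the dependence of $a_t$ on $\x_t$) together with $\E_\nu[T_{a^*}]\le T/K$ (since $a^*=\argmin_a\E[T_a]$) bounds this by $\tfrac{s+1}{d^2}\bigl(\tfrac{2dT}{K}+Te^{-\Omega(d)}\bigr)$. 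As $s=\Theta(T/K)$, this is $O\!\bigl(\tfrac{T^2\log T}{dK^2}\bigr)$, as required.

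The step I expect to be the crux is this second-moment estimate. The martingale structure is what makes it go through — $\E_\nu[\widetilde S^2]$ becomes an honest sum of conditional increment variances with all cross terms killed — but two features of the adaptive setting make the bookkeeping delicate: the increments $\Delta_t$ are indexed by all $T$ rounds while only $T_{a^*}\le s$ of them are nonzero, and $a_t$ is chosen \emph{after} seeing $\x_t$, so $\E_\nu[\bm1\{a_t=a^*\}(w_{<t}^\top\x_t)^2\mid\GM_t]$ is a tilted Gaussian moment rather than $\Pr(a_t=a^*\mid\GM_t)\,\|w_{<t}\|^2$. The quantitative heart is that the worst tilt a probability-$\rho$ event can inflict on a one-dimensional squared Gaussian is only an $O(\log(1/\rho))$ factor; this, together with discarding the negligibly-likely rounds, re-localizes the stray $\sum_{t=1}^T$ onto the $\le s$ pulls of $a^*$ and recovers the extra $1/K$, so one lands on $\tfrac{T^2\log T}{dK^2}$ instead of the useless $\tfrac{T^2}{dK}$.
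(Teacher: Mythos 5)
Your proposal is correct, and its skeleton coincides with the paper's: truncate the double sum to the first $s=\Theta(T/K)$ pulls of $a^*$ (so that, by the martingale property of the reward noise under $\nu$, the truncated sum has mean zero), note that on $E$ the truncated and full sums agree, and reduce everything to a second-moment bound of order $T^2\log T/(K^2d)$ obtained from martingale orthogonality, $\E[T_{a^*}]\le T/K$, and norm concentration; your Cauchy--Schwarz step together with $-\sqrt{y}\ge -Cy-1/100$ plays exactly the role of the paper's $x\le x^2+1$ trick on $E^c$. Where you genuinely diverge is the adaptivity estimate that produces the $\log T$ factor: the paper conditions on the previous pulls of $a^*$ and argues that the context used for the $i$-th pull is one of at most $T$ contexts drawn independently of $w=\sum_{j<i}r_{a^*,j}\x_{a^*,j}$, so the conditional second moment of $w^\top\x_{a^*,i}/d$ is at most that of a maximum of $T$ squared Gaussians, i.e.\ $O(\log T)\|w\|^2/d^2$; you instead work round by round, bounding the worst tilt that a probability-$\rho_t$ selection event (decided after seeing $\x_t$) can exert on a squared one-dimensional Gaussian projection and collapsing $\rho_t$ back to $\bm{1}\{a_t=a^*\}$ via the tower property. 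Your mechanism is arguably more explicit about where adaptivity enters (the paper's ``picks from the remaining contexts'' sentence is informal), at the cost of the extra cutoff at $\rho_t\le 1/T^2$, whose ``negligible'' contribution still needs a crude bound on $\E[\bm{1}\{t\le\rho\}\|w_{<t}\|^2]$ (routine, by the same diagonal-term estimate). Two small fixes: the cross terms in $\|w_{<q_j}\|^2$ do not literally vanish once multiplied by $\bm{1}\{q_j\le T\}$, since whether the $j$-th pull ever occurs may depend on earlier rewards, so drop that indicator first (zero-padding non-occurring pulls) so the outer index set is deterministic, exactly as the paper implicitly does with its sum over $i\le s$; and state the tilt bound as $O(\rho(1+\log(1/\rho)))$ rather than $O(\rho\log(1/\rho))$ so it remains valid for $\rho_t$ near one.
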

Notice that by martingale stopping theorem, 
\begin{align*}
&\int_{E^c} d\PB_{\nu} \sum_{ 1\le i < j\le s}r_{a^*,i}r_{a^*,j}\x_{a^*,i}^\top\x_{a^*,j} /d + \int_{E} d\PB_{\nu} \sum_{ 1\le i < j\le s}r_{a^*,i}r_{a^*,j}\x_{a^*,i}^\top\x_{a^*,j} /d \\
 &= \E_{\PB_{\nu}}[\sum_{1\le i< j\le s}r_{a^*,i}r_{a^*,j}\x_{a^*,i}^\top\x_{a^*,j}/d] = 0.
\end{align*}
Hence we are going to upper bound 
\begin{align}
\int_{E^c} d\PB_{\nu} \sum_{ 1\le i < j\le s}r_{a^*,i}r_{a^*,j}\x_{a^*,i}^\top\x_{a^*,j} /d,\label{eqn:lb-Ec}
\end{align}
where $E^c$ is the complement of set $E$, and this is going to imply the lower bound in the lemma. Equation~\ref{eqn:lb-Ec} is bounded by,
\begin{align*}
&\int_{E^c} d\PB_{\nu} \sum_{1\le i < j \le s}r_{a^*,i}r_{a^*,j}\x_{a^*,i}^\top\x_{a^*,j}/d \le \int_{E^c} d\PB_{\nu} (\sum_{1\le i < j \le s}r_{a^*,i}r_{a^*,j}\x_{a^*,i}^\top\x_{a^*,j}/d)^2+\int_{E^c}d\PB_{\nu}\\
&\le \E_{\PB_{\nu}}[(\sum_{1\le i < j \le s}r_{a^*,i} r_{a^*,j}\x_{a^*,i}^\top\x_{a^*,j}/d)^2] + 1/100,\end{align*}
and we have 
\begin{align}
&\E_{\PB_{\nu}}[(\sum_{1\le i < j \le s}r_{a^*,i} r_{a^*,j}\x_{a^*,i}^\top\x_{a^*,j}/d)^2]\nonumber\\
=& \E_{\PB_{\nu}}[\sum_{i = 1}^s\Big((\sum_{j=1}^{i-1} r_{a^*,j} \x_{a^*,j}^\top)r_{a^*,i}\x_{a^*,i}/d\Big)^2]\label{eqn:lb-Exp-1}\\
\le& \E_{\PB_{\nu}}[\sum_{i = 1}^s\Big((\sum_{j=1}^{i-1} r_{a^*,j} \x_{a^*,j}^\top)\x_{a^*,i}/d\Big)^2]\label{eqn:lb-Exp-2}\end{align},
where Equation~\ref{eqn:lb-Exp-1} holds due to the fact that for $i<j, i'<j'$, $\E_{\PB_{\nu}}[r_{a^*,i}r_{a^*,j}r_{a^*,i'}r_{a^*,j'}\x_{a^*,i}^\top\x_{a^*,j} \x_{a^*,i'}^\top\x_{a^*,j'}^\top]=0$ unless $j=j'$., Formula~\ref{eqn:lb-Exp-2} holds due to $\E_{\nu}[r_{a^*,i}^2] = 1$ and $r_{a^*,i}$ is independent of $r_{a^*,j}, \x_{a^*, j}$ where $j<i$.

Notice that for a single term $\Big((\sum_{j=1}^{i-1} r_{a^*,j} \x_{a^*,j}^\top)\x_{a^*,i}/d\Big)^2$ in Formula~\ref{eqn:lb-Exp-2}, if we fixed $r_{a^*, j}, \x_{a^*,j }$ for all $j<i$, the algorithm must pick $\x_{a^*,i}$ from the remaining contexts which is generated independent of $(\sum_{j=1}^{i-1} r_{a^*,j} \x_{a^*,j}^\top)$. Hence  
$$
\E_{\nu}[\Big((\sum_{j=1}^{i-1} r_{a^*,j} \x_{a^*,j}^\top)\x_{a^*,i}/d\Big)^2|\{r_{a^*, j}, \x_{a^*,j} \}_{j<i}]\le \E_{\z}[\max z_i] = O(\log T\|\sum_{j=1}^{i-1} r_{a^*,j} \x_{a^*,j}^\top\|^2/d^2),
$$
where $z_i\sim N(0,  \|\sum_{j=1}^{i-1} r_{a^*,j} \x_{a^*,j}^\top\|^2/d^2)$ and we have that Formula~\ref{eqn:lb-Exp-2} is bounded by:
\begin{align}
=& O(\log T \E_{\PB_{\nu}}[\sum_{i = 1}^s\|\sum_{j=1}^{i-1} r_{a^*,j} \x_{a^*,j}^\top\|^2/d^2])\nonumber\\
=&  O(\log T\E_{\PB_{\nu}}[\sum_{i = 1}^s \sum_{j=1}^{i-1} r_{a^*,j}^2 \x_{a^*,j}^\top\x_{a^*,j}/d^2])\label{eqn:lb-Exp-4}\\
=& O(\log T s^2/d) = O(\frac{T^2 \log T }{K^2d})\label{eqn:lb-Exp-5},
\end{align}
where Equation~\ref{eqn:lb-Exp-4} holds since for $i\le j$, $\E_{\nu}[r_{a^*,i}r_{a^*,j}\x_{a^*,i}^\top \x_{a^*,j}] = 0$, and Equation~\ref{eqn:lb-Exp-5} holds due to Corollary~\ref{cor:con-x-norm}. 

Hence we have that Equation~\ref{eqn:lb-Ec} is bounded by $O(\frac{T^2 \log T }{K^2d})+1/100$, and hence the lemma holds.
\end{proof}

The following statement is a standard statement of the concentration of the norm.
\begin{cor}[Concentration of the norm]\label{cor:con-x-norm}
Let $\x_1,\ldots, \x_T$ be independently drawn from $N(0,I_d)$. Then $\E[\max_i \x_i^\top \x_i/d]\le 1+O(\frac{\sqrt{\log(Td)}}{\sqrt{d}})$.
\end{cor}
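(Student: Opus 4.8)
The plan is to bound $\E[\max_i \x_i^\top\x_i/d]$ by the layer--cake identity $\E[Z]=\int_0^\infty\Pr(Z>s)\,ds$ together with a union bound over the $T$ vectors and a standard $\chi^2$ tail estimate. Recall that $\|\x_i\|^2$ is $\chi^2_d$-distributed, so by a standard sub-exponential concentration bound (e.g. Laurent--Massart, or Bernstein's inequality applied to $\|\x_i\|^2-d$) there is a universal constant $c_0>0$ with
$$\Pr\!\left(\tfrac{1}{d}\|\x_i\|^2\ge 1+t\right)\le \exp\!\big(-c_0 d\min(t,t^2)\big)\qquad\text{for all }t>0.$$
Throughout we assume $d\ge c_1\log(Td)$ for a suitable absolute constant $c_1$; this is the regime in which the corollary is invoked (where $d$ is the large context dimension), and it guarantees that the threshold we pick below stays bounded.

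With $Z=\max_i\x_i^\top\x_i/d$, set $s_0:=1+C\sqrt{\log(Td)/d}$ for an absolute constant $C$ chosen so that $c_0C^2\ge 2$. Using $\Pr(Z>s)\le 1$ for $s\le s_0$ and the union bound $\Pr(Z>s)\le T\Pr(\tfrac1d\|\x_1\|^2>s)$ for $s>s_0$,
$$\E[Z]\le s_0+\int_{s_0}^{\infty}\min\!\big(1,\,T\exp(-c_0 d\min(s-1,(s-1)^2))\big)\,ds.$$
For the portion $s\in[s_0,2]$ we substitute $t=s-1\in[C\sqrt{\log(Td)/d},1]$, use $\min(t,t^2)=t^2$, and bound the integral by $\int_{C\sqrt{\log(Td)/d}}^{\infty}T e^{-c_0 dt^2}\,dt=\tfrac{T}{\sqrt d}\int_{C\sqrt{\log(Td)}}^{\infty}e^{-c_0 w^2}\,dw$; the Gaussian tail estimate bounds this by $\tfrac{T}{\sqrt d}\cdot\tfrac{(Td)^{-c_0C^2}}{2c_0C\sqrt{\log(Td)}}=O\!\big(T^{-1}d^{-5/2}\big)$ since $c_0C^2\ge 2$, which is $o\!\big(\sqrt{\log(Td)/d}\big)$. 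For $s\ge 2$ we have $\min(s-1,(s-1)^2)=s-1$, and $\int_2^{\infty}T e^{-c_0 d(s-1)}\,ds=\tfrac{T}{c_0 d}e^{-c_0 d}$, which is negligible. Summing the three contributions gives $\E[Z]\le 1+C\sqrt{\log(Td)/d}+o\!\big(\sqrt{\log(Td)/d}\big)=1+O\!\big(\sqrt{\log(Td)/d}\big)$.

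There is no genuine obstacle here --- this is a routine maximal-inequality computation. The two things requiring a little attention are: (i) the $\chi^2_d$ tail is sub-exponential rather than sub-Gaussian, so the tail integral must be split into the moderate-deviation regime $t\le1$ (Gaussian-type bound $e^{-c_0 dt^2}$) and the large-deviation regime $t\ge1$ (bound $e^{-c_0 dt}$), and in each the union-bound factor $T$ is absorbed by taking $C$ large enough; and (ii) the mild assumption $d\gtrsim\log(Td)$, under which $s_0\le 2$ and $s_0$ already lies past the point where the union bound becomes nontrivial, so that the tail contributes only lower-order terms.
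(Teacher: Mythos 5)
Your proof is correct and follows essentially the same route as the paper: a tail bound of the form $\exp(-cd\min(t,t^2))$ for $\x_i^\top\x_i/d$ (the paper derives it from its matrix-concentration fact, you from the standard $\chi^2$/Laurent--Massart bound), a union bound over the $T$ vectors, and integration of the tail starting from a threshold of order $\sqrt{\log(Td)/d}$. Your explicit assumption $d\gtrsim\log(Td)$ is in fact needed for the stated bound (otherwise the sub-exponential regime dominates and the correct order is $1+O(\log T/d)$); the paper leaves this implicit but relies on it in the same way.
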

\begin{proof}
By Fact~\ref{fact:gram-con}, we have 
\begin{align*}
\Pr(\max_i |\x_i^\top \x_i/d-1|\ge  \frac{C}{\sqrt{d}}+\max ( \frac{t}{\sqrt{d}}, \frac{t^2}{d}) )\le \exp((1+\log T)-ct^2)\\
\implies \Pr(\max_i |\x_i^\top \x_i/d-1|\ge  \frac{C}{\sqrt{d}}+t) )\le \exp((1+\log T)-cd\min(t^2,t))
\end{align*}
By the fact that for any random variable $X$, $\E[X]\le t+\int_{x=t}^{\infty} \Pr(X>x)dx$, we have 
\begin{align*}
&\E[\max_i |\x_i^\top \x_i/d-1|] \le \frac{C+\sqrt{\log (Td)/c}}{\sqrt{d}}+ \int_{t = \sqrt{\log T}/\sqrt{cd}}^\infty \exp((1+\log (Td))-cd\min(t^2,t)) \\
&\le O(\frac{\sqrt{\log(Td)}}{\sqrt{d}}).
\end{align*}
\end{proof}

\section{Auxiliary Lemmas}
\begin{fact}[Pinsker's inequality for arbitrary measure]\label{fact:pinsker} Let $P, Q$ be two positive measure such that $\int dP \le 1, \int dQ \le 1$. Then 
$$
\frac{1}{2}\int |dP-dQ| \le 2^{-1/2}\sqrt{-\int dP\log\frac{dQ}{dP}+\int dQ - \int dP}.
$$
\end{fact}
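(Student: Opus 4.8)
The plan is to deduce this from the classical pointwise form of Pinsker's inequality by a single weighted Cauchy--Schwarz step. First I would pass to densities: let $\mu$ be any common dominating measure (e.g. $\mu = P+Q$) and write $p = dP/d\mu$, $q = dQ/d\mu$, so that $\int p\,d\mu \le 1$ and $\int q\,d\mu \le 1$. If $P$ is not absolutely continuous with respect to $Q$ then $-\int dP\log\frac{dQ}{dP}=+\infty$ and there is nothing to prove, so I may assume $\{q=0\}\subseteq\{p=0\}$ up to $\mu$-null sets and work on $\{q>0\}$, with the convention $0\log 0 = 0$. Writing the nonnegative ``generalized divergence''
$$
D \;:=\; \int\Bigl(p\log\tfrac pq - p + q\Bigr)\,d\mu \;=\; -\!\int dP\,\log\tfrac{dQ}{dP} + \int dQ - \int dP,
$$
the claim is exactly equivalent to $\bigl(\int|p-q|\,d\mu\bigr)^2 \le 2D$.

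Next I would apply Cauchy--Schwarz with the weight $w := \tfrac{2p+4q}{3}$:
$$
\Bigl(\int|p-q|\,d\mu\Bigr)^2 \;=\; \Bigl(\int \tfrac{|p-q|}{\sqrt w}\,\sqrt w\,d\mu\Bigr)^2 \;\le\; \Bigl(\int \tfrac{(p-q)^2}{w}\,d\mu\Bigr)\Bigl(\int w\,d\mu\Bigr),
$$
and this is the one place the sub-probability hypothesis enters: $\int w\,d\mu = \tfrac23\int dP + \tfrac43\int dQ \le 2$. Hence it suffices to prove the pointwise bound $\tfrac{(p-q)^2}{w} \le p\log\tfrac pq - p + q$; dividing by $q$ and setting $t = p/q$, this is the scalar inequality
$$
\frac{3(t-1)^2}{2(t+2)} \;\le\; t\log t - t + 1,\qquad t\ge 0 .
$$
Integrating the pointwise inequality gives $\int\tfrac{(p-q)^2}{w}\,d\mu \le D$, hence $\bigl(\int|p-q|\,d\mu\bigr)^2 \le 2D$, which is the claim after taking square roots and dividing by $2$.

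Finally, for the scalar inequality --- the classical pointwise Pinsker lemma --- I would multiply through by $t+2>0$ and set $\Psi(t) := (t+2)(t\log t - t + 1) - \tfrac32(t-1)^2$. A short computation gives $\Psi(1)=\Psi'(1)=\Psi''(1)=0$ and $\Psi'''(t) = \tfrac{2(t-1)}{t^2}$, which is negative for $t<1$ and positive for $t>1$; therefore $\Psi''\ge 0$ on $(0,\infty)$, so $\Psi$ is convex with a critical point at $t=1$, whence $\Psi(t)\ge\Psi(1)=0$ for all $t>0$ (the boundary value $t=0$ reading $\tfrac34\le 1$). None of these steps is genuinely hard; the only things requiring care are the measure-theoretic reductions (common dominating measure, absolute continuity, the $0\log 0 = 0$ convention) and the choice of weight $w$ with total mass bounded by $2$, which is precisely what makes Pinsker's sharp constant survive the passage from probability measures to sub-probability measures.
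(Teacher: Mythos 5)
Your proof is correct and follows essentially the same route as the paper's: the paper simply cites the proof of Lemma 2.5 in Tsybakov, whose key step is exactly your weighted Cauchy--Schwarz with the weight $\tfrac{2}{3}p+\tfrac{4}{3}q$ and the pointwise bound $\tfrac{3(t-1)^2}{2(t+2)}\le t\log t-t+1$, with the sub-probability masses entering only through $\int\bigl(\tfrac{2}{3}\,dP+\tfrac{4}{3}\,dQ\bigr)\le 2$. You have merely written out the details the paper leaves to the reference, so there is nothing to add.
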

\begin{proof}
The proof is classic, and we follows the proof of Lemma 2.5 of the book~\cite{tsybakov2008nonparametric}. Notice that the difference between this version of Pinsker's inequality and the classic one is that $\int P d\mu$ and $\int Q d\mu$ do not need to be $1$, and the proof follows until the last part (first paragraph of page 89 on~\cite{tsybakov2008nonparametric}) where we have
\begin{align*}
\frac{1}{2}\int |dP-dQ| &\le \frac{1}{2}\sqrt{\int (\frac{4}{3}dQ+\frac{2}{3}dP)}\sqrt{\int (dP\log\frac{dP}{dQ}+dQ-dP) }\\
&\le 2^{-1/2}\sqrt{-\int dP\log\frac{dQ}{dP}+\int dQ - \int dP}.
\end{align*}
\end{proof}
\begin{fact}[Upper bound of the expectation of the maximum of Gaussians, see e.g. \cite{kamath2015bounds}]\label{fact:gub}
Given that $\x\sim N(0,\Sigma)$ where $\Sigma\in R^{m\times m}$ and $\Sigma_{i,i}\le \sigma^2$ for all $i=1,\ldots,m$, $\E[\max{|x_i|}]\le \sqrt{2}\sigma\sqrt{\log m}$
\end{fact}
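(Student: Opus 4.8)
The plan is to use the classical exponential-moment (Laplace-transform) method, which requires only the marginal variance bounds and uses no independence among the coordinates. First I would observe that since $\x\sim N(0,\Sigma)$, each coordinate $x_i$ is marginally distributed as $N(0,\Sigma_{i,i})$ with $\Sigma_{i,i}\le \sigma^2$, so its moment generating function satisfies $\E[\exp(\lambda x_i)]=\exp(\lambda^2\Sigma_{i,i}/2)\le \exp(\lambda^2\sigma^2/2)$ for every $\lambda>0$. This is the only distributional input needed.

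The core step is to apply Jensen's inequality to the convex map $t\mapsto \exp(\lambda t)$ with $t=\max_i x_i$: for any $\lambda>0$,
$$\exp\!\big(\lambda\,\E[\max_i x_i]\big)\le \E\big[\exp(\lambda\max_i x_i)\big]=\E\big[\max_i \exp(\lambda x_i)\big]\le \sum_{i=1}^m \E[\exp(\lambda x_i)]\le m\,\exp(\lambda^2\sigma^2/2).$$
Taking logarithms and rearranging gives $\E[\max_i x_i]\le \frac{\log m}{\lambda}+\frac{\lambda\sigma^2}{2}$, and then optimizing the right-hand side over $\lambda$ by the choice $\lambda=\sqrt{2\log m}/\sigma$ yields $\E[\max_i x_i]\le \sigma\sqrt{2\log m}=\sqrt{2}\,\sigma\sqrt{\log m}$, which is exactly the claimed bound.

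To pass from $\max_i x_i$ to $\max_i |x_i|$, I would apply the identical argument to the $2m$-dimensional mean-zero Gaussian vector $(x_1,\dots,x_m,-x_1,\dots,-x_m)$, whose coordinate variances are still bounded by $\sigma^2$; since $\max_i|x_i|$ is the maximum of these $2m$ coordinates, the same derivation gives $\sigma\sqrt{2\log(2m)}$, and absorbing the benign $\log 2$ (using $\log(2m)\le 2\log m$ for $m\ge 2$) recovers the stated form up to the universal constant.

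I do not anticipate a genuine obstacle, as this is a textbook maximal inequality; the one point worth emphasizing is that the argument never uses independence of the $x_i$, since the union-type bound $\E[\max_i e^{\lambda x_i}]\le\sum_i\E[e^{\lambda x_i}]$ together with the marginal MGF computation is all that is required, so the correlations encoded by the off-diagonal entries of $\Sigma$ play no role. The only quantitative care needed is the optimization of $\lambda$, which must be carried out precisely to produce the sharp constant $\sqrt{2}$ rather than a loose one.
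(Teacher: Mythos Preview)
The paper does not supply a proof of this fact; it is stated with a citation to \cite{kamath2015bounds} and used as a black box. Your argument is exactly the standard Laplace-transform proof that appears in that reference, so in effect you have reproduced what the paper defers to. The derivation of $\E[\max_i x_i]\le\sigma\sqrt{2\log m}$ is correct as written.

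One small point worth flagging: for the absolute-value version the doubling trick gives $\sigma\sqrt{2\log(2m)}$, not $\sqrt{2}\sigma\sqrt{\log m}$ on the nose, and you correctly acknowledge this. The bound as stated in the paper is therefore only exact for $\max_i x_i$ and off by the harmless $\log 2$ for $\max_i|x_i|$; your proof makes this explicit, which is arguably an improvement over simply citing the result.
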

\begin{fact}[Lower bound of the expectation of the maximum of Gaussians, see e.g. \cite{kamath2015bounds}]\label{fact:glb}
Given that $\x\sim N(0,I_m)$, $\E[\max{|x_i|}]\ge 0.23 \sqrt{\log m}$
\end{fact}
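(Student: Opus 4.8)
The plan is to lower bound $\E[\max_i |x_i|]$ by an elementary first‑moment argument combined with Gaussian anti‑concentration, splitting the analysis into a ``small $m$'' regime and a ``large $m$'' regime. In both regimes the one inequality I would use is that for any threshold $t\ge 0$,
\[
\E[\max_i |x_i|]\ \ge\ t\cdot\Pr\!\Big[\max_i |x_i|\ge t\Big]\ =\ t\Big(1-\big(2\Phi(t)-1\big)^m\Big),
\]
which holds because $\max_i|x_i|\ge 0$ always, the $x_i$ are independent, and $\Pr[|x_i|<t]=2\Phi(t)-1$ for the standard normal CDF $\Phi$.

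For small $m$ I would simply discard all but one coordinate: $\E[\max_i |x_i|]\ge \E[|x_1|]=\sqrt{2/\pi}>0.79$, which already exceeds $0.23\sqrt{\log m}$ as long as $\log m\le (\sqrt{2/\pi}/0.23)^2\approx 12$, i.e.\ for every $m\le m_0:=\lceil e^{12}\rceil$. For large $m>m_0$ I would take $t=c\sqrt{\log m}$ with a fixed constant $c$ (say $c=0.3$), so that $t\ge c\sqrt{\log m_0}>1$, and invoke the standard Gaussian lower tail bound $1-\Phi(t)\ge \frac{1}{\sqrt{2\pi}}\frac{t}{1+t^2}e^{-t^2/2}\ge \frac{1}{2\sqrt{2\pi}}\,t^{-1}e^{-t^2/2}$, valid for $t\ge 1$. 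With this choice $e^{-t^2/2}=m^{-c^2/2}$, so $m(1-\Phi(t))\ge c'\,m^{1-c^2/2}/\sqrt{\log m}$ for a constant $c'>0$; this quantity is increasing in $m$ and already enormous at $m=m_0$. Using $(2\Phi(t)-1)^m\le e^{-2m(1-\Phi(t))}$ then gives $\Pr[\max_i|x_i|\ge t]\ge 1-o(1)$, and the displayed inequality yields $\E[\max_i|x_i|]\ge (c-o(1))\sqrt{\log m}\ge 0.23\sqrt{\log m}$ for all $m>m_0$, which together with the small‑$m$ case finishes the argument.

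The only real content is bookkeeping: one must fix $c$ and the crossover point $m_0$ and check that the ``large $m$'' bound $(c-o(1))$ stays above $0.23$ uniformly, which is a one‑dimensional numerical verification made trivial by the large slack in the constant $0.23$; I do not expect any conceptual obstacle. As a shortcut I could instead note $\max_i|x_i|\ge\max_i x_i$ and cite the classical asymptotic $\E[\max_i x_i]=\sqrt{2\log m}\,(1-o(1))$, handling the finitely many small $m$ by the same $\E[|x_1|]$ bound, but I would prefer to present the self‑contained tail argument above.
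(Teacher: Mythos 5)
Your proposal is correct, and it is worth noting that the paper itself gives no proof of this fact at all: it is stated with a citation to the reference on bounds for maxima of Gaussians, so there is nothing in-paper to compare against. Your self-contained route is sound: the inequality $\E[\max_i|x_i|]\ge t\,\Pr[\max_i|x_i|\ge t]=t\bigl(1-(2\Phi(t)-1)^m\bigr)$ is valid because the maximum of absolute values is nonnegative and the coordinates are independent; the small-$m$ regime is handled by $\E[|x_1|]=\sqrt{2/\pi}\approx 0.798\ge 0.23\sqrt{\log m}$ exactly when $\log m\le(\sqrt{2/\pi}/0.23)^2\approx 12$; and in the large-$m$ regime, with $t=0.3\sqrt{\log m}>1$, the tail bound $1-\Phi(t)\ge\frac{1}{2\sqrt{2\pi}}t^{-1}e^{-t^2/2}$ gives $m(1-\Phi(t))\gtrsim m^{0.955}/\sqrt{\log m}$, which at $m\approx e^{12}$ is already in the tens of thousands, so $(2\Phi(t)-1)^m\le e^{-2m(1-\Phi(t))}$ is astronomically small and $0.3(1-o(1))\ge 0.23$ holds with enormous slack, uniformly over $m>m_0$. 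The bookkeeping you defer is genuinely trivial for the reason you state. The cited source typically obtains such constants by a similar truncation-plus-Gaussian-tail computation (or by reducing to $\E[\max_i x_i]$ via symmetry), so your argument is essentially the standard one, merely written out rather than outsourced; its advantage is that it makes the paper's constant $0.23$ verifiable without consulting the reference, at the cost of a short numerical check at the crossover point.
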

\begin{fact}[Median of means trick]\label{fact:boosting}
Given a randomized algorithm that, with probability $2/3$, output an estimate $\hat{x}$ such that $|\hat{x}-x|\le \eps$. If we independently execute the algorithm $t$ times, the median of the estimates satisfies $|\text{median}(\hat{x}_1,\ldots, \hat{x}_t)-x|\le \eps$ with probability at least $1-\exp(-t/48)$.
\end{fact}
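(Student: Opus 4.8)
The plan is to reduce the statement to a standard Chernoff bound on the number of ``good'' runs. For $i=1,\ldots,t$ let $Z_i=\mathbf{1}\{|\hat{x}_i-x|\le\eps\}$; by hypothesis the $Z_i$ are independent Bernoulli random variables with success probability at least $2/3$. The two ingredients are (i) a deterministic fact: if a strict majority of the estimates lie in $I:=[x-\eps,x+\eps]$, then their median lies in $I$ as well; and (ii) a quantitative tail bound on $S:=\sum_i Z_i$ showing $\Pr[S\le t/2]$ is exponentially small in $t$.

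For (i): suppose $S>t/2$, i.e. strictly more than half of $\hat{x}_1,\ldots,\hat{x}_t$ lie in $I$. If the median $M=\mathrm{median}(\hat{x}_1,\ldots,\hat{x}_t)$ satisfied $M>x+\eps$, then every good estimate would be $<M$, so strictly more than half of the estimates would be $<M$, contradicting the definition of the median (at most half the values lie strictly below it); by symmetry $M<x-\eps$ is also impossible, so $M\in I$, i.e. $|M-x|\le\eps$. A little care with ties and with $t$ even versus odd is needed here — one checks that the order statistic $\hat{x}_{(\lceil t/2\rceil)}$ is squeezed between a good estimate that is $\ge x-\eps$ and a good estimate that is $\le x+\eps$ — but this is routine.

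For (ii): since each $Z_i$ has success probability at least $2/3$, by a coupling $S$ stochastically dominates a $\mathrm{Bin}(t,2/3)$ random variable, so $\Pr[S\le t/2]\le\Pr[\mathrm{Bin}(t,2/3)\le t/2]$. Writing $\mu=2t/3$ and noting $t/2=(1-\tfrac14)\mu$, the multiplicative Chernoff lower-tail bound $\Pr[S\le(1-\delta)\mu]\le\exp(-\mu\delta^2/2)$ with $\delta=\tfrac14$ gives $\Pr[S\le t/2]\le\exp\!\big(-\tfrac{2t}{3}\cdot\tfrac1{16}\cdot\tfrac12\big)=\exp(-t/48)$. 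Combining (i) and (ii), the median is within $\eps$ of $x$ except on the event $\{S\le t/2\}$, whose probability is at most $\exp(-t/48)$, which is exactly the claim.

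The argument is entirely routine; the only step where one must be deliberate is matching the constant $1/48$, which pins down the use of the \emph{multiplicative} Chernoff bound (rather than Hoeffding's additive inequality, which would yield the weaker $\exp(-t/18)$) together with the specific parameters $\mu=2t/3$ and $\delta=1/4$. The tie-handling in the median step is the secondary place to be careful, but it presents no genuine obstacle.
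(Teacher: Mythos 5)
Your proposal is correct and follows essentially the same route as the paper: indicator variables for the event $|\hat{x}_i-x|\le\eps$, the observation that a strict majority of good estimates forces the median into $[x-\eps,x+\eps]$, and the multiplicative Chernoff lower-tail bound with $\mu=2t/3$ and $\delta=1/4$, giving $\exp(-t/48)$. The extra care you take with ties and with stochastic domination (success probability at least $2/3$ rather than exactly $2/3$) is fine but not needed beyond what the paper already does.
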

\begin{proof}
Notice that if there is more than $t/2$ estimates that fall into the interval $[x-\eps,x+\eps]$, the median of the estimates must have error less than $\eps$. Hence, we only need to upper bound the probability that the there are less than $t/2$ estimates that fall into the interval $[x-\eps,x+\eps]$. Let $z_i$ be the indicator random variable of whether $\hat{x}_i$ fall into the interval $[x-\eps,x+\eps]$. By Chernoff bound (Fact~\ref{fact:chernoff}), we have
\begin{align*}
\Pr(\sum_{i=1}^t z_i\le t/2) = \Pr(\sum_{i=1}^t z_i\le (1-\frac{1}{4})\frac{2}{3}t) )
\le \exp(-\frac{t}{48})
\end{align*}
\end{proof}
\begin{fact}[Chernoff Bound]\label{fact:chernoff}
Suppose $X_1,\ldots, X_n$ are independent random variables taking values in $\{0, 1\}$ with $\mu = \E[\sum_{i=1}^nX_i]$. Then for any $\delta > 0$,
$$
\Pr(\sum_{i=1}^nX_i\le (1-\delta)\mu) \le \exp(-\frac{\delta^2\mu}{2})
$$
\end{fact}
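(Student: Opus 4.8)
The plan is to use the standard exponential-moment (Chernoff) method applied to $-\sum_i X_i$, so that the lower-tail event becomes an upper-tail event for an exponential. First I would fix a parameter $t>0$ and apply Markov's inequality to the nonnegative variable $e^{-t\sum_i X_i}$:
$$
\Pr\Big(\sum_{i=1}^n X_i \le (1-\delta)\mu\Big) = \Pr\Big(e^{-t\sum_i X_i}\ge e^{-t(1-\delta)\mu}\Big) \le e^{t(1-\delta)\mu}\,\E\big[e^{-t\sum_i X_i}\big].
$$
By independence $\E[e^{-t\sum_i X_i}]=\prod_{i=1}^n \E[e^{-tX_i}]$, and writing $p_i=\E[X_i]$ each factor is $1+p_i(e^{-t}-1)\le \exp\!\big(p_i(e^{-t}-1)\big)$ by $1+x\le e^x$. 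Multiplying and using $\sum_i p_i=\mu$ gives
$$
\Pr\Big(\sum_{i=1}^n X_i \le (1-\delta)\mu\Big)\le \exp\!\Big(\mu\big[(e^{-t}-1)+t(1-\delta)\big]\Big).
$$

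Next I would optimize the exponent $g(t)=(e^{-t}-1)+t(1-\delta)$ over $t>0$. For $\delta\in(0,1)$ the minimizer is $t^\star=\ln\tfrac{1}{1-\delta}>0$, at which $e^{-t^\star}=1-\delta$ and $g(t^\star)=-\delta-(1-\delta)\ln(1-\delta)$, so the probability is at most $\exp\!\big(\mu[-\delta-(1-\delta)\ln(1-\delta)]\big)$. The remaining step is the elementary inequality $-\delta-(1-\delta)\ln(1-\delta)\le -\delta^2/2$ on $[0,1)$: setting $h(\delta)=\delta^2/2-\delta-(1-\delta)\ln(1-\delta)$ one checks $h(0)=0$ and $h'(\delta)=\delta+\ln(1-\delta)\le 0$ (since $\ln(1-\delta)\le-\delta$), hence $h\le0$, which yields the claimed bound. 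The degenerate regime $\delta\ge1$ is handled separately: then $\{\sum_i X_i\le(1-\delta)\mu\}\subseteq\{\sum_i X_i=0\}$, whose probability is $\prod_i(1-p_i)\le e^{-\mu}$, and $e^{-\mu}\le e^{-\delta^2\mu/2}$ when $\delta=1$, while the event is empty when $\delta>1$ and $\mu>0$.

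The main — and essentially the only — obstacle here is purely calculational: justifying the minimization of $g(t)$ and verifying $-\delta-(1-\delta)\ln(1-\delta)\le-\delta^2/2$ via the monotonicity of $h$. The structural ingredients (Markov's inequality, $1+x\le e^x$, and factorization across independent coordinates) are completely routine, so I would simply carry out the four steps above in order: exponential Markov, product over coordinates, optimization in $t$, and the closing scalar inequality.
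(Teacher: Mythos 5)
Your proof is correct and complete: the exponential-Markov step, the factorization over independent coordinates via $1+x\le e^x$, the optimization at $t^\star=\ln\frac{1}{1-\delta}$, and the closing scalar inequality $-\delta-(1-\delta)\ln(1-\delta)\le-\delta^2/2$ (checked by monotonicity of $h$) are all sound, and you correctly dispose of the degenerate regime $\delta\ge 1$. The paper itself states this Chernoff bound as a known fact without proof, and what you have written is exactly the standard derivation one would cite for it, so there is nothing to reconcile.
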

\end{document}